\def\PP{\mathbb{P}}
\def\NN{\mathbb{N}}
\def\bsv{{\boldsymbol{v}}}
\def\prog{\mathrm{prog}}
\def\cA{{\mathcal{A}}}
\def\cC{{\mathcal{C}}}
\def\cF{{\mathcal{F}}}
\def\cN{{\mathcal{N}}}
\def\cO{{\mathcal{O}}}
\def\cU{{\mathcal{U}}}
\def\cV{{\mathcal{V}}}
\def\cX{{\mathcal{X}}}
\def\cY{{\mathcal{Y}}}
\newcommand{\bx}{\mathbf{x}}
\newcommand{\by}{\mathbf{y}}
\newcommand{\be}{\mathbf{e}}
\newcommand{\bp}{\mathbf{p}}
\newcommand{\bsg}{\boldsymbol{g}}
\newcommand{\bsdelta}{\boldsymbol{\delta}}
\newcommand{\EE}{\mathbb{E}}
\newcommand{\RR}{\mathbb{R}}
\newcommand{\er}{\mathrm{er}}
\newcommand{\eg}{e.g.\xspace}
\newcommand{\ie}{i.e.\xspace}
\newcommand{\psgd}{P-SGD\xspace}
\newcommand{\memsgd}{MEM-SGD\xspace}
\newcommand{\ds}{Double-Squeeze\xspace}
\newcommand{\ef}{EF21-SGD\xspace}
\newcommand{\ours}{NEOLITHIC\xspace}
\newcommand{\cifar}{CIFAR-10\xspace}
\newcommand{\tablefontsize}{\scriptsize}
\newtheorem{theorem}{Theorem}
\newtheorem{assumption}{Assumption}
\newtheorem{remark}{Remark}
\newtheorem{lemma}{Lemma}
\newtheorem{corollary}{Corollary}
\newtheorem{definition}{Definition}
\title{Lower Bounds and Nearly Optimal Algorithms in Distributed Learning with Communication Compression}
\author{Xinmeng Huang\footnote{Equal Contribution.} \footnote{Graduate Group in Applied Mathematics and Computational Science, Univ. of Pennsylvania. \texttt{xinmengh@sas.upenn.edu}.}\,
    \and Yiming Chen\footnotemark[1]\,\,\footnote{DAMO Academy,
			Alibaba Group. \texttt{\{charles.cym, wotao.yin, kun.yuan\}@alibaba-inc.com}.}
    \and Wotao Yin\footnotemark[3] 
    \and Kun Yuan\footnotemark[3]\,\,\footnote{Corresponding Author.}
}
\begin{document}
\maketitle
\begin{abstract}
Recent advances in distributed optimization and learning have shown that communication compression is one of the most effective means of reducing communication. While there have been many results on  convergence rates under communication compression, a theoretical lower bound is still missing.

Analyses of algorithms with communication compression have attributed convergence to two abstract properties: the unbiased property or the contractive property. They can be applied with either unidirectional compression (only messages from workers to server are compressed) or bidirectional compression. In this paper, we consider distributed stochastic algorithms for minimizing smooth and non-convex objective functions under communication compression. We establish a convergence lower bound for algorithms whether using unbiased or contractive compressors in unidirection or bidirection. To close the gap between  the lower bound and the existing upper bounds, we further propose an algorithm, NEOLITHIC, which almost reaches our lower bound (up to logarithm factors) under mild conditions. Our results also show that using contractive bidirectional compression can yield iterative methods that converge as fast as those using unbiased unidirectional compression. The experimental results  validate our findings.
\end{abstract}

\section{Introduction}
Large-scale optimization is a critical step in many machine learning applications. Millions or even billions of data samples contribute to the excellent performance in tasks such as robotics, computer vision, natural language processing, healthcare, and so on. However, such a scale of data samples and model parameters leads to enormous communication that hampers the scalability of distributed machine-learning training systems. We urgently need communication-reduction strategies. State-of-the-art strategies include communication compression \cite{Alistarh2017QSGDCS, Bernstein2018signSGDCO, Seide20141bitSG}, decentralized communication \cite{Lian2017CanDA, assran2019stochastic, chen2021accelerating}, lazy communication \cite{stich2019local,karimireddy2020scaffold,chen2018lag}, and beyond. This article will focus on the former.

The most common method of distributed training is Parallel SGD (P-SGD) \cite{Bottou2010LargeScaleML}. In P-SGD, the stochastic gradients that workers transmit to a server cause significant communication overhead in large-scale machine learning. To reduce this overhead, many recent works propose to compress the messages sent unidirectionally from workers to server \cite{Alistarh2018TheCO, Horvath2019NaturalCF, Alistarh2017QSGDCS} or compress the messages between them bidirectionally \cite{Tang2019DoubleSqueezePS,Xu2021StepAheadEF}. The method of compression is either sparsification or quantization \cite{Alistarh2018TheCO, Horvath2019NaturalCF,Alistarh2017QSGDCS} or their combination \cite{Horvath2019NaturalCF, Beznosikov2020OnBC}. The literature \cite{Stich2018SparsifiedSW, Sahu2021RethinkingGS, Tang2019DoubleSqueezePS, Xu2021StepAheadEF} reveals that bidirectional compression can save more communication, but leads to slower convergence rates.

Although there are many specific compression methods, their convergence analyses are mainly built on unbiased compressibility or contractive compressibility. The literature \cite{Beznosikov2020OnBC, Safaryan2020UncertaintyPF, Xu2020CompressedCF} summarizes these two properties and how they appear in the analyses. An unbiased compressor compresses a vector $x$ into a quantized version $C(x)$ satisfying $\mathbb{E}[C(x)] = x$, \ie, no bias is introduced. The contractive compressor may introduce bias, but its compression introduces much less variance. We give their definitions below. Although the contractive compressor can empirically work better, the analysis based on the unbiased compressor yields faster convergence due to unbiasedness \cite{Beznosikov2020OnBC, Mishchenko2019DistributedLW, Horvath2019StochasticDL,Gorbunov2021MARINAFN}.

Despite the quick progress made in compression techniques and their convergence, we do not yet understand the limits of algorithms with communication compression. Since unbiased and contractive compressibilities
are the two representative characteristics, 
we use them to generally theorize two types of compressors. For each type, we intend to answer:

\emph{What is the optimal convergence rate that a distributed algorithm can achieve when using any compressor of this type?} 

Here, we assume that only unbiased compressibility or contractive compressibility can be utilized, not considering any additional special compressor design; after all, any special design in the literature has been heuristic, whose effectiveness can be explained at best and not proved or quantified. So, we further clarify our question:
\emph{Given a class of optimization problems (specified below) and a type of compressors, if we choose the worst combination of them to defeat an algorithm, what will the convergence rate that the best-defending algorithm can reach?} To our knowledge, they are fundamental questions not addressed yet.

\begin{table}[h!]
	\caption{\small Comparison between various distributed stochastic algorithms with communication compression for non-convex loss functions. For clarity, we keep the stochastic gradient variance $\sigma^2$, data heterogeneity bound $b^2$
	, gradient square norm bound $G^2$ (used in \cite{Stich2018SparsifiedSW,Tang2019DoubleSqueezePS,Xie2020CSERCS}, much larger than $b^2$ and $L$)
	, but omit
	smoothness constant $L$, and initialization $f(x^{(0)})-f^\star$ in the below results. Moreover, notation $\tilde{O}(\cdot)$ hides all logarithmic terms. Notation $\delta$ and $\omega$ are compressor-related parameters (see detailed discussions in Section~\ref{sec:setup}). 
	}
	\footnotesize
	\centering
	\begin{tabular}{c  l l  c  l}
		\toprule
		&\textbf{Algorithm} & \textbf{Convergence Rate} &  \textbf{Compression}$^\mathrm{a}$ & \textbf{Trans. Compl.}$^\mathrm{b}$\\ \midrule
		\multirow{2}{*}{Lower Bound}&{\color{blue}\makecell[c]{Theorem \ref{thm:lower1}+Corollary \ref{cor:lower2}}} & \textcolor{blue}{$\Omega\left(\frac{\sigma}{\sqrt{nT}}+\frac{1}{\delta T}\right)$} & \textcolor{blue}{\makecell[c]{Uni/Bidirectional\\Contractive}} & \textcolor{blue}{$\Omega(n/\delta^2)$}  \\ 
		&{\color{blue}\makecell[c]{Theorem \ref{thm:lower3}+Corollary \ref{cor:lower4}}} & \textcolor{blue}{$\Omega\left(\frac{\sigma}{\sqrt{nT}}+\frac{1+\omega}{T}\right)$} & \textcolor{blue}{\makecell[c]{Uni/Bidirectional\\Unbiased}} & \textcolor{blue}{$\Omega\left(n(1+\omega)^2\right)$} \\ \midrule
		\multirow{7}{*}{Upper Bound}&{\color{blue}Theorem \ref{thm:convergence-rate}} & \textcolor{blue}{$\tilde{O}\left(\frac{\sigma}{\sqrt{nT}}+\frac{1}{\delta T}\right)$} & \textcolor{blue}{\makecell[c]{Uni/Bidirectional\\Contractive}} & \textcolor{blue}{$\tilde{O}(n/\delta^2)$}   \\ 
		&{\color{blue}Corollary \ref{cor:convergence-rate-unbiased}} & \textcolor{blue}{$\tilde{O}\left(\frac{\sigma}{\sqrt{nT}}+\frac{1+\omega}{T}\right)$} & \textcolor{blue}{\makecell[c]{Uni/Bidirectional\\Unbiased}} & \textcolor{blue}{$\tilde{O}(n(1+\omega)^2)$}   \\
		&Q-SGD \cite{Jiang2018ALS} & $O\left(\frac{(1+\omega)^{1/2}\sigma+\omega^{1/2} b}{\sqrt{nT}}\right)^{\dagger,\diamond}$ & \makecell[c]{Unidirectional\\i.i.d, Unbiased}  & $-$ \\ 
		&MEM-SGD\,\cite{Stich2018SparsifiedSW} & $O\left(\frac{\sigma}{\sqrt{nT}}+\frac{G^{2/3}}{\delta ^{2/3}T^{2/3}}+\frac{1}{T}\right)^\diamond$ & \makecell[c]{Unidirectional\\Contractive} & $O(n^3/\delta^4)$   \\ 
		&Double-Squeeze\,\cite{Tang2019DoubleSqueezePS} & $O\left(\frac{\sigma}{\sqrt{nT}}+\frac{G^{2/3}}{\delta ^{4/3}T^{2/3}}+\frac{1}{T}\right)^\diamond$ & \makecell[c]{Bidirectional\\Contractive} & $O(n^3/\delta^8)$  \\
		&CSER\,\cite{Xie2020CSERCS} & $O\left(\frac{\sigma}{\sqrt{nT}}+\frac{G^{2/3}}{\delta ^{2/3}T^{2/3}}+\frac{1}{T}\right)^\diamond$ & \makecell[c]{Unidirectional\\Contractive} & $O(n^3/\delta^4)$  \\ 
		&EF21-SGD\,\cite{Fatkhullin2021EF21WB} & $O\left(\frac{\sigma}{\sqrt{\delta^3 T}}+\frac{1}{\delta T}\right)^\ast$ & \makecell[c]{Unidirectional\\Contractive} & $-$    \\
		\bottomrule
		\multicolumn{5}{l}{$^\mathrm{a}$\,\footnotesize{This column indicates the type of the compressor and in what direction the compression is applied.}}\\
		\multicolumn{5}{l}{$^\mathrm{b}$\,\footnotesize{This column indicates the transient complexity, \emph{\ie}, the number of gradient queries (or communication rounds)}}\\
		\multicolumn{5}{l}{\;\;\,\footnotesize{the algorithm has to experience before reaching the linear-speedup stage, \emph{\ie}, $O(\frac{\sigma}{\sqrt{nT}})$.}}  \\
		\multicolumn{5}{l}{$^\dagger$\,\footnotesize{This convergence rate is valid only for $T=\Omega(n(1+\frac{\omega}{n})^2)$. Since the rate $\frac{(1+\omega)^{1/2}\sigma+\omega^{1/2} b}{\sqrt{nT}}$ is always worse than $\frac{\sigma}{\sqrt{nT}}$,}}\\
		\multicolumn{5}{l}{\;\;\footnotesize{the transient complexity is not available.}}\\
		\multicolumn{5}{l}{$^\ast$\,\footnotesize{Since the convergence rate does not show linear-speedup, the transient complexity is not available}}\\
		\multicolumn{5}{l}{$^\diamond$\,{The rates are either strengthened by relaxing the original restrictive assumptions or extended to the same setting as}}\\
		\multicolumn{5}{l}{\;\;{NEOLITHIC for a fair comparison, see more details in Appendix \ref{app:more-details}}.}
	\end{tabular}
	\label{tab:introduction-comparison}
\end{table}

\subsection{Main Results}
This paper clarifies these  open questions by providing lower bounds under the non-convex smooth stochastic optimization setting, and developing effective algorithms that match the lower bounds up to logarithm factors. In particular, our contributions are: 
\begin{itemize}
	\item We establish convergence lower bounds for distributed algorithms with communication compression in the stochastic non-convex regime. Our lower bounds apply to any algorithm conducting unidirectional or bidirectional compression with  unbiased or contractive compressors. 
We find a clear gap between the established lower bounds and the existing convergence rates.
	\item We propose a novel \underline{\textbf{ne}}arly \underline{\textbf{o}}ptimal a\underline{\textbf{l}}gor\underline{\textbf{ith}}m w\underline{\textbf{i}}th \underline{\textbf{c}}ompression (NEOLITHIC) to fill in this gap. NEOLITHIC can adopt either  unidirectional or bidirectional compression, and is compatible with both unbiased and contractive compressors. 
	Using any combination, NEOLITHIC  provably matches the above lower bound, under an additional mild assumption and up to logarithmic factors. 
	
	\item The convergence results of NEOLITHIC imply that algorithms using biased contractive compressors bidirectionally can theoretically converge as fast as those with  unbiased compressors used unidirectionally (see discussion in Remark \ref{rmk:no-benefit}).
    Before out work, it is only established in \cite{Philippenko2021PreservedCM} that, for convex problems and unbiased compressors, algorithms with bidirectional compression can match with the counterparts with unidirectional compression.
	
	\item We provide extensive experimental results to validate our theories. 
\end{itemize}

All established results in this paper as well as convergence rates of existing state-of-the-art distributed algorithms with communication compression are listed in Table \ref{tab:introduction-comparison}. The transient complexity, which measures how sensitive the algorithm is to the compression strategy (see Section~\ref{sec:lower-bound}), is also listed in the table. The smaller the transient complexity is, the faster the algorithm converges.  

\subsection{Related Works.}

\paragraph{Distributed learning.}
Distributed learning has been increasingly useful in training large-scale machine learning models \cite{Dean2012LargeSD}. It typically follows a centralized or decentralized setup.
Centralized approaches  \cite{Agarwal2012DistributedDS,Li2014ScalingDM}, with P-SGD as the representative, require a global averaging step per iteration in which all workers need to synchronize with a central server. 
Decentralized approaches, however, are based on neighborhood averaging in which each worker only needs to synchronize with its immediate neighbors. 
The local interaction between directly-connected workers can save remarkable communication overhead compared to the remote communication between a worker and the central server. 
Well-known decentralized algorithms include decentralized SGD \cite{nedic2009distributed,chen2012diffusion,Yuan2016OnTC,Yuan2022RevistOC}, D$^2$\cite{tang2018d,Yuan2021RemovingDH}, and stochastic gradient tracking   \cite{xin2020improved,koloskova2021improved,alghunaim2021unified}, and their momentum variants \cite{lin2021quasi,Yuan2021DecentLaMDM}. The lazy communication \cite{yu2019linear,stich2019local,chen2018lag} is also utilized to reduce communication overhead in which workers conduct multiple local updates before sending messages.
 Lazy communication is also widely used in federated learning \cite{mcmahan2017communication,stich2019local,karimireddy2020scaffold}. 
 
 \paragraph{Communication compression.}
To alleviate the communication overhead when transmitting the full model or stochastic gradient in distributed learning, communication compression is proposed in literature with two mainstream approaches: quantization and sparsification.
Quantization \cite{Jiang2018ALS,Tang2018CommunicationCF,Zhang2017ZipMLTL} is essentially an unbiased operator with random noise. For example, \cite{Seide20141bitSG} develop Sign-SGD by using only $1$ bit for each entry whose convergence is studied in  \cite{Bernstein2018signSGDCO,Bernstein2018signSGDWM,Wen2017TernGradTG}. Q-SGD \cite{Alistarh2017QSGDCS}
, as a generalized variant of Sign-SGD,
compresses each entry with more flexible bits and enables a trade-off between convergence rates and communication costs.
Sparsification, on the other hand, amounts to a biased but contractive operator. {\cite{Wangni2018GradientSF} suggest randomly dropping  entries to achieve a sparse  vector to communicate,} while  \cite{Stich2018SparsifiedSW} suggest to transmit a certain number of the  largest elements of the full model or gradient. 
The theories behind contractive compressors are limited to those in~\cite{Vogels2019PowerSGDPL,Lin2018DeepGC,Sun2019SparseGC} due to analysis challenges, and they are established with assumptions such as bounded gradients \cite{Zhao2019GlobalMC,Karimireddy2019ErrorFF} or quadratic loss functions \cite{Wu2018ErrorCQ}. More discussions on unbiased and biased compressors can be found in \cite{Beznosikov2020OnBC,Safaryan2020UncertaintyPF,Richtarik20223PCTP}. Communication compression can also be combined with other communication-saving techniques such as decentralization \cite{Liu2021LinearCD,Zhao2022BEERFO}.

\paragraph{Error compensation.}
The error compensation (feedback) mechanism is introduced by \cite{Seide20141bitSG} to mitigate the error caused by 1-bit quantization. \cite{Wu2018ErrorCQ} study SGD with error-compensated quantization for quadratic
functions with convergence guarantees. \cite{Stich2018SparsifiedSW} show that error compensation can reduce quantization-incurred errors 
for strongly convex loss functions in the single-node setting. 
However, their analysis is restricted to compressors with expectation compression error no larger than the magnitude of the input vector, which is not applicable to general contractive compressors, such as \cite{Bernstein2018signSGDCO}. 
Error-compensated SGD is studied in \cite{Alistarh2018TheCO} for non-convex loss functions with no establishment of an improved convergence rate. 
A recent work \cite{Richtrik2021EF21AN} gives the first algorithm EF21, in the deterministic regime, using unidirectional compression with contractive compressors that  rely only on standard assumptions. The later work \cite{Fatkhullin2021EF21WB} extends EF21 to the stochastic regime but cannot show linear speedup in terms of the number of workers. 

\paragraph{Lower bounds in optimization.} 
Lower bounds are well studied in convex optimization \cite{Fang2018SPIDERNN,Agarwal2015ALB,Diakonikolas2019LowerBF} especially when there is no gradient stochasticity
\cite{Arjevani2015CommunicationCO,Nesterov2004Intro,Balkanski2018ParallelizationDN,AllenZhu2018HowTM,Foster2019TheCO}. In non-convex optimization, \cite{Carmon2020LowerBF,Carmon2021LowerBF}
propose a zero-chain model and show a tight bound for first-order methods. By splitting the zero-chain model into multiple components, \cite{Zhou2019LowerBF,Arjevani2019LowerBF} extend the approach to finite sum and stochastic problems. Recently, \cite{Lu2021OptimalCI,Yuan2022RevistOC} show the lower bounds in the decentralized stochastic setting by assigning disjoint components of the model to remote nodes in the graphs. 
For distributed learning with communication compression, a useful work \cite{Philippenko2020BidirectionalCI} establishes a lower bound for unidirectional/bidirectional compression for strongly convex problems with fixed learning rates and unbiased compressors. There are limited studies on lower bounds for non-convex distributed learning with unbiased/contractive compressions.

\section{Problem Setup}\label{sec:setup}
In this section, we introduce the notations and assumptions used throughout the paper. 
We consider standard  distributed learning with $n$ parallel workers. The data in worker $i$ follow  a local distribution $D_i$, which can be heterogeneous among all workers. These workers, together with a central parameter server, collaborate to train a model by solving
\begin{equation}\label{eqn:prob}
\min _{x \in \mathbb{R}^{d}}\quad f(x)=\frac{1}{n} \sum_{i=1}^{n} f_{i}(x) \quad\text{with}\quad f_{i}(x)=\mathbb{E}_{\xi_{i} \sim D_{i}} [F(x ; \xi_{i})],
\end{equation}
where  $F(x ;\xi)$ is the loss function evaluated at parameter $x$ with datapoint $\xi$. Since the objective $f$ can be non-convex, finding a global minimum of \eqref{eqn:prob} is generally intractable. Therefore, we turn to seeking a model $\hat{x}$ with a small gradient magnitude in expectation, \emph{\ie}, $\EE[\|\nabla f(\hat{x})\|^2]$. Next we introduce the setup under which we study the convergence rate.

\paragraph{Function class. } We let the {function class $\cF_{\Delta, L}$} denote the set of all functions satisfying Assumption \ref{asp:smooth} for any underlying dimension $d\in \NN_+$ and a given initialization point $x^{(0)}\in\RR^d$.
\begin{assumption}[\bf Smoothness] \label{asp:smooth}
	Each local objective $f_{i}$ has $L$-Lipschitz gradient, \emph{\ie},
	$$
	\left\|\nabla f_{i}(x)-\nabla f_{i}(y)\right\| \leq L\|x-y\|
	$$
	for all $i\in\{1,\dots,n\}, x, y \in \mathbb{R}^{d}$, and $f(x^{(0)})-\inf _{x \in \mathbb{R}^{d}} f(x)\leq \Delta $ with $f=\frac{1}{n}\sum_{i=1}^n f_i$.
\end{assumption}

\paragraph{Gradient oracle class. } 
We assume each worker $i$ has  access to its local gradient $\nabla f_i(x)$ via a stochastic gradient oracle $O_i(x;\zeta_i)$ subject to independent randomness $\zeta_i$, \emph{e.g.}, the mini-batch sampling $\zeta_i\triangleq\xi_i\sim D_i$. We further assume that the output $O_i(x,\zeta_i)$ is an  unbiased estimator of the full-batch gradient $\nabla f_i(x)$ with a bounded variance. Formally, we let the {stochastic gradient oracle class $O_{\sigma^2}$} denote the set of all oracles $O_i$ satisfying Assumption \ref{asp:gd-noise}.
\begin{assumption}[\bf Gradient stochasticity]\label{asp:gd-noise}
	For any $x\in\RR^d$, the independent oracles $\{O_i:1\leq i\leq n\}$ satisfy
	\begin{align*}
	\EE_{\zeta_i}[O_i(x;\zeta_i)]=\nabla f_i(x)\quad \text{ and }\quad \EE_{\zeta_i}[\|O_i(x;\zeta_i)-\nabla f_i(x)\|^2]\leq \sigma^2,\quad \forall\, x\in\RR^d\text{ and }i\in\{1,\dots,n\}.
	\end{align*}
\end{assumption}

\paragraph{Compressor class. }
The two  widely-studied classes of compressors  in literature are i) the $\omega$-\emph{unbiased} compressor, described by Assumption \ref{ass:unbiased}, \emph{e.g.}, the stochastic quantization operator  \cite{Alistarh2017QSGDCS}, and ii) the $\delta$-\emph{contractive} compressor, described by Assumption \ref{ass:contract}, \emph{e.g.},
the rand-$k$ \cite{Qian2020ErrorCD} operator and top-$k$ operator \cite{Stich2018SparsifiedSW,Qian2020ErrorCD}. 
\begin{assumption}[\bf Unbiased compressor]\label{ass:unbiased}
	The (possibly random) compression operator
	$C: \RR^d\rightarrow \RR^d$ satisfies
	\begin{equation*}
	\EE[C(x)]=x,\quad \EE[\|C(x)-x\|^2]\leq \omega\|x\|^2,\quad \forall\,x\in\RR^d
	\end{equation*}
	for constant $\omega \ge 0$, where the expectation is taken over the randomness of the compression operator $C$.
\end{assumption}
\begin{assumption}[\bf Contractive compressor]\label{ass:contract}
	The (possibly random) compression operator
	$C: \RR^d\rightarrow \RR^d$ satisfies
	\begin{equation*}
	\EE[\|C(x)-x\|^2]\leq (1-\delta)\|x\|^2,\quad \forall\,x\in\RR^d
	\end{equation*}
	for constant $\delta \in(0,1]$, where the expectation is taken over the randomness of the compression operator $C$.
\end{assumption}
We let $\cU_{\omega}$ and $\cC_\delta$ denote
the set of all $\omega$-unbiased compressors and $\delta$-contractive compressors satisfying Assumptions \ref{ass:unbiased} and \ref{ass:contract}, respectively. Note that the identity operator $I$ satisfies $I \in \cU_\omega$ for all $\omega \ge 0$ and $I \in \cC_\delta$ for all $\delta \in (0,1]$. 
Generally, an $\omega$-unbiased compressor is not necessarily contractive when $\omega$ is larger than $1$. However, since $C\in\cU_\omega$ implies $(1+\omega)^{-1}C\in \cC_{(1+\omega)^{-1}}$, the scaled unbiased compressor is contractive though the converse may not hold. Hence, the class of contractive compressors is strictly richer since it contains all unbiased compressors through scaling.

\paragraph{Algorithm class. } 
We consider a centralized and synchronous algorithm $A$ in which i) workers are allowed to communicate only directly with the central server but not between one another; ii) all iterations are synchronized, meaning that all workers start each of their iterations simultaneously. Each worker $i$ holds a local copy of the model, denoted by $x_i^{(t)}$, at iteration $t$. The output  $\hat{x}^{(t)}$ of $A$ after $t$ iterations can be any linear combination of all previous local models, 
namely, 
\[
\hat{x}^{(t)}\in \mathrm{span}\left(\{x_i^{(s)}:0\leq s\leq t,\,1\leq i\leq n\}\right).
\]
We further require algorithms $A$ to satisfy the so-called ``zero-respecting'' property, which appears in \cite{Carmon2020LowerBF,Carmon2021LowerBF,Lu2021OptimalCI} (see formal definition in Appendix \ref{app:zero-property}). This property implies that the number of non-zero entries of the local model of a worker can be increased only by conducting local stochastic gradient queries or synchronizing with the server. The zero-respecting property holds with all algorithms in Table \ref{tab:introduction-comparison} and most first-order methods based on SGD \cite{Nesterov1983AMF,Kingma2015AdamAM,Huang2021Improved,Zeiler2012ADADELTAAA}. In addition to these properties, algorithm $A$ has to admit communication compression. Specifically, we endow the server with a compressor $C_0$ and each worker $i \in \{1,\cdots, n\}$ with a compressor $C_i$. 
If $C_i = I$ for some $i\in\{0,\cdots, n\}$, then worker $i$ (or the server if $i=0$) conducts lossless communication. 
When $C_i \neq I$ for any $i\in\{0,\cdots, n\}$, algorithm $A$ conducts bidirectional compression. When $C_0 = I$, algorithm $A$ conducts unidirectional compression on messages from workers to server. The definition of the algorithm class with bidirectional/unidirectional compression is as follows. 

\begin{definition}[\bf Algorithm class]\label{def:bidrect-alg}
	Given compressors $\{C_0,C_1,\dots,C_n\}$, write {$\cA^B_{\{C_i\}_{i=0}^n}$} for the set of all centralized, synchronous, zero-respecting algorithms admitting bidirectional compression in which i) compressor $C_i$, $\forall\,i\in\{1,\dots,n\}$, is applied to messages from worker $i$ to the server, and ii) compressor $C_0$ is applied to messages from the server to all workers. 
	
	When $C_0=I$, we write {$\cA^B_{\{ C_0=I\}\cup\{C_i\}_{i=1}^n}$}, or $\cA^U_{\{C_i\}_{i=1}^n}$ for short, for the set of algorithms admitting unidirectional compression. The superscript $B$ or $U$ indicates ``bidirectional'' or ``unidirectional'', respectively.  
\end{definition}

\section{Lower Bounds}\label{sec:lower-bound}
With all interested classes introduced above, we are ready to define the lower bound measure. Given
local loss functions $\{f_i\}_{i=1}^n\subseteq \cF_{\Delta,L}$, stochastic gradient oracles $\{O_i\}_{i=1}^n\subseteq\cO_{\sigma^2}$ (with $O_i$ for worker $i$), compressors $\{\cC_i\}_{i=0}^n \subseteq \cC$ (with $\cC$ being either $\cU_\omega$ or $\cC_\delta$), and an algorithm $A \in \cA$ to solve problem \eqref{eqn:prob} (with $\cA$ being either $\cA^B_{\{C_i\}_{i=0}^n}$ or $\cA^U_{\{C_i\}_{i=1}^n}$), we let $\hat{x}_{{A},\{f_i\}_{i=1}^n,\{O_i\}_{i=1}^n,\{C_i\}_{i=0}^n,T}$ denote the output of algorithm $A$ using no more than $T$ gradient queries and rounds of communication by each worker node. 
We define the minimax measure as 
\begin{align}\label{eqn:measure}
\inf_{A\in \cA} \sup_{\{C_i\}_{i=0}^n\subseteq  \cC}\sup_{\{O_i\}_{i=1}^n\subseteq \cO_{\sigma^2}} \sup_{\{f_i\}_{i=1}^n\subseteq \cF_{\Delta,L}} \EE[\| \nabla f(\hat{x}_{A, \{f_i\}_{i=1}^n,\{O_i\}_{i=1}^n,\{C_i\}_{i=0}^n,T})\|^2].
\end{align}
In \eqref{eqn:measure}, we do not require the compressors $\{C_i\}_{i=0}^n$ to be distinct or independent. When $\cC$ is $\cU_{\omega}$ or $\cC_{\delta}$, we allow the compressor parameter $\omega$ or $\delta$ to be accessible by algorithm $A$. 


\subsection{Unidirectional Unbiased Compresssion}\label{sec-lb-uuc}
Our first result is for algorithms that admit unidirectional compression and $\omega$-unbiased compressors.


\begin{theorem}[\bf Unidirectional unbiased compression]\label{thm:lower1}
	For every $\Delta,\,L>0$, $n\geq 2$, $\omega \geq 0$, $\sigma>0$, $T\geq (1+\omega)^{2}$, there exists a set of local loss functions $\{f_i\}_{i=1}^n \subseteq \mathcal{F}_{\Delta, L}$, stochastic gradient oracles $\{O_i\}_{i=1}^n\subseteq  \mathcal{O}_{\sigma^{2}}$,  $\omega$-unbiased compressors $\{C_i\}_{i=0}^n\subseteq \cU_\omega$ with $C_0=I$, such that for any algorithm $A \in \mathcal{A}^U_{\{C_i\}_{i=1}^n}$ starting from a given constant $x^{(0)}$, it holds that  (proof is in Appendix \ref{app:proof-lb-1})
	\begin{equation}\label{eqn:lower-bound1}
	\EE[\| \nabla f(\hat{x}_{A, \{f_i\}_{i=1}^n,\{O_i\}_{i=1}^n,\{C_i\}_{i=0}^n,T})\|^2]  = \Omega\left(\left(\frac{\Delta L \sigma^2}{nT}\right)^\frac{1}{2}+\frac{(1+\omega)\Delta L}{T}\right).
	\end{equation}
\end{theorem}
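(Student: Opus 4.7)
The plan is to combine two hard instances and take their maximum: one pinning down the stochastic term $\sqrt{\Delta L \sigma^2/(nT)}$ and one extracting the compression-related term $(1+\omega)\Delta L / T$. The backbone of both is a rescaled non-convex zero-chain (Carmon--Duchi--Hinder--Sidford, later sharpened for the stochastic regime), placed inside $\cF_{\Delta, L}$ with the usual progress property: if an iterate has support only in its first $k < T$ coordinates, then its gradient norm is bounded below by a quantity proportional to $L\Delta/T$. To force communication to matter, I would split this chain across workers in the Arjevani--Shamir style: decompose $F = F_{\mathrm{odd}} + F_{\mathrm{even}}$ so that $\nabla F_{\mathrm{odd}}(x)$ can open only odd coordinates (and only once its even predecessor is nonzero), and symmetrically for $F_{\mathrm{even}}$. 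Then set $f_1 = n F_{\mathrm{odd}}$, $f_2 = n F_{\mathrm{even}}$, and $f_i = 0$ for $i \geq 3$, so that $f = F$ while each $f_i$ remains in $\cF_{\Delta,L}$ after an appropriate rescaling.

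The key step is the design of the hard unbiased compressor. For each worker $i$, let $C_i(x) = (1+\omega)x$ with probability $1/(1+\omega)$ and $C_i(x) = 0$ otherwise, with \emph{all workers sharing a single Bernoulli coin}. A direct check gives $\EE[C_i(x)] = x$ and $\EE[\|C_i(x) - x\|^2] = \omega\|x\|^2$, so each $C_i \in \cU_\omega$ is admissible, and the measure \eqref{eqn:measure} allows such correlated choices since it takes a sup over tuples of compressors in $\cU_\omega$. Under the shared coin, every round falls into one of two cases: either every up-message is a scalar multiple of its input (and thus has the same support), or every up-message is the zero vector, from which the server learns nothing. Combined with the zero-respecting property and the uncompressed downlink $C_0 = I$, a failed round cannot open any new coordinate at the server, nor can any worker open one via the server. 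Together with the alternating-parity structure of $F_{\mathrm{odd}}$ and $F_{\mathrm{even}}$, one then argues that the joint active coordinate set across all workers can grow by at most one per successful round. After $T$ rounds the expected frontier is at most $O(T/(1+\omega))$, so plugging into the zero-chain progress lemma (with the chain dimension chosen so that this frontier does not exhaust it, which is ensured by $T \gtrsim (1+\omega)^2$) yields $\EE[\|\nabla f(\hat{x}^{(T)})\|^2] = \Omega(\Delta L (1+\omega)/T)$.

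For the stochastic part, I would invoke the standard non-convex stochastic lower bound on a single worker and observe that using $n$ independent oracles with variance $\sigma^2$ effectively reduces the per-step variance by a factor of $n$, yielding $\Omega(\sqrt{\Delta L \sigma^2/(nT)})$; since $I \in \cU_\omega$, no compression is needed to realize this bound. The two instances are then merged by embedding the compression chain and the stochastic chain in disjoint coordinate blocks of a single function (partitioned across workers in the same manner), so that an algorithm must simultaneously pay both tolls; taking the maximum of the two $\Omega$-terms and using $\max(a,b) \geq (a+b)/2$ produces the claimed sum.

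The step I expect to be the main obstacle is making ``the joint active coordinate set grows by at most one per successful round'' fully rigorous. Because each worker accumulates its own history of local stochastic gradient queries and server messages, and because the downlink is uncompressed, one has to carry out a joint induction on the supports of every worker's local iterate and the server's state, showing that failed rounds never enlarge the reachable coordinate set and that a single successful round can add at most one new coordinate system-wide. It is here that the shared-coin construction, the alternating-parity decomposition of $F$, and the zero-respecting assumption interact most delicately.
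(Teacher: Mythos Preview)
Your overall plan mirrors the paper's: two hard instances, one for the stochastic term (identity compressors, a progress-hampering Bernoulli oracle) and one for the compression term (exact gradients, an odd/even split of the chain across workers, and a hard shared-randomness compressor). Your compressor choice is genuinely different and in fact simpler than the paper's. The paper uses $(d/s)\times$rand-$s$ with $s=\lceil d/(1+\omega)\rceil$ and a shared seed, so that the frontier coordinate survives a given round with probability $s/d\approx 1/(1+\omega)$; your all-or-nothing Bernoulli compressor delivers the same $1/(1+\omega)$ success probability more directly and feeds into the same Chernoff bound on the frontier. Both are legitimate in $\cU_\omega$ since the measure \eqref{eqn:measure} permits correlated compressors. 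Your anticipated ``main obstacle'' is also easier under your compressor: a failed round transmits the zero vector, so the server's reachable set is literally unchanged, whereas the paper must track whether the specific frontier index was among the $s$ sampled coordinates (its Lemma~\ref{lem:small-prob}).

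There is, however, a real gap in your worker assignment. Taking $f_1=nF_{\mathrm{odd}}$, $f_2=nF_{\mathrm{even}}$, $f_i=0$ for $i\geq 3$ forces $f_1,f_2$ to have Lipschitz gradients with constant $n$ times that of $F_{\mathrm{odd}},F_{\mathrm{even}}$. To keep $f_1,f_2\in\cF_{\Delta,L}$ you must build $F$ with smoothness $L/n$; carrying this through the standard scaling ($d\lambda^2\lesssim n\Delta/L$, gradient lower bound $\asymp (L/n)\lambda$) yields only $\Omega((1+\omega)\Delta L/(nT))$, which is too weak by a factor of $n$. The paper sidesteps this by assigning $h_1$ to the first $n/2$ workers and $h_2$ to the remaining $n/2$, so each local $f_i=L\lambda^2 h_k(x/\lambda)/L_0$ is individually $L$-smooth and the average is exactly $L\lambda^2 h(x/\lambda)/L_0$ with no $n$-penalty. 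With that repair your argument goes through. Finally, the disjoint-block embedding is unnecessary: since $\Omega(a+b)=\Omega(\max\{a,b\})$, it suffices to exhibit one instance per term and pick whichever dominates for the given parameters, which is what the paper does.
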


\paragraph{Consistency with previous works. } 
The bound in \eqref{eqn:lower-bound1} is consistent with best-known lower bounds in different settings. When $\omega = 0$, our result reduces to the tight bound for distributed training without compression \cite{Arjevani2019LowerBF}. When $n=1$ and $\omega=0$, our result reduces to the lower bound established in \cite{Arjevani2019LowerBF} under the single-node non-convex stochastic setting. When $n=1, \omega=0$ and $\sigma^2=0$, our result recovers the tight bound
for deterministic non-convex optimization \cite{Carmon2020LowerBF}. 

\paragraph{Linear-speedup. } 
	When $T$ is sufficiently large, the first term $1/\sqrt{nT}$ dominates the lower bound \eqref{eqn:lower-bound1}. If an algorithm achieves an $O(1/\sqrt{nT})$ rate, it will require $T = O(1/(n\epsilon^2))$ gradient queries to reach a desired accuracy $\epsilon$, which is inversely proportional to $n$. Therefore, an algorithm achieves linear-speedup at $T$-th iteration if, for this $T$, the term involving $nT$ is dominating the rate.

\paragraph{Transient complexity. } 
	Due to the compression-incurred overhead in convergence rate, a distributed stochastic algorithm with communication compression has to experience a transient stage to achieve its linear-speedup stage. Transient complexity are referred to the number of gradient queries (or communication rounds) when $T$ is relatively small so non-$nT$ terms still dominate the rate. The smaller the transient complexity is, the less gradient queries or communication rounds  the algorithm requires to achieve  linear-speedup stage. For example, if an algorithm can achieve the lower bound established in \eqref{eqn:lower-bound1}, it requires 
	$(\frac{\Delta L \sigma^2}{nT})^\frac{1}{2} \ge \frac{(1+\omega)\Delta L}{T}$, \emph{\ie}, $ T = \Omega(n(1+\omega)^2)$
	transient gradient queries (or communication rounds) to achieve linear-speedup, which is proportional to the compression-related terms $(1+\omega)^2$. Here we mainly care about the orders with respect to $n$ and $\omega$ (or $\delta$ below) in transient complexity to evaluate how sensitive the algorithm is to compression. Transient complexity is  also widely used in decentralized learning \cite{pu2019sharp,ying2021exponential} to gauge how network topology can influence the convergence rate.

\subsection{Bidirectional Unbiased Compression}
Theorem \ref{thm:lower1} applies to unidirectional compression where $C_0 = I$. We next consider bidirectional compression with $C_0\in \cU_\omega$. Since $\{C_i\}_{i=1}^n\subseteq \cU_\omega$ with $C_0=I$ is a special case of $\{C_i\}_{i=0}^n\subseteq \cU_\omega$, the lower bound for algorithms that admit bidirectional compression is greater than or equal to that with unidirectional compression by following the definition of the measure in \eqref{eqn:measure}. 


\begin{corollary}[\bf Bidirectional unbiased compression]\label{cor:lower2}
	Under the same setting as in Theorem \ref{thm:lower1},  there exists a set of local objectives $\{f_i\}_{i=1}^n \subseteq \mathcal{F}_{\Delta, L}$, stochastic gradient oracles $\{O_i\}_{i=1}^n\subseteq  \mathcal{O}_{\sigma^{2}}$,  $\omega$-unbiased compressors $\{C_i\}_{i=0}^n\subseteq \cU_\omega$ such that for any algorithm $A \in \mathcal{A}^B_{\{C_i\}_{i=0}^n}$  starting from $x^{(0)}$, the lower bound in  \eqref{eqn:lower-bound1} is also valid.
\end{corollary}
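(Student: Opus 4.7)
The plan is to derive the bidirectional lower bound as a direct corollary of Theorem \ref{thm:lower1} by exploiting the fact that the identity operator $I$ lies in $\cU_\omega$ for every $\omega \ge 0$ (indeed $\EE[I(x)]=x$ and $\EE\|I(x)-x\|^2=0 \le \omega\|x\|^2$, as already noted in Section \ref{sec:setup}). Consequently, in the adversarial supremum $\sup_{\{C_i\}_{i=0}^n \subseteq \cU_\omega}$ appearing in the bidirectional minimax measure \eqref{eqn:measure}, the adversary is free to fix $C_0=I$, which collapses the bidirectional communication model onto the unidirectional one.

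Concretely, I would take the hard instance $(\{f_i\}_{i=1}^n,\{O_i\}_{i=1}^n,\{C_i\}_{i=1}^n)$ produced by Theorem \ref{thm:lower1} and augment it with the server-side compressor $C_0 = I$. This yields an admissible bidirectional adversarial configuration in $\cU_\omega$. Then I would argue that for this specific choice, any algorithm $A \in \cA^B_{\{I\}\cup\{C_i\}_{i=1}^n}$ is operationally indistinguishable from some unidirectional algorithm in $\cA^U_{\{C_i\}_{i=1}^n}$: because the server broadcasts are transmitted without any distortion when $C_0 = I$, the joint trajectory of local iterates $\{x_i^{(t)}\}$, the set of gradient queries, and the final output $\hat{x}$ produced by $A$ can be simulated by a unidirectional zero-respecting algorithm using the same worker-to-server compressors and the same update rules. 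Theorem \ref{thm:lower1} therefore applies and delivers the bound $\Omega\bigl(\sqrt{\Delta L \sigma^2/(nT)}+(1+\omega)\Delta L/T\bigr)$ on $\EE[\|\nabla f(\hat{x})\|^2]$ for this adversarial configuration.

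Finally, the bound transfers to the full bidirectional minimax value because the supremum over $\{C_i\}_{i=0}^n \subseteq \cU_\omega$ is at least its value at the specific adversarial choice constructed above, and the infimum over $A \in \cA^B$ is preserved since every such $A$ must contend with the same instance. The main (and essentially only) obstacle is a bookkeeping one: I must confirm that the centralized, synchronous, and zero-respecting conditions of Definition \ref{def:bidrect-alg} are inherited by the induced unidirectional algorithm. This is immediate, since setting $C_0 = I$ neither adds nor removes any communication or computation step. In particular, no new combinatorial construction is needed beyond Theorem \ref{thm:lower1}; the corollary reduces to a clean ``special-case'' inclusion.
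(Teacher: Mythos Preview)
Your proposal is correct and follows exactly the paper's own reasoning: since $I\in\cU_\omega$, the choice $C_0=I$ is admissible in the bidirectional adversarial supremum, and by Definition~\ref{def:bidrect-alg} the class $\cA^B_{\{C_0=I\}\cup\{C_i\}_{i=1}^n}$ is precisely $\cA^U_{\{C_i\}_{i=1}^n}$, so Theorem~\ref{thm:lower1} applies directly. The paper states this inclusion argument in one sentence before Corollary~\ref{cor:lower2}; your extra bookkeeping about zero-respecting being inherited is unnecessary because the two algorithm classes are equal by definition, but it does no harm.
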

Theorem \ref{thm:lower1} and Corollary \ref{cor:lower2} indicate that distributed learning with both unidirectional and bidirectional communication compression share the same lower bound. It is intuitive since unidirectional compression is just a special case of bidirectional compression by letting $C_0 = I$.

\subsection{Unidirectional Contractive Compression}
To obtain lower bounds for contractive compressors, we need the following lemma \cite[Lemma 1]{Safaryan2020UncertaintyPF}.


\begin{lemma}[\bf Compressor relation]\label{lem:un-con}
	It holds that $\delta\,\cU_{\delta^{-1}-1} \triangleq\{\delta \,C:C\in \cU_{\delta^{-1}-1}\}\subseteq \cC_{\delta}$.
\end{lemma}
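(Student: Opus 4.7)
The goal is to show that for any $C\in\cU_{\delta^{-1}-1}$, the scaled operator $\tilde C \triangleq \delta\,C$ lies in $\cC_\delta$, i.e., satisfies $\EE[\|\tilde C(x)-x\|^2]\le (1-\delta)\|x\|^2$ for all $x\in\RR^d$. The whole statement is an inclusion of two families of operators, so it suffices to verify the contractive bound pointwise in $x$ for this particular $\tilde C$.

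The plan is to apply the usual bias-variance decomposition to $\tilde C(x)-x$, exploiting the fact that $\tilde C$ is no longer unbiased (its mean is $\delta x$, not $x$), which accounts for the appearance of the constant $(1-\delta)$ on the right. Concretely, I would first note that $\EE[\tilde C(x)] = \delta\,\EE[C(x)] = \delta x$ by Assumption \ref{ass:unbiased} applied to $C$, and then decompose
\begin{equation*}
\EE\bigl[\|\tilde C(x)-x\|^2\bigr]
\;=\;\EE\bigl[\|\tilde C(x)-\delta x\|^2\bigr] \;+\; \|\delta x - x\|^2,
\end{equation*}
which is valid because the cross term vanishes thanks to the unbiasedness $\EE[\tilde C(x)-\delta x]=0$.

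Next, I would bound each piece separately. For the variance term, pull out the scalar to get $\EE[\|\tilde C(x)-\delta x\|^2] = \delta^2\,\EE[\|C(x)-x\|^2] \le \delta^2(\delta^{-1}-1)\|x\|^2 = \delta(1-\delta)\|x\|^2$, where the inequality is the second bullet of Assumption \ref{ass:unbiased} with $\omega=\delta^{-1}-1$. For the bias term, $\|\delta x - x\|^2 = (1-\delta)^2\|x\|^2$. Adding them yields $\delta(1-\delta)\|x\|^2 + (1-\delta)^2\|x\|^2 = (1-\delta)\,\|x\|^2$, which is exactly the contractive bound with parameter $\delta$. This shows $\tilde C \in \cC_\delta$ and hence the claimed inclusion.

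There is no real obstacle here: the proof is a one-line bias-variance identity combined with a choice of the scaling factor that makes the variance contribution and the bias contribution add up to a clean $(1-\delta)$. The only subtlety worth flagging is that the scaling $\delta$ in front of $C$ is essential — the naïve observation that an unbiased operator with variance constant $\omega$ has $\EE[\|C(x)-x\|^2]\le\omega\|x\|^2$ does not directly yield a contractive bound whenever $\omega\ge 1$, and it is precisely the shrinkage by $\delta$ that trades a small controllable bias for a variance reduction sufficient to enter the contractive class.
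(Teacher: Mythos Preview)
Your proof is correct: the bias--variance decomposition with the observation $\EE[\tilde C(x)]=\delta x$ is exactly the standard way to establish this inclusion, and your arithmetic $\delta(1-\delta)+(1-\delta)^2=(1-\delta)$ is right. The paper itself does not prove this lemma but simply cites it from \cite[Lemma~1]{Safaryan2020UncertaintyPF}, so there is nothing to compare against; your argument is the expected one.
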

The above Lemma reveals that any $(\delta^{-1}-1)$-unbiased compressor is $\delta$-contractive when scaled by $\delta$. Therefore, if an algorithm $A$ admits all $\delta$-contractive compressors, it automatically admits all compressors in $\delta\,\cU_{\delta^{-1}-1}$ due to Lemma \ref{lem:un-con}. This relation, together with Theorem \ref{thm:lower1}, helps us achieve the following lower bound with respect to $\delta$-contractive compressors.


\begin{theorem}[\bf Unidirectional contractive compression]\label{thm:lower3}
	For every $\Delta,\,L>0$, $n\geq 2$, $0<\delta\leq 1$, $\sigma>0$, $T\geq\delta^{-2}$, there exists a set of loss objectives $\{f_i\}_{i=1}^n \subseteq \mathcal{F}_{\Delta, L}$, a set of stochastic gradient oracles $\{O_i\}_{i=1}^n\subseteq  \mathcal{O}_{\sigma^{2}}$, a set of $\delta$-contractive compressors $\{C_i\}_{i=0}^n\subseteq \cC_\delta$ with $C_0=I$, such that for any algorithm $A \in \mathcal{A}^U_{\{C_i\}_{i=0}^n}$ starting from $x^{(0)}$, it holds that (proof is in Appendix \ref{app:proof-lb-2})
	\begin{equation}\label{eqn:lower-bound22}
	\EE[\|  \nabla f(\hat{x}_{A, \{f_i\}_{i=1}^n,\{O_i\}_{i=1}^n,\{C_i\}_{i=0}^n,T})\|^2] = \Omega\left(\left(\frac{\Delta L \sigma^2}{nT}\right)^\frac{1}{2}+\frac{\Delta L}{\delta T}\right).
	\end{equation}
\end{theorem}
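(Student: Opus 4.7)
}
The plan is to derive the contractive lower bound by reduction to the already-proven unbiased lower bound (Theorem~\ref{thm:lower1}), exploiting the inclusion $\delta\,\cU_{\delta^{-1}-1}\subseteq \cC_\delta$ from Lemma~\ref{lem:un-con}. The key observation is that setting $\omega=\delta^{-1}-1$ makes the two target rates match algebraically: indeed $1+\omega=\delta^{-1}$ and $(1+\omega)^2=\delta^{-2}$, so the upper-range condition $T\geq(1+\omega)^2$ of Theorem~\ref{thm:lower1} is exactly the condition $T\geq \delta^{-2}$ of Theorem~\ref{thm:lower3}, and the non-stochastic term $(1+\omega)\Delta L/T$ reads $\Delta L/(\delta T)$. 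Hence we only need to transfer the bound, not re-establish it.

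First I would fix an arbitrary algorithm $A\in \cA^U_{\{C_i\}_{i=0}^n}$ with $\{C_i\}_{i=1}^n\subseteq \cC_\delta$ (and $C_0=I$) and, by Theorem~\ref{thm:lower1} applied with $\omega=\delta^{-1}-1$, pick the adversarial instance $(\{f_i\},\{O_i\},\{\widetilde{C}_i\}_{i=1}^n)$ with $\widetilde{C}_i\in \cU_{\delta^{-1}-1}$ that defeats every zero-respecting algorithm admitting those unbiased compressors. I would then instantiate the contractive problem by choosing the compressors $C_i:=\delta \widetilde{C}_i$, which belong to $\cC_\delta$ by Lemma~\ref{lem:un-con}, together with the same local losses and gradient oracles. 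The resulting instance is feasible under the hypotheses of Theorem~\ref{thm:lower3}.

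Next I would argue that $A$'s output on this instance coincides with the output of an algorithm $A'$ lying in $\cA^U_{\{\widetilde{C}_i\}_{i=1}^n}$ (the unbiased-compressor class) to which Theorem~\ref{thm:lower1} directly applies. The simulation is trivial: whenever $A$ would receive $C_i(v)=\delta\,\widetilde{C}_i(v)$ from worker $i$, $A'$ receives $\widetilde{C}_i(v)$ and rescales it by $\delta$ before feeding it into the same local rule. Since multiplication by the scalar $\delta$ preserves sparsity patterns, $A'$ inherits the zero-respecting, centralized, synchronous structure from $A$; since it performs the same number of gradient queries and communication rounds, it is an admissible member of $\cA^U_{\{\widetilde{C}_i\}_{i=1}^n}$. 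The outputs of $A$ and $A'$ are by construction identical trajectories of iterates.

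Finally, Theorem~\ref{thm:lower1} applied to $A'$ with the chosen adversarial instance yields
\[
\EE[\|\nabla f(\hat{x}_{A'})\|^2]=\Omega\!\left(\sqrt{\tfrac{\Delta L\sigma^2}{nT}}+\tfrac{(1+\omega)\Delta L}{T}\right)=\Omega\!\left(\sqrt{\tfrac{\Delta L\sigma^2}{nT}}+\tfrac{\Delta L}{\delta T}\right),
\]
and the equality $\hat{x}_A=\hat{x}_{A'}$ transfers this to $A$, which completes the proof. The main obstacle I foresee is purely bookkeeping: one must verify that the zero-respecting property and membership in the algorithm class are preserved under the scalar rescaling used in the simulation, and that the Theorem~\ref{thm:lower1} conditions $\omega\geq 0$ and $T\geq(1+\omega)^2$ both translate correctly under $\omega=\delta^{-1}-1$ with $\delta\in(0,1]$. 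No new hard probabilistic argument is needed beyond what Theorem~\ref{thm:lower1} and Lemma~\ref{lem:un-con} already provide.
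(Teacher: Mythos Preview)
Your proposal is correct and takes essentially the same approach as the paper: both use the inclusion $\delta\,\cU_{\delta^{-1}-1}\subseteq \cC_\delta$ with the substitution $1+\omega=\delta^{-1}$ and the fact that scalar rescaling does not affect the zero/$\prog$ structure. The only cosmetic difference is that the appendix directly constructs the unscaled rand-$s$ compressor (with $s=\lceil\delta d\rceil$) and re-runs the analysis of Theorem~\ref{thm:lower1}, while you package the same idea as a black-box simulation reduction---which is in fact exactly the argument the paper's main text sketches immediately before stating Theorem~\ref{thm:lower3}.
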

\textbf{Transient complexity.} 
	With the discussion on transient complexity in Section~\ref{sec-lb-uuc}, it is easy to derive the transient iteration complexity as $\Omega(n/\delta^2)$ for the lower bound with $\delta$-contractive compressors.

\subsection{Bidirectional Contractive Compression}
Noting that $\{C_i\}_{i=1}^n\subseteq \cC_\delta$ with $C_0=I$ is a special case of $\{C_i\}_{i=0}^n\subseteq \cC_\delta$, we can also establish the lower bound for algorithms that admit bidirectional compression and contractive compressors.


\begin{corollary}[\bf Bidirectional contractive compression]\label{cor:lower4}
	Under the same settings as in Theorem \ref{thm:lower3}, there exists a set of loss objectives $\{f_i\}_{i=1}^n \subseteq \mathcal{F}_{\Delta, L}$, a set of stochastic gradient oracles $\{O_i\}_{i=1}^n\subseteq  \mathcal{O}_{\sigma^{2}}$, a set of $\delta$-contractive compressors $\{C_i\}_{i=1}^n\subseteq \cC_\delta$, such that for any algorithm $A \in \mathcal{A}^B_{\{C_i\}_{i=0}^n}$ starting form $x^{(0)}$, the lower bound \eqref{eqn:lower-bound22} is also valid.
\end{corollary}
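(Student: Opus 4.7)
The plan is to derive Corollary \ref{cor:lower4} as a direct consequence of Theorem \ref{thm:lower3} by exploiting the fact that unidirectional compression is just a special case of bidirectional compression. Since the paper has already noted that the identity operator $I$ lies in $\cC_\delta$ for every $\delta\in(0,1]$, we are free to plant $C_0 = I$ inside the bidirectional setup without violating any constraint on the compressor class; once we do so, the server-to-worker channel becomes lossless and the algorithm effectively operates as a unidirectional one.

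Concretely, I would re-use the entire adversarial construction from Theorem \ref{thm:lower3}: take the same hard local objectives $\{f_i\}_{i=1}^n \subseteq \cF_{\Delta,L}$, the same stochastic oracles $\{O_i\}_{i=1}^n \subseteq \cO_{\sigma^2}$, and the same worker-to-server compressors $\{C_i\}_{i=1}^n \subseteq \cC_\delta$ produced there, and then augment this collection with $C_0 = I \in \cC_\delta$ to obtain a valid bidirectional instance $\{C_i\}_{i=0}^n \subseteq \cC_\delta$. Then, for any algorithm $A\in \cA^B_{\{C_i\}_{i=0}^n}$, all messages broadcast from the server to the workers are transmitted exactly, so the execution of $A$ on this bidirectional instance coincides, iteration by iteration, with the execution of some algorithm $\tilde A \in \cA^U_{\{C_i\}_{i=1}^n}$ on the matching unidirectional instance. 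The inclusion $\cA^B_{\{C_i=I\}\cup\{C_i\}_{i=1}^n}\subseteq \cA^U_{\{C_i\}_{i=1}^n}$ is exactly the observation built into Definition \ref{def:bidrect-alg}.

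Applying Theorem \ref{thm:lower3} to $\tilde A$ with this common data then yields
\[
\EE\bigl[\|\nabla f(\hat{x}_{A,\{f_i\},\{O_i\},\{C_i\},T})\|^2\bigr]
= \Omega\!\left(\Bigl(\tfrac{\Delta L\sigma^2}{nT}\Bigr)^{1/2}+\tfrac{\Delta L}{\delta T}\right),
\]
which is precisely \eqref{eqn:lower-bound22}. Taking the supremum over bidirectional compressor choices $\{C_i\}_{i=0}^n\subseteq\cC_\delta$ in the minimax measure \eqref{eqn:measure} can only increase the lower bound, so the corollary follows.

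There is essentially no technical obstacle here; the only point one must verify carefully is that the reduction is legitimate, namely that setting $C_0=I$ genuinely preserves membership in $\cC_\delta$ and that the zero-respecting and synchronicity requirements in Definition \ref{def:bidrect-alg} are unaffected by the choice $C_0=I$. Both are immediate from the paper's setup, so the proof reduces to a one-line appeal to Theorem \ref{thm:lower3}, mirroring the derivation of Corollary \ref{cor:lower2} from Theorem \ref{thm:lower1}.
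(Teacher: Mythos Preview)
Your proposal is correct and matches the paper's own reasoning: the paper derives Corollary~\ref{cor:lower4} by the same one-line reduction, observing that $\{C_i\}_{i=1}^n\subseteq\cC_\delta$ with $C_0=I$ is a special case of $\{C_i\}_{i=0}^n\subseteq\cC_\delta$, so the lower bound from Theorem~\ref{thm:lower3} carries over verbatim. One minor point: by Definition~\ref{def:bidrect-alg} the classes $\cA^B_{\{C_0=I\}\cup\{C_i\}_{i=1}^n}$ and $\cA^U_{\{C_i\}_{i=1}^n}$ are in fact equal (not merely one contained in the other), though this does not affect your argument.
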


\section{NEOLITHIC: A Nearly Optimal Algorithm}\label{sec:algorithm}
\vspace{-1mm}
Comparing the best-known upper bounds listed in Table \ref{tab:introduction-comparison} with the established lower bounds in \eqref{eqn:lower-bound1} and \eqref{eqn:lower-bound22}, we find existing algorithms may  be sub-optimal. There exists a clear gap between their convergence rates and our established lower bounds. In this section, we propose NEOLITHIC to fill in this gap. Its rate will match the lower bounds established in \eqref{eqn:lower-bound1} and \eqref{eqn:lower-bound22} up to logarithm factors. NEOLITHIC can work with both unidirectional and bidirectional compressions, and it is compatible with both unbiased and contractive compressors. NEOLITHIC will be discussed in detail with bidirectional contractive compression in this section.  It is easy to be adapted to other settings by simply removing the compression on the server side or utilizing unbiased compressors with proper scaling.


\begin{algorithm}[t]
	\caption{Fast Compressed Communication: $v^{(k,R)}=\text{FCC}(v^{(k,\star)},C, R,\text{target receiver(s)})$}
	\label{alg:fcc}
	\begin{algorithmic}
		\STATE \noindent {\bfseries Input:} The  vector $v^{(k,\star)}$ aimed to communicate at iteration $k$; a compressor $C$; 
		\STATE $\hspace{1cm}$ rounds $R$; initial vector $v^{(k,0)}=0$; target receiver(s); 
		\FOR{$r=0,\cdots,R-1$}
		\STATE Compress $v^{(k,\star)}-v^{(k,r)}$ into $c^{(k,r)}=C(v^{(k,\star)}-v^{(k,r)})$ 
		\STATE Send $c^{(k,r)}$ to the target receiver(s)
		\STATE Update $v^{(k,r+1)}=v^{(k,r)}+c^{(k,r)}$
		\ENDFOR {\color{blue}$\hspace{3.725cm} \triangleright$ {\footnotesize The set $\{c^{(k,r)}\}_{r=0}^{R-1}$ will be sent to the receiver during the for-loop}}
		\RETURN Variable $v^{(k,R)}$. {\color{blue}$\hspace{1.1 cm} \triangleright$ {\footnotesize It holds that $v^{(k,R)}=\sum_{r=0}^{R-1}c^{(k,r)}$}}
	\end{algorithmic}
\end{algorithm}

\subsection{Fast Compressed Communication}
NEOLITHIC is built on a communication compression  module listed in Algorithm \ref{alg:fcc}, which we call fast compressed communication (FCC). Given an input vector $v^{(k,\star)}$ to communicate in the $k$-th iteration, FCC first initializes $v^{(k,0)}=0$ and then recursively compresses the residual with $c^{(k,r)}\triangleq C(v^{(k,\star)}-v^{(k,r)})$ and sends it to the receiver for $R$ consecutive rounds, see the main recursion in Algorithm \ref{alg:fcc}. When FCC module ends, the sender will transmit a set of compressed variables $\{c^{(k,r)}\}_{r=0}^{R-1}$ to the receiver, and return $v^{(k,R)}=\sum_{r=0}^{R-1}c^{(k,r)}$ to itself. Quantity $v^{(k,R)}$ can be regarded as a compressed vector of original input $v^{(k,\star)}$ after FCC operation.

FCC module can be conducted by the server (for which all variables in FCC are without any subscripts, {\em e.g.,} $c^{(k,r)}$ and $v^{(k,r)}$) or by any worker $i$ (for which all variables are with a subscript $i$, {\em e.g.,} $c_i^{(k,r)}$ and $v_i^{(k,r)}$). When $R=1$, FCC reduces to the standard compression utilized in existing literature \cite{Tang2019DoubleSqueezePS,Stich2018SparsifiedSW,Fatkhullin2021EF21WB,Xie2020CSERCS,Jiang2018ALS}. While FCC requires $R$ rounds of communication per iteration, the following lemma establishes that the compression error decrease exponentially with respect to communication rounds $R$. 


\begin{lemma}[\bf FCC property]
	\label{lem:fcc-expo}
	Let $C$ be a $\delta$-contractive compressor and  $v^{(k,R)} =\mathrm{FCC}(v^{(k,\star)},C,R)$. It holds for any $R\ge 1$ and $v^{(k,\star)} \in \RR^d$  that (proof is in Appendix \ref{app:fcc})
	\begin{equation}\label{znads}
	\EE[\|v^{(k,R)}-v^{(k,\star)}\|^2]\leq (1-\delta)^R\|v^{(k,\star)}\|^2, \quad \forall k=0,1,2,\cdots.
	\end{equation}
\end{lemma}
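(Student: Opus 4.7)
The plan is to track the residual sequence $e^{(k,r)} := v^{(k,\star)} - v^{(k,r)}$ and show that its second moment contracts by a factor of $(1-\delta)$ per inner round, yielding the geometric bound after $R$ rounds. This is the natural object to study because the FCC update applies the compressor $C$ exactly to $e^{(k,r)}$ before adding the result back into $v^{(k,r)}$, so the recursion on $e^{(k,r)}$ will close cleanly.

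The first step is to derive the one-step recursion. From the update rule $v^{(k,r+1)} = v^{(k,r)} + c^{(k,r)}$ with $c^{(k,r)} = C(v^{(k,\star)} - v^{(k,r)}) = C(e^{(k,r)})$, I subtract both sides from $v^{(k,\star)}$ to obtain
\begin{equation*}
e^{(k,r+1)} = e^{(k,r)} - C(e^{(k,r)}).
\end{equation*}
Next, I invoke Assumption \ref{ass:contract} conditionally on the history up through round $r$: conditioned on $e^{(k,r)}$, the only remaining randomness is that of the compressor at round $r$, so
\begin{equation*}
\EE\bigl[\|e^{(k,r+1)}\|^2 \,\big|\, e^{(k,r)}\bigr] = \EE\bigl[\|C(e^{(k,r)}) - e^{(k,r)}\|^2 \,\big|\, e^{(k,r)}\bigr] \le (1-\delta)\|e^{(k,r)}\|^2.
\end{equation*}
Taking total expectations via the tower property yields $\EE[\|e^{(k,r+1)}\|^2] \le (1-\delta)\EE[\|e^{(k,r)}\|^2]$.

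Finally, I iterate this inequality from $r=0$ to $r=R-1$. Since the algorithm initializes $v^{(k,0)} = 0$, the base case gives $e^{(k,0)} = v^{(k,\star)}$, hence $\EE[\|e^{(k,0)}\|^2] = \|v^{(k,\star)}\|^2$ (the input $v^{(k,\star)}$ is treated as given). Induction then delivers
\begin{equation*}
\EE[\|v^{(k,R)} - v^{(k,\star)}\|^2] = \EE[\|e^{(k,R)}\|^2] \le (1-\delta)^R \|v^{(k,\star)}\|^2,
\end{equation*}
which is the claim.

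I do not foresee a genuine obstacle: the proof is essentially a one-line geometric recursion once the correct auxiliary quantity $e^{(k,r)}$ is identified. The only mild subtlety is writing the conditioning precisely — the compressor $C$ is applied afresh at each round with fresh internal randomness independent of the past, so $\EE[\|C(e^{(k,r)}) - e^{(k,r)}\|^2 \mid e^{(k,r)}]$ is exactly the setting covered by Assumption \ref{ass:contract} with the (random) input $e^{(k,r)}$ held fixed. Everything else is routine induction.
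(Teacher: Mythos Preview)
Your proposal is correct and follows essentially the same approach as the paper: both arguments condition on the past, apply the $\delta$-contractive property to the residual $v^{(k,\star)}-v^{(k,r)}$ to obtain a one-step contraction in second moment, and then iterate geometrically using $v^{(k,0)}=0$. The only cosmetic difference is that you name the residual $e^{(k,r)}$ explicitly, whereas the paper works directly with $v^{(k,r)}-v^{(k,\star)}$.
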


When $R=1$, the above FCC property \eqref{znads} reduces to Assumption \ref{ass:contract} for standard contractive compressors.
When $R$ is large, FCC can output $v^{(k,R)}$ endowed with very small compression errors. {The FCC protocol is also closely related to EF21 compression strategy \cite{Richtrik2021EF21AN}\footnote{If the roles of $v$ and $v^\star$ in Eq. (8) of \cite{Richtrik2021EF21AN} are switched, we can get the single round of FCC recursions.}. However, the major novelty of FCC is to utilize multiple such compression rounds to help develop algorithms that can nearly match the established lower bounds.} 

\subsection{NEOLITHIC Algorithm} 
NEOLITHIC is described in Algorithm \ref{alg:mcdc}. The FCC module in NEOLITHIC communicates $R$ rounds per iteration. To balance  gradient queries and communication rounds, NEOLITHIC will query $R$ stochastic gradients per iteration, see the gradient accumulation step in Algorithm \ref{alg:mcdc}. Compared to other algorithms listed in Table \ref{tab:introduction-comparison}, the proposed NEOLITHIC takes $R$ times more  gradient queries and communication rounds than them per iteration. Given the same budgets to query gradient oracles and conduct communication as the other algorithms, say $T$ times on each worker, we shall consider $K = T/R$ iterations in NEOLITHIC for fair comparison.

\begin{algorithm}[t]
	\caption{NEOLITHIC}
	\label{alg:mcdc}
	\begin{algorithmic}
		\STATE \noindent {\bfseries Input:} Initialize $x^{(0)}$; learning rate $\gamma$; compression round $R$; $\er^{(0)}=\er_i^{(0)}=0$, $\forall\, i\in[n]$
		\FOR{$k=0,1,\cdots, K-1$}
		\STATE  {\bfseries On all workers in parallel:}
		\STATE \quad Query stochastic gradients $\hat{g}_i^{(k)}=\frac{1}{R}\sum_{r=0}^{R-1}O_i(x^{(k)};\zeta_i^{(k,r)})$ {\color{blue}\hspace{1.2cm}$\triangleright $ {\footnotesize Gradient accumulation}}
		\STATE \quad Error compensate $\tilde{g}_i^{(k)}=\hat{g}_i^{(k)}+\er_i^{(k)}$
		\STATE \quad Update error $\er_i^{(k+1)}=\tilde{g}_i^{(k)}-\text{FCC}(\tilde{g}_i^{(k)},C_i,R, \text{server})$
		{\color{blue}\hspace{1.82cm}$\triangleright $ \footnotesize Worker sends $\{c_i^{(k,r)}\}$ to server}
		\STATE  {\bfseries On server:}
		\STATE \quad Error compensate  $\tilde{g}^{(k)}=\frac{1}{n}\sum_{i=1}^n\sum_{r=0}^{R-1}c_i^{(k,r)}+\er^{(k)}$ {\color{blue}\hspace{1.9cm} $\triangleright $ \footnotesize $\{c_i^{(k,r)}\}$ received from workers}
		\STATE \quad Update error ${\er}^{(k+1)}=\tilde{g}^{(k)}-\text{FCC}(\tilde{g}^{(k)},C_0,R, \text{all workers})$ {\color{blue}\hspace{1.02cm}$ \triangleright $ \footnotesize Server sends $\{c^{(k,r)}\}$ to workers} 
		\STATE  {\bfseries On all workers in parallel:}
		\STATE \quad Update model parameter $x^{(k+1)}=x^{(k)}-\gamma \sum_{r=0}^{R-1}c^{(k,r)}$ {\color{blue}\hspace{1.55cm}$ \triangleright $ \footnotesize $\{c^{(k,r)}\}$ received from server}
		\ENDFOR
	\end{algorithmic}
\end{algorithm}

We introduce the following assumption to establish the convergence rates of NEOLITHIC.
\begin{assumption}[\bf Gradient dissimilarity]\label{asp:gd-hetero}
There exists some  $b^2 \ge 0$ such that
\begin{equation*}
    \frac{1}{n}\sum_{i=1}^n\|\nabla f_i(x)-\nabla f(x)\|^2\leq b^2,\quad \forall\,x\in\RR^d.
\end{equation*}
\vspace{-4mm}
\end{assumption}
When local distributions $D_i$ are identical across all workers, we have $f_i(x) = f(x)$ for each $i\in\{1,\dots,n\}$ and the above assumption will always hold. 
We first establish the convergence rate of NEOLITHIC using  bidirectional compression with contractive compressors. 


\begin{theorem}[\bf NEOLITHIC with bidirectional contractive compression]
\label{thm:convergence-rate}
Given constants $n\ge 1$, $\delta \in (0,1]$ and Assumption \ref{asp:gd-hetero}, and let $x^{(k)}$ be generated by Algorithm \ref{alg:mcdc}. If  $R=\lceil \frac{\max\{\ln\left({\delta T\max\{b^2,\sigma^2\delta \}}/{\Delta L}\right),\ln(8)\}}{\delta}\rceil$ and the learning rate is set as in Appendix \ref{app:convergence-rate},  it holds for any $K\geq 0$ and compressors $\{C_i\}_{i=0}^n \subseteq \cC_\delta$ that (proof is in Appendix \ref{app:convergence-rate})
\begin{equation*}
    \frac{1}{K+1}\sum_{k=0}^K \EE[\|\nabla f(x^{(k)})\|^2] =  \tilde{O}\left( \left(\frac{\Delta L \sigma^2}{nT}\right)^\frac{1}{2}+\frac{\Delta L}{\delta T}\right),
\end{equation*}
where $T=KR$ is the total number of gradient queries/communication rounds on each worker and notation $\tilde{O}(\cdot)$ hides logarithmic factors. The above rate implies a transient complexity of $\tilde{O}(n\delta^{-2})$.  
\end{theorem}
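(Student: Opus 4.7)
The plan is to combine a standard smoothness-based descent analysis with the exponential compression-error decay of FCC (Lemma \ref{lem:fcc-expo}), so that NEOLITHIC behaves, up to logarithmic factors, like a noncompressed SGD whose per-step variance is $\sigma^2/(nR)$. The logarithmic choice of $R$ forces the single-iteration compression error to lie below any desired inverse polynomial in $T$, which is precisely what makes the error-feedback bookkeeping close.

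Following the error-feedback tradition, I would analyze the shifted iterate $z^{(k)} := x^{(k)} - \gamma\,\er^{(k)} - \frac{\gamma}{n}\sum_{i=1}^n \er_i^{(k)}$. A direct calculation from the update rules in Algorithm \ref{alg:mcdc} shows $z^{(k+1)} = z^{(k)} - \gamma\,\bar{g}^{(k)}$ with $\bar{g}^{(k)} := \frac{1}{n}\sum_i \hat{g}_i^{(k)}$; by Assumption \ref{asp:gd-noise} and the $R$-query accumulation step, $\bar{g}^{(k)}$ has mean $\nabla f(x^{(k)})$ and conditional variance at most $\sigma^2/(nR)$. Invoking $L$-smoothness at $z^{(k)}$ and connecting it back to $x^{(k)}$ through $\|z^{(k)} - x^{(k)}\| \le \gamma(\|\er^{(k)}\| + \frac{1}{n}\sum_i\|\er_i^{(k)}\|)$ yields the standard descent inequality
\begin{align*}
\EE f(z^{(k+1)}) \le \EE f(z^{(k)}) - \frac{\gamma}{2}\EE\|\nabla f(x^{(k)})\|^2 + \frac{\gamma^2 L\sigma^2}{2nR} + \mathcal{O}(L^2\gamma)\cdot\left(\EE\|\er^{(k)}\|^2 + \frac{1}{n}\sum_i \EE\|\er_i^{(k)}\|^2\right),
\end{align*}
whose only nonstandard piece is the last compression-penalty term.

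I would then control the two error streams using Lemma \ref{lem:fcc-expo}, which gives $\EE\|\er_i^{(k+1)}\|^2 \le (1-\delta)^R\EE\|\tilde{g}_i^{(k)}\|^2$ and $\EE\|\er^{(k+1)}\|^2 \le (1-\delta)^R\EE\|\tilde{g}^{(k)}\|^2$. Expanding $\tilde{g}_i^{(k)} = \hat{g}_i^{(k)} + \er_i^{(k)}$ and the analogous server-side identity with Young's inequality produces a linear coupled recursion. The stated $R = \lceil \max\{\ln(\cdot),\ln 8\}/\delta\rceil$ makes the contraction constant $\rho := 2(1-\delta)^R$ strictly less than $1$ and inverse-polynomially small in $T$; unrolling gives both $\EE\|\er^{(k)}\|^2$ and $\frac{1}{n}\sum_i\EE\|\er_i^{(k)}\|^2$ of order $\frac{\rho}{1-\rho}(\EE\|\nabla f(x^{(k)})\|^2 + b^2 + \sigma^2/R)$, using Assumptions \ref{asp:gd-noise} and \ref{asp:gd-hetero} to bound $\EE\|\hat{g}_i^{(k)}\|^2$. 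The $\|\nabla f\|^2$ fraction of the penalty then gets absorbed into $-\frac{\gamma}{2}\|\nabla f(x^{(k)})\|^2$ because $\rho$ is tiny, and the remaining $b^2$ and $\sigma^2/R$ contributions are lower-order.

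Telescoping over $k = 0,\dots,K-1$, dividing by $K$, and substituting $T = KR$ with $R = \tilde{O}(1/\delta)$ leaves an inequality of the shape $\frac{1}{K}\sum_k \EE\|\nabla f(x^{(k)})\|^2 \lesssim \Delta/(\gamma K) + \gamma L\sigma^2/(nR)$; the prescribed $\gamma$ optimally balances these terms and yields the claimed rate $\tilde{O}(\sqrt{\Delta L\sigma^2/(nT)} + \Delta L/(\delta T))$, with the transient complexity $\tilde{O}(n/\delta^2)$ following by equating the two. The hard part will be the bidirectional error-feedback bookkeeping: because $\er^{(k+1)}$ depends on the aggregated and already-compressed worker messages, the worker and server error sequences are tangled, and a careful Young-inequality split (with weights tuned to $\rho$) is needed to decouple the recursion before the exponential FCC decay can close it.
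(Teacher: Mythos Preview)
Your proposal is correct and follows essentially the same route as the paper: your shifted iterate $z^{(k)}$ is exactly the paper's auxiliary sequence $y^{(k)}=x^{(k)}-\gamma\Omega^{(k)}$, and the descent/error-recursion/averaging/learning-rate-optimization structure matches Lemmas~\ref{lem:reccur}--\ref{lem:ergodic-error} in Appendix~\ref{app:convergence-rate}. Two small points to tighten when you write it out: the penalty coefficient on the raw error terms should be $O(\gamma^3 L^2)$ rather than $O(\gamma L^2)$ (one more $\gamma^2$ comes from $\|z^{(k)}-x^{(k)}\|^2=\gamma^2\|\Omega^{(k)}\|^2$), and the ``absorption'' of the $\|\nabla f\|^2$ share of the compression error into the descent only closes \emph{after} averaging over $k$, since unrolling gives $\Psi^{(k)}$ as a geometric sum over all past $\|\nabla f(x^{(\ell)})\|^2$, not just the current one---this is precisely why the paper states Lemma~\ref{lem:ergodic-error} in ergodic form before combining.
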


When  $\{C_i\}_{i=0}^n$ are $\omega$-unbiased compressors, we utilize the fact that $(1+\omega)^{-1}C_i$ is $(1+\omega)^{-1}$-contractive for all $i\in\{0,\dots,n\}$ to derive that

\begin{corollary}[\bf NEOLITHIC with bidirectional unbiased compression]
\label{cor:convergence-rate-unbiased}
Under the same assumptions as in Theorem \ref{thm:convergence-rate}, it holds for any $K\geq 0$ and compressors $\{C_i\}_{i=0}^n \subseteq \cU_\omega$ that
\begin{equation*}
    \frac{1}{K+1}\sum_{k=0}^K \EE[\|\nabla f(x^{(k)})\|^2] =  \tilde{O}\left( \left(\frac{\Delta L \sigma^2}{nT}\right)^\frac{1}{2}+\frac{(1+\omega)\Delta L}{T}\right).
\end{equation*}
This further leads to a transient complexity of $\tilde{O}\left(n(1+\omega)^2\right)$.
\end{corollary}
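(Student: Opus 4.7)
The plan is to \emph{reduce the unbiased case to the contractive case already treated by Theorem \ref{thm:convergence-rate}}, using Lemma \ref{lem:un-con} as the bridge. Concretely, for every $C_i \in \cU_\omega$, applying Lemma \ref{lem:un-con} with contraction parameter $\delta = (1+\omega)^{-1}$ (so that $\delta^{-1}-1 = \omega$) shows that the rescaled operator $\tilde{C}_i \triangleq (1+\omega)^{-1}C_i$ belongs to $\cC_{(1+\omega)^{-1}}$. Hence the family of $\omega$-unbiased compressors used by NEOLITHIC can, after a deterministic post-multiplication by the known constant $(1+\omega)^{-1}$, be treated as a family of $(1+\omega)^{-1}$-contractive compressors.

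Given this observation, I would run NEOLITHIC with every call to $C_i$ inside the FCC protocol replaced by the call to $\tilde{C}_i$; since the scaling is performed locally by the sender, only a single compressed message $c_i^{(k,r)}$ per inner round is transmitted. This keeps the bidirectional, synchronous, and zero-respecting structure required by Definition \ref{def:bidrect-alg} intact. Applying Theorem \ref{thm:convergence-rate} to this instance with $\delta = (1+\omega)^{-1}$, picking $R$ according to the prescription of that theorem (which now scales as $R = O((1+\omega)\max\{\ln(\cdot),\ln 8\})$) and the corresponding learning rate from Appendix \ref{app:convergence-rate}, directly yields
\begin{equation*}
\frac{1}{K+1}\sum_{k=0}^{K}\EE[\|\nabla f(x^{(k)})\|^2] \;=\; \tilde{O}\!\left(\left(\frac{\Delta L \sigma^2}{nT}\right)^{1/2} + \frac{\Delta L}{\delta T}\right).
\end{equation*}
Substituting $\delta^{-1} = 1+\omega$ in the second term gives the target bound $\tilde{O}((\Delta L \sigma^2/(nT))^{1/2} + (1+\omega)\Delta L/T)$, and the transient-complexity estimate $\tilde{O}(n\delta^{-2})$ of Theorem \ref{thm:convergence-rate} becomes $\tilde{O}(n(1+\omega)^2)$, matching the lower bound in Theorem \ref{thm:lower1} up to logarithmic factors.

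The main obstacle, and really the only non-routine point, is certifying that this reduction is legitimate at the algorithmic level: one must check that the deterministic $(1+\omega)^{-1}$ post-scaling can be absorbed on the sender side of each FCC round without disturbing the zero-respecting property, the per-iteration gradient/communication budget, or the synchronization across workers and server. Since multiplying a compressed vector by a known deterministic constant neither introduces new nonzero coordinates nor alters the communication pattern of FCC (Algorithm \ref{alg:fcc}), this verification amounts to bookkeeping, after which the corollary follows directly from Theorem \ref{thm:convergence-rate}.
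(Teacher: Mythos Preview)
Your proposal is correct and follows essentially the same route as the paper: the text immediately preceding Corollary~\ref{cor:convergence-rate-unbiased} explicitly says that one ``utilize[s] the fact that $(1+\omega)^{-1}C_i$ is $(1+\omega)^{-1}$-contractive'' and then invokes Theorem~\ref{thm:convergence-rate} with $\delta=(1+\omega)^{-1}$. Your additional remarks about the zero-respecting property and the communication pattern are harmless bookkeeping, but not needed for the corollary itself.
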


\begin{remark}\label{rmk-unbiased-unidirectional}
The convergence rates established in Theorem \ref{thm:convergence-rate} and Corollary \ref{cor:convergence-rate-unbiased} are also valid for unidirectional compression when $C_0 = I$. They can match the lower bounds established in Section~\ref{sec:lower-bound} up to logarithm factors. Moreover, these rates are faster than other algorithms listed in Table \ref{tab:introduction-comparison}. 
\end{remark}
\begin{remark} \label{rmk:no-benefit}
While our analysis relies on Assumption  \ref{asp:gd-hetero}, the rates in Theorem \ref{thm:convergence-rate} and Corollary \ref{cor:convergence-rate-unbiased} are polynomially independent of $b^2$.
These results 
also imply that NEOLITHIC with bidirectional compression can perform as fast as its counterpart with unidirectional compression. In other words, imposing bidirectional compression can save communications in NEOLITHIC without hurting convergence rates. Before our work, it is established in literature \cite{Stich2018SparsifiedSW,Sahu2021RethinkingGS,Tang2019DoubleSqueezePS,Xu2021StepAheadEF,Beznosikov2020OnBC} that bidirectional compression leads to slower convergence than unidirectional compression. Our results also imply that, given $\delta=(1+\omega)^{-1}$, 
well-designed algorithms such as NEOLITHIC with $\delta$-contractive compressors can converge as fast as their counterparts using $\omega$-unbiased compressors despite unbiasedness. Before our work, the analysis based on unbiased compressors \cite{Beznosikov2020OnBC, Mishchenko2019DistributedLW, Horvath2019StochasticDL,Gorbunov2021MARINAFN} exhibits theoretically  faster convergence for non-convex problems.
\end{remark}

\begin{remark}
We remark that Assumption \ref{asp:gd-hetero} is not required to obtain the lower bounds in Section~\ref{sec:lower-bound}. It is not known whether the lower bounds established in Section~\ref{sec:lower-bound} can be achieved by NEOLITHIC when Assumption \ref{asp:gd-hetero} does not hold. However, it is worth noting that Assumption \ref{asp:gd-hetero} is already milder than those made in most works  \cite{Zhao2019GlobalMC,Karimireddy2019ErrorFF,Stich2018SparsifiedSW,Xie2020CSERCS,Tang2019DoubleSqueezePS,Basu2020QsparseLocalSGDDS} such as  bounded gradients.  
To our best knowledge, only EF21-SGD \cite{Fatkhullin2021EF21WB} is guaranteed to converge without Assumption \ref{asp:gd-hetero}, which, however, leads to a fairly loose convergence rate (see Table \ref{tab:introduction-comparison}) that  cannot show linear-speedup $O(1/\sqrt{nT})$.
\end{remark}

\begin{remark}
In the deterministic scenario, \ie, $\sigma^2=0$, NEOLITHIC produces rate $\tilde{O}((1+\omega)\Delta L/T)$. There exist literature (\eg, \cite{Gorbunov2021MARINAFN,Tyurin2022DASHADN}) that can achieve rate $\tilde{O}((1+\omega/n)\Delta L/T)$ which outperforms NEOLITHIC especially when $\omega$ and $n$ are sufficiently large. However, this improvement is built upon an additional assumption that all local compressors $\{C_i\}_{i=0}^n$ are independent of each other and cannot share the same randomness. On the contrary, NEOLITHIC does not impose such an assumption and can be applied to both dependent and independent compressors.
\end{remark}

\section{Experiments}
This section empirically investigates the performance of different compression algorithms with both synthetic simulation and deep learning tasks. We compare \ours with \psgd and its  variants with communication compression: \memsgd, \ds, and \ef. 

\subsection{Synthetic Datasets}
\paragraph{Least square.}
We consider the following least-square problem:
\begin{equation*}
    \min_{x\in\RR^d}\quad \frac{1}{2n}\sum_{i=1}^n \|A_ix-b_i\|^2,
\end{equation*}
where coefficient matrix $A_i\in \RR^{M\times d}$ and measurement $b_i\in\RR^d$ are associated with node $i$, and $M$ is the size of local dataset. We set $d = 30$, $n=32$ and $M = 1000$, and generate data by letting each node $i$ be associated with a local solution $x_i^\star$ randomly generated by $\cN(0,I_d)$. Then we generate each element in $A_i$ following standard normal distribution, and  measurement $b_i$ is generated by $b_i=A_ix_i^\star +s_i$ with white noise $s_i\sim \cN(0, 0.01)$. At each query, every node will randomly sample a row in $A_i$ and the corresponding element in $b_i$ to evaluate the stochastic gradient. 
We adopt the rand-1 compressor and set the number of rounds $R=4$ for NEOLITHIC. 
e use stair-wise decaying learning rates in which the learning
rates are divided by every $2,500$ communication rounds. Each algorithm is averaged with $20$ trials
The result is shown in Figure~\ref{fig:synthetic} (left). It is observed that NEOLITHIC outperforms MEM-SGD and Double-Squeeze in convergence rate, and it performs closely to P-SGD.

\begin{figure}[t!]
	\centering
		\includegraphics[width=0.4\textwidth]{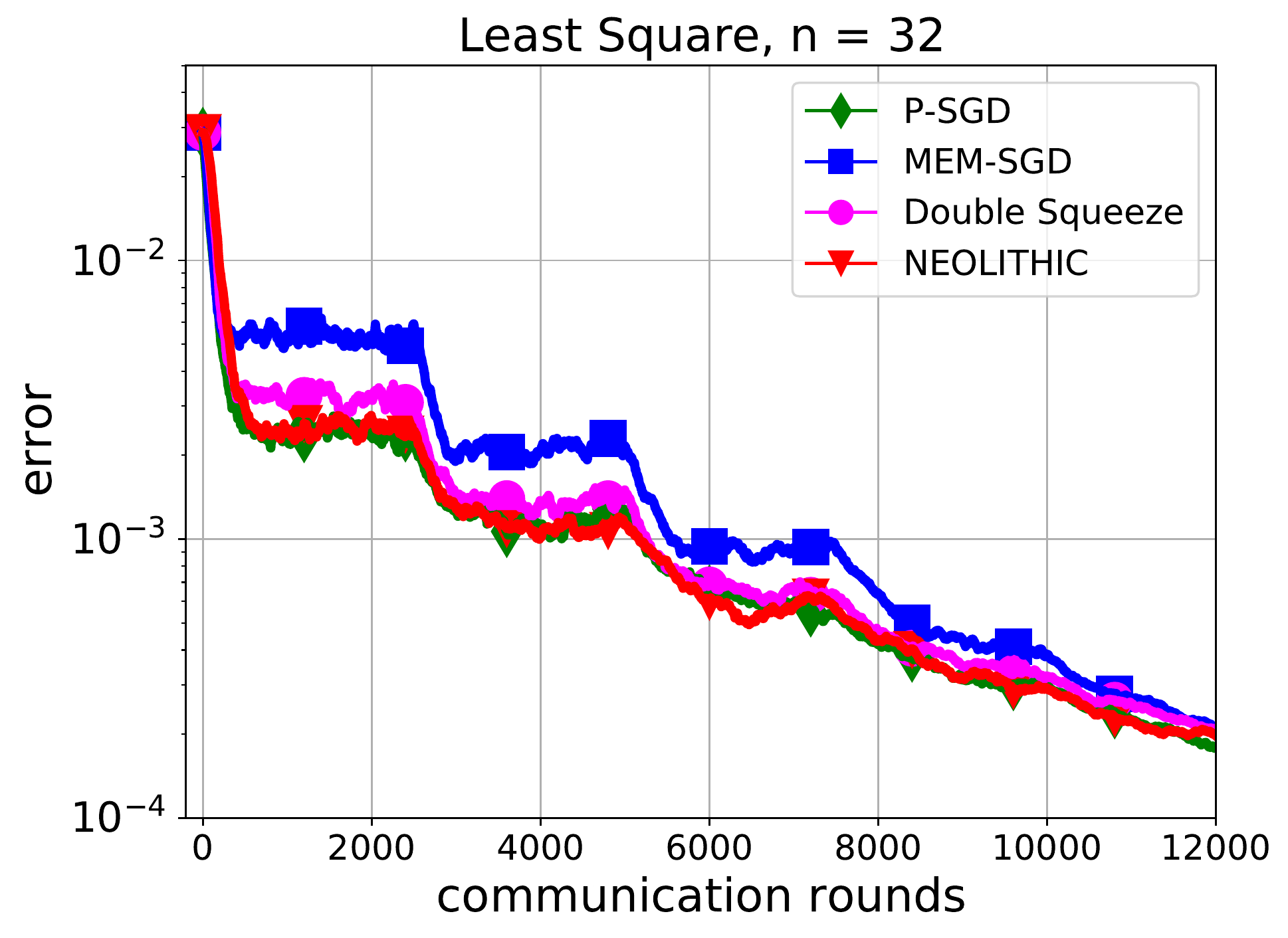}
		\includegraphics[width=0.4\textwidth]{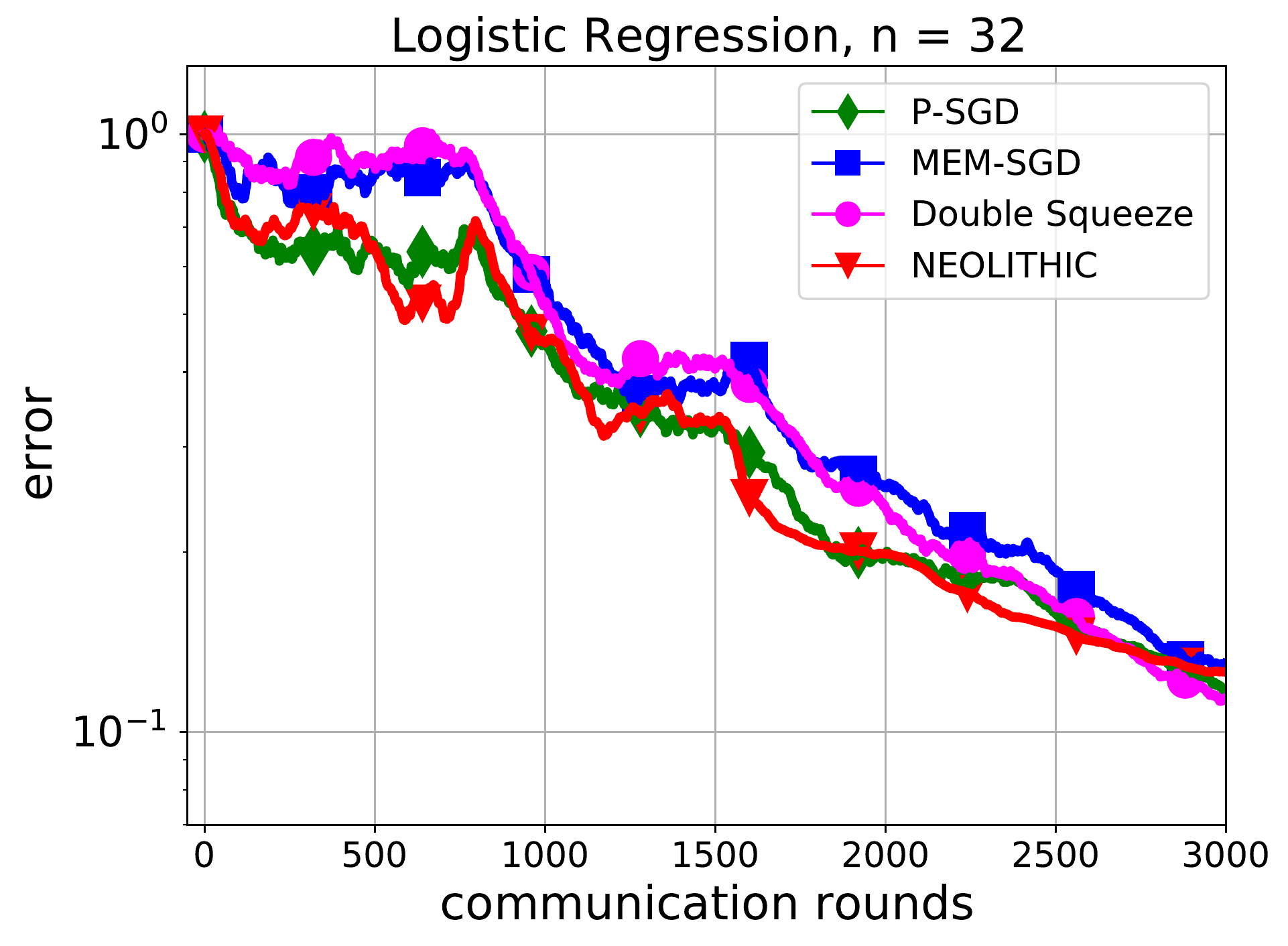}
\caption{\small Convergence results on synthetic problems in terms of  $\EE[\|\nabla f(x)\|^2]$ versus communication rounds.}
	\label{fig:synthetic}
\end{figure}

\paragraph{Logistic regression. }
We consider the following logistic regression problem:
\[
\min _{x \in \mathbb{R}^{d}}\quad  \frac{1}{n} \sum_{i=1}^{n} f_{i}(x) \quad\text{ where }\quad f_{i}(x)=\frac{1}{M} \sum_{m=1}^{M} \ln \left(1+\exp \left(-y_{i, m} h_{i, m}^{\top} x\right)\right)
\]
where $\{h_{i,m},y_{i,m}\}_{m=1}^M$ is the training dateset held by node $i$ in which $h_{i,m}\in\RR^d$ with $d=30$ is a feature vector while $y_{i,m}\in\{-1,+1\}$ is the corresponding label. Similar to the least square problem, each node $i$ is associated with a local solution $x_i^\star$. We generate each feature vector $h_{i,m}\sim \cN(0,I_d)$, and label $y_{i,m} = 1$ with probability $1/(1+\exp(-y_{i,m}h_{i,m}^\top x_i^\star))$; otherwise $y_{i,m}=-1$.
At each query, every node will randomly sample a datapoint $(h_{i,m},y_{i,m})$  to evaluate the stochastic gradient.
We adopt the rand-1 compressor and set the number of rounds $R=4$ for NEOLITHIC.
We use stair-wise decaying
learning rates in which the learning rates are divided by every $800$ communication rounds. Each
algorithm is averaged with $20$ trials.
The result is shown in Figure~\ref{fig:synthetic} (right). Again, NEOLITHIC outperforms MEM-SGD and Double-Squeeze in convergence rate, and it performs closely to P-SGD.

\subsection{Deep Learning Tasks}
\paragraph{Implementation details. }We implement all compression algorithms with PyTorch \cite{paszke2019pytorch} 1.8.2 using NCCL 2.8.3 (CUDA 10.1) as the communication backend. For P-SGD, we used PyTorch’s native Distributed Data Parallel (DDP) module. All deep learning training scripts in this section run on a server with 8 NVIDIA V100 GPUs in our cluster and each GPU is treated as one worker.

\begin{figure}[t!]
	\centering
		\includegraphics[width=0.4\linewidth]{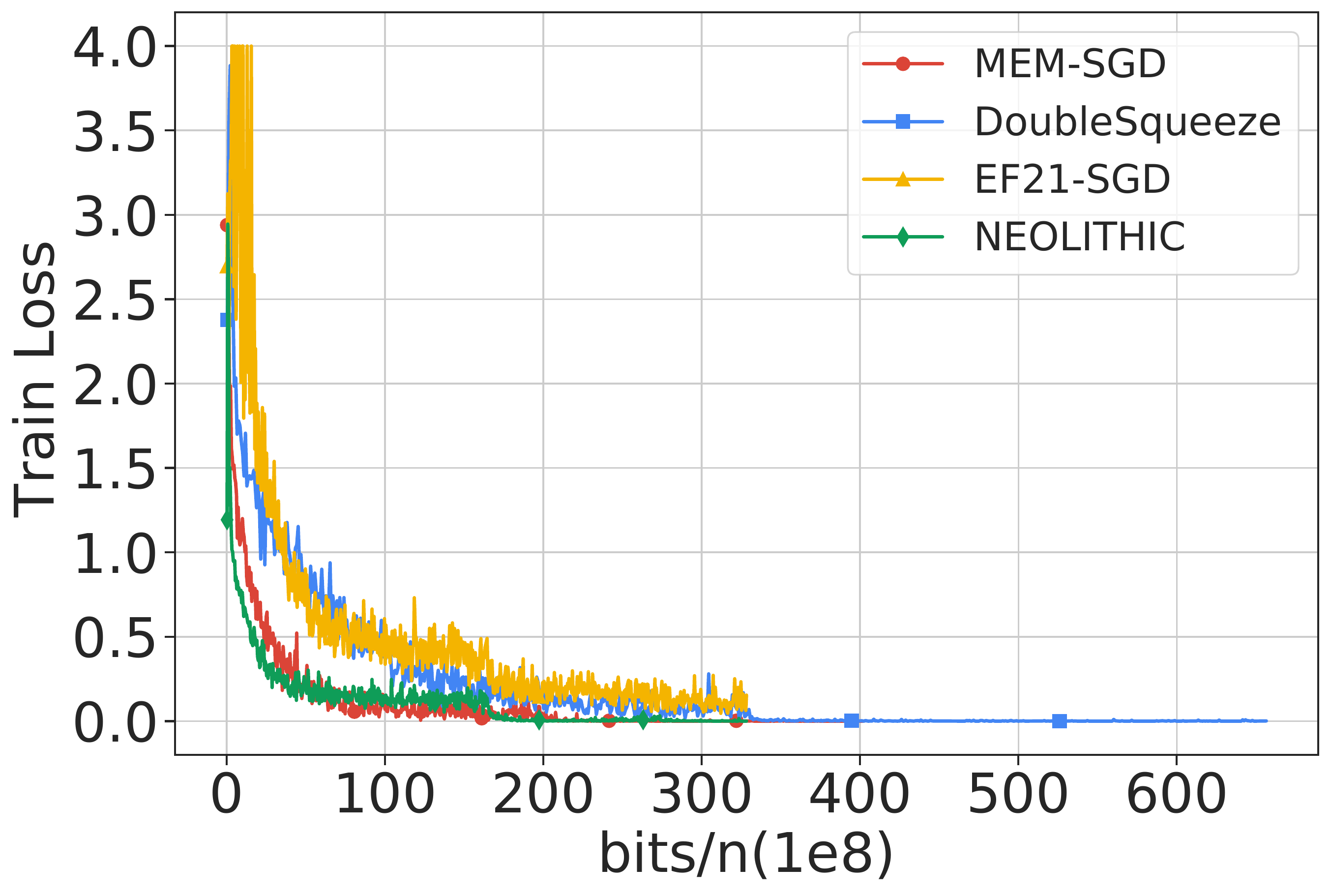}
		\includegraphics[width=0.4\linewidth]{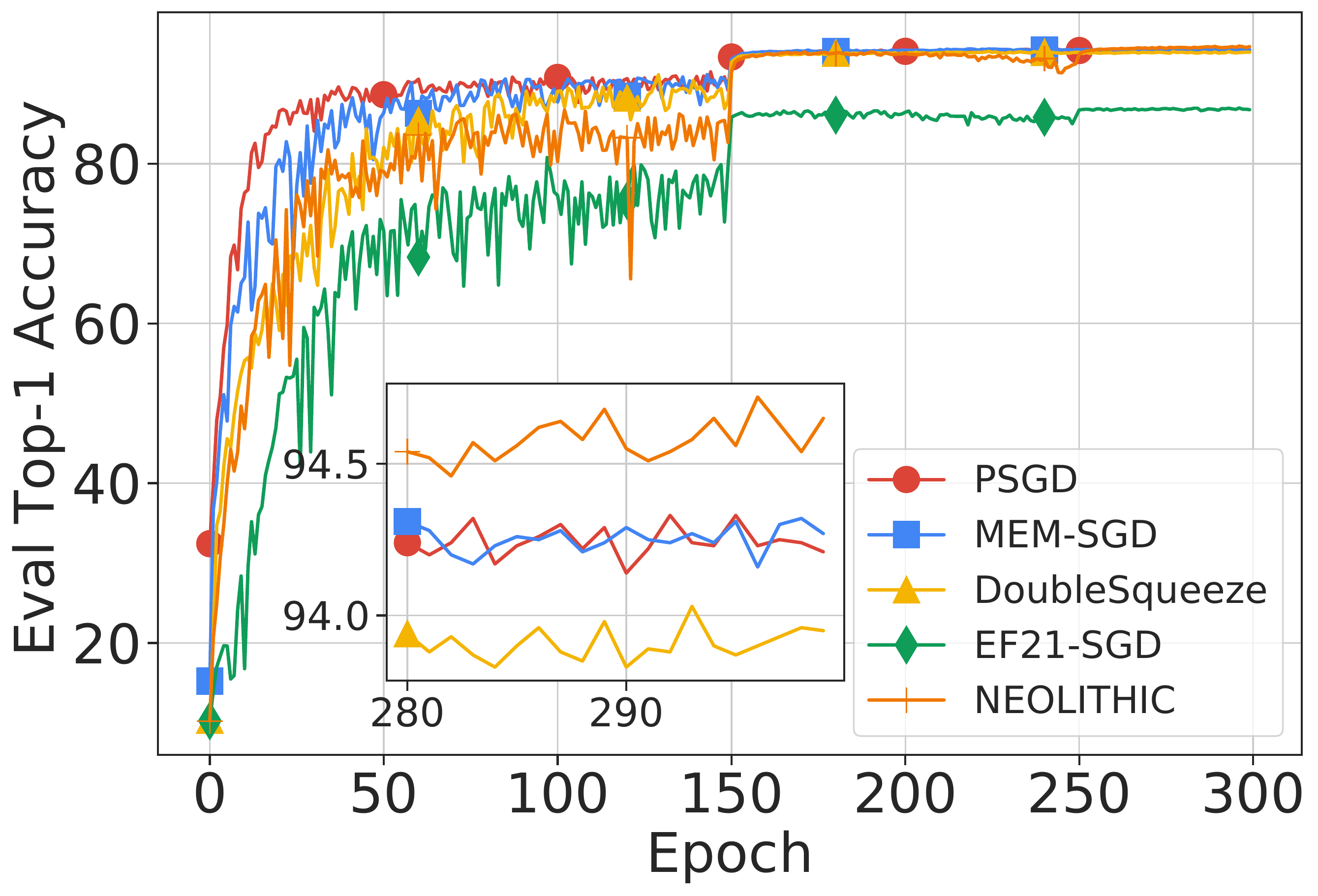}
\caption{\small Convergence results on the CIFAR-10 in terms of training loss and validation accuracy.}
	\label{fig:cifar10}
\end{figure}

\paragraph{Image classification. }We investigate the performance of the aforementioned methods with \cifar\cite{krizhevsky2009learning} dataset. For \cifar dataset, it consists of 50,000 training images and 10,000 validation images categorized in 10 classes. 
We utilize two common variants of ResNet \cite{he2016deep} model on \cifar (ResNet-20 with roughly 0.27M parameters and ResNet-18 with 11.17M parameters). We train total 300 epochs and set the batch size to 128 on every worker. The learning rate is set to 5e-3 for single worker and warmed up in the first 5 epochs and decayed by a factor of 10 at 150 and 250-th epoch.







\begin{table}
\tablefontsize
\caption{\small Accuracy comparison with different algorithms on CIFAR-10 (8 workers, 5\% compression ratio).}

\begin{center}
\begin{small}
\begin{sc}

\begin{tabular}{cccc}
\toprule

Methods   & ResNet18     & ResNet20   \\

\midrule

PSGD & 93.99 ± 0.52 & 91.62 ± 0.13  \\ 
MEM-SGD & 94.35 ± 0.01 & 91.27 ± 0.08 \\
Double-Squeeze & 94.11 ± 0.14 & 90.73 ± 0.02   \\
EF21-SGD & 87.37 ± 0.49 & 65.82 ± 4.86 \\
NEOLITHIC & \textbf{94.63 ± 0.09} & \textbf{91.43 ± 0.10}   \\

\bottomrule
\end{tabular}
\vskip -10mm
\end{sc}
\end{small}
\end{center}
\label{table-gc-method-topk5}
\end{table}
All experiments were repeated three times with different seeds. For \ours, we set $R=2$. Following previous works \cite{Tang2019DoubleSqueezePS}, we use top-$k$ compressor with different compression ratio to evaluate the performance of the aforementioned methods. As shown in Figure~\ref{fig:cifar10},  Tables~\ref{table-gc-method-topk5} and Table~\ref{table-gc-method-topk1}, \ours consistently outperforms other compression methods and reaches the similar performance to \psgd. It is worth noting that EF21-SGD, while
guaranteed to converge with milder assumptions than ours, does not provide competitive performance in deep learning tasks listed in Tables  \ref{table-gc-method-topk5}, \ref{table-gc-method-topk1}, and \ref{tab:cifar10_heter_data}. We find our reported result for EF21-SGD is consistent with Figure~7 (the left plot) in \cite{Fatkhullin2021EF21WB}.

\begin{table}[h!]
\tablefontsize
\caption{\small Accuracy comparison with different algorithms on CIFAR-10 (8 workers, 1\% compression ratio).}

\begin{center}
\begin{small}
\begin{sc}
\begin{tabular}{ccc}
\toprule

Methods   & ResNet18     & ResNet20   \\

\midrule

MEM-SGD & 93.99 ± 0.11 & 89.68 ± 0.17  \\
Double-Squeeze & 93.54 ± 0.17 & 89.35 ± 0.04   \\
EF21-SGD & 67.78 ± 2.14 & 56.0 ± 2.257 \\
NEOLITHIC & \textbf{94.155 ± 0.10} & \textbf{89.82 ± 0.37}   \\

\bottomrule
\end{tabular}
\end{sc}
\end{small}
\end{center}
\label{table-gc-method-topk1}
\end{table}

\paragraph{The effect of compression ratio.} We also investigate the influence of different compression ratios. Table~\ref{table-gc-method-topk1} is with a compression ratio of 1\%, which indicates a harsher setting for compression methods. It is observed that \ours still outperforms other compression methods.

\paragraph{Performance with heterogeneous data. } We simulate data heterogeneity among workers via a Dirichlet distribution-based partition with parameter $\alpha$ controlling the data heterogeneity. The training data for a particular class tends to concentrate in a single node as $\alpha\rightarrow 0$, \emph{\ie} becoming more heterogeneous, while the homogeneous data distribution is achieved as $\alpha\rightarrow \infty$. We test $\alpha=1$ and $10$ in Table \ref{tab:cifar10_heter_data} for all compared methods as corresponding to a setting with high/low heterogeneity.

\begin{table}[t!]
    \centering
    \caption{\small Accuracy comparison on \cifar with heterogeneous data (ResNet-20).} 
    \vskip -1mm    
    \label{tab:cifar10_heter_data}
    \begin{small}
\begin{sc}
\setlength{\tabcolsep}{1.05mm}{
\begin{tabular}{cccccc}
\toprule
Methods  &   \memsgd & \ds & \ef & \ours  & C-Ratio \%       \\
\midrule
$\alpha=1$  & 85.42 ± 0.22 & 83.72 ± 0.17  & 38.5 ± 2.17 &  \textbf{85.47 ± 0.12} & 1 \\
$\alpha=10$  & 91.61 ± 0.19  & 91.25 ± 0.17 & 68.58 ± 0.13 &  \textbf{91.76 ± 0.13} &  1 \\

$\alpha=1$  & 86.43 ± 0.38  & 86.13 ± 0.21 & 72.65 ± 0.37 &  \textbf{87.17 ± 0.24} & 5 \\
$\alpha=10$  & 91.88 ± 0.23 & 91.66 ± 0.11 & 86.36 ± 0.15 &  \textbf{92.14 ± 0.22} &  5 \\
\bottomrule
\end{tabular}
}
\end{sc}
\end{small} 
\end{table}

\paragraph{Effects of accumulation rounds. } We also empirically evaluate the performance of NEOLITHIC with different choice of parameter $R$  in deep learning tasks. NEOLITHIC have slightly performance degradation in both compression scenarios as $R$ scales up. We conjecture that the gradient accumulation step, which amounts to using large-batch samples in gradient evaluation, can help in the optimization and training stage as proved in this paper, but it may hurt the generalization performance. We recommend using \ours in applications that are friendly to large-batch training.

\begin{table}[h!]
\centering

\caption{Effects of round numbers for \cifar dataset with ResNet-18}
\begin{small}
\begin{sc}
\setlength{\tabcolsep}{0.95mm}{
\begin{tabular}{ccccc}

\toprule
Rounds  & 2 & 3 & 4 & 5    \\
\midrule

\ours(5\%) & 94.63 ± 0.09 & 93.32 ± 0.08  & 92.55 ± 0.12  & 91.48 ± 0.18   \\

\ours(1\%) & 94.16 ± 0.10 & 93.15 ± 0.11 & 92.27 ± 0.08  &  91.32 ± 0.12  \\
\bottomrule
\end{tabular}}
\end{sc}
\end{small}
\label{table-accu-rounds}
\end{table}

\section{Conclusion}
This paper provides lower bounds for distributed algorithms with communication compression, whether the compression is unidirectional or bidirectional and unbiased or contractive. An algorithm called NEOLITHIC is introduced to match the lower bounds under the assumption of bounded gradient dissimilarity. Future directions include developing optimal algorithms without the assumption, as well as discovering additional compression properties that might produce a better lower bound.

\section*{Acknowledgements}
The authors are grateful to Professor Peter Richtarik from KAUST for the helpful discussions in the relation between FCC and EF21 and various other useful suggestions. 

\bibliography{reference}
\bibliographystyle{abbrv}

\allowdisplaybreaks
\appendix
\section{Lower Bounds}\label{app:lower-bounds}
In this section, we provide the proofs for Theorem \ref{thm:lower1} and \ref{thm:lower3}. To proceed, we first introduce the zero-respecting property and zero-chain functions.

\subsection{Zero-Respecting Algorithms}\label{app:zero-property}
Let $[x]_j$ denote the $j$-th coordinate of a vector $x\in\RR^d$ for $j=1,\dots,d$, and define $\prog(x)$ as
\begin{equation}\label{eqn:prog-def}
    \prog(x):=\begin{cases}
    0 & \text{if $x=0$};\\
    \max_{1\leq j\leq d}\{j:[x]_j\neq 0\}& \text{otherwise}.
    \end{cases}
\end{equation}
Similarly, for a set of points $\cX=\{x_1,x_2,\dots\}$, we define $\prog(\cX):=\max_{x\in\cX}\prog(x)$. Apparently, it holds that $\prog(\cX\cup\cY)=\max\{\prog(\cX),\prog(\cY)\}$ for any $\cX,\,\cY\subseteq\RR^d$, and $\prog(\cX)\leq \prog(\widetilde{\cX})$ for any $\cX\subseteq\widetilde{\cX}\subseteq\RR^d$.

Now we consider the setup of distributed learning with communication compression. For each worker $i$ and $t\geq 1$, we let $y^{(t)}_i$ be the $t$-th variable at which worker $i$ queries its gradient oracle (with independent randomness $\zeta^{(t)}_i$) during the optimization procedure. 
We also let $x^{(t)}_i$ be the local model that worker $i$ produces after the $t$-th query. Note that $y^{(t)}_i$ is not necessarily the last-step local model $x^{(t-1)}_i$; it can be any auxiliary vector. 

Between the $(t-1)$-th and $t$-th gradient queries, each worker is allowed to communicate with the server by transmitting (compressed) vectors. For worker $i$, we let $\cV^{(t,\star)}_{\mathrm{w}_i\rightarrow \mathrm{s}}$ denote the set of vectors
that  worker $i$ aims to send to the server, \emph{\ie}, the vectors before compression. 
Due to  communication compression, the vectors received by the server from worker $i$, which we denote by $\cV^{(t)}_{\mathrm{w}_i\rightarrow \mathrm{s}}$, are the compressed version of $\cV^{(t,\star)}_{\mathrm{w}_i\rightarrow \mathrm{s}}$ with some underlying compressors $C_i$, \emph{\ie}, $\cV^{(t)}_{\mathrm{w}_i\rightarrow \mathrm{s}}\triangleq C_i(\cV^{(t,\star)}_{\mathrm{w}_i\rightarrow \mathrm{s}})$.
Note that $\cV^{(t)}_{\mathrm{w}_i\rightarrow \mathrm{s}}$ is a set that may include multiple vectors, and its cardinality equals the rounds of communication.
Here, for a compressor $C$ and a set of vectors $\cV^{(\star)}$, we use $C(\cV^{(\star)})$ to indicate the ``vector-wise'' compression over each vector in $\cV^{(\star)}$. After receiving the compressed vectors from all workers, the server will broadcast  some vectors back to all workers. We let   $\cV^{(t,\star)}_{\mathrm{s}\rightarrow \mathrm{w}}$  and  $\cV^{(t)}_{\mathrm{s}\rightarrow \mathrm{w}}$ denote the set of vectors that the server aims to send to workers and the ones received by workers, respectively. When algorithms conduct unidirectional compression, we always have $\cV^{(t)}_{\mathrm{s}\rightarrow \mathrm{w}}=\cV^{(t,\star)}_{\mathrm{s}\rightarrow \mathrm{w}}$. When bidirectional compression is conducted, we have $\cV^{(t)}_{\mathrm{s}\rightarrow \mathrm{w}}=C_0(\cV^{(t,\star)}_{\mathrm{s}\rightarrow \mathrm{w}})$ and $\cV^{(t)}_{\mathrm{s}\rightarrow \mathrm{w}}\neq \cV^{(t,\star)}_{\mathrm{s}\rightarrow \mathrm{w}}$ generally, depending on the behavior of the server-equipped compressor $C_0$.

Following the above description, we now extend the zero-respecting property \cite{Carmon2020LowerBF,Carmon2021LowerBF}, which firstly appears in single-node stochastic optimization, to distributed learning with communication compression:
\begin{definition}[Zero-respecting]\label{def:zero-repsect}
    We say a distributed algorithm $A$ is zero-respecting if for any $t\geq 1$ and $1\leq k\leq d$, the following requirements are satisfied:
\begin{enumerate}
    \item If worker $i$ queries at $y^{(t)}_i$ with $[y^{(t)}_i]_k\neq 0$, then one of the following must be true: 
    \begin{equation*}
        \begin{cases}
            \text{there exists some  $0\leq s<t$ such that $[x^{(s)}_i]_k\neq 0$};\\
            \text{there exists some $1\leq s< t$ such that $[O_i(y^{(s)}_i;\zeta^{(s)}_i)]_k\neq 0$};\\
            \text{there exists some $1\leq s< t$ such that worker $i$ has received some $v\in\cV_{\mathrm{s}\rightarrow\mathrm{w}}^{(s)}$ with $[v]_k\neq 0$};\\
            \text{there exists some $1\leq s<t$ such that worker $i$ has  compressed some  $v\in\cV_{\mathrm{w}_i\rightarrow\mathrm{s}}^{(s)}$ with $[v]_k\neq 0$}.
        \end{cases}
    \end{equation*}
    \item If the local model $x^{(t)}_i$ of worker $i$, after the $t$-th query, has $[x^{(t)}_i]_k\neq 0$, then one of the following must be true: 
    \begin{equation*}
        \begin{cases}
            \text{there exists some $0\leq s<t$ such that $[x^{(s)}_i]_k\neq 0$};\\
            \text{there exists some $1\leq s\leq t$ such that $[O_i(y^{(s)}_i;\zeta^{(s)}_i)]_k\neq 0$};\\
            \text{there exists some $1\leq s\leq t$ such that worker $i$ has received some $v\in\cV_{\mathrm{s}\rightarrow\mathrm{w}}^{(s)}$ with $[v]_k\neq 0$};\\
            \text{there exists some $1\leq s\leq t$ such that worker $i$ has  compressed some  $v\in\cV_{\mathrm{w}_i\rightarrow\mathrm{s}}^{(s)}$ with $[v]_k\neq 0$}.
        \end{cases}
    \end{equation*}
    \item If worker $i$ aims to send some $v\in\cV_{\mathrm{w}_i\rightarrow\mathrm{s}}^{(t,\star)}$ with $[v]_k\neq 0$ to the server,  then one of the following must be true: 
    \begin{equation*}
        \begin{cases}
            \text{there exits some $0\leq s<t$ such that $[x_i^{(s)}]_k\neq0$};\\
            \text{there exists some $1\leq s\leq  t$ such that $[O_i(y^{(s)}_i;\zeta^{(s)}_i)]_k\neq 0$};\\
            \text{there exists some $1\leq s< t$ such that worker $i$ has received some $v^\prime\in\cV_{\mathrm{s}\rightarrow\mathrm{w}}^{(s)}$ with $[v^\prime]_k\neq 0$};\\
            \text{there exists some $1\leq s<t$ such that worker $i$ has  compressed some  $v^\prime\in\cV_{\mathrm{w}_i\rightarrow\mathrm{s}}^{(s)}$ with $[v^\prime]_k\neq 0$}.
        \end{cases}
    \end{equation*}
    \item If the server aims to broadcast some $v\in\cV_{\mathrm{s}\rightarrow\mathrm{w}}^{(t,\star)}$ with $[v]_k\neq 0$ to workers, then one of the following must be true:
    \begin{equation*}
        \begin{cases}
            \text{there exists some $1\leq s\leq  t$ and $1\leq i\leq n$ such that }\\
            \text{\qquad \qquad the server has received some $v^\prime \in\cV_{\mathrm{w}_i\rightarrow\mathrm{s}}^{(s)}  $ with $[v^\prime]_k\neq 0$};\\
            \text{there exits some $1\leq s<t$ such that}\\
            \text{\qquad \qquad  the server has compressed and broadcast some  $v^\prime \in\cV_{\mathrm{s}\rightarrow\mathrm{w}}^{(s)} $  with $[v^\prime]_k\neq 0$}.
        \end{cases}
    \end{equation*}
\end{enumerate}
\end{definition}

In essence, the above zero-respecting property requires that any expansion of non-zero coordinates in $x^{(t)}_i$, $y^{(t)}_i$, and other related vectors in worker $i$ is  attributed to its historical local gradient updates, local compressions, or synchronization with the server.
Meanwhile, it also requires that any expansion of non-zero coordinate in vectors held in the server is due to its own compression operation, or receiving compressed messages from workers.

\begin{algorithm}[t]
	\caption{MEM-SGD}
	\label{alg:mem-sgd}
	\begin{algorithmic}
		\STATE \noindent {\bfseries Input:} Initialize $x^{(0)}$; compressors $\{C_i:1\leq i\leq n\}$; learning rate $\gamma$; $\er_i^{(0)}=0$, $\forall\, i\in[n]$
		\FOR{$t=0,1,\cdots, T$}
		\STATE  {\bfseries On all workers in parallel:}
		\STATE \quad Query stochastic gradient $\hat{g}_i^{(t)}=O_i(x^{(t)};\zeta_i^{(t)})$
		\STATE \quad Compress and send error compensated message $\tilde{g}_i^{(t)}=C_i(\gamma \hat{g}_i^{(t)}+\er_i^{(t)})$ to the server
		\STATE  {\bfseries On server:}
		\STATE \quad Aggregate messages and broadcast $\tilde{g}^{(t)}=\frac{1}{n}\sum_{i=1}^n\tilde{g}_i^{(t)}$ to all workers
		\STATE  {\bfseries On all workers in parallel:}
		\STATE \quad Update model parameter $x^{(t+1)}=x^{(t)}-\tilde{g}^{(t)}$ 
		\STATE \quad Update accumulated error $\er_i^{(t+1)}=\er_i^{(t)}+\gamma \hat{g}_i^{(t)}- \tilde{g}_i^{(t)}$
		\ENDFOR
	\end{algorithmic}
\end{algorithm}

To better illustrate the zero-respecting property, we take MEM-SGD \cite{Stich2018SparsifiedSW} as an example and show how it satisfies the zero-respecting property. MEM-SGD is listed in Algorithm \ref{alg:mem-sgd}, which conducts unidirectional compression. Following the notations in Definition \ref{def:zero-repsect}, for any $t\geq 1$ and $1\leq i\leq n$, we have for MEM-SGD
\begin{equation}
    y_i^{(t)}=x_i^{(t-1)}=x^{(t-1)};\quad \cV_{\mathrm{w}_i\rightarrow\mathrm{s}}^{(t,\star)}=\{\gamma \hat{g}_i^{(t-1)}+\er_i^{(t-1)}\}\quad \text{and}\quad \cV_{\mathrm{w}_i\rightarrow\mathrm{s}}^{(t)}=\{\tilde{g}_i^{(t-1)}\};\quad \cV_{\mathrm{s}\rightarrow\mathrm{w}}^{(t,\star)}= \cV_{\mathrm{s}\rightarrow\mathrm{w}}^{(t)}=\{\tilde{g}^{(t-1)}\}.
\end{equation}
Now we verify  Definition \ref{def:zero-repsect} one by one. 
\begin{enumerate}
    \item For point 1, for any $1\leq k\leq d$, $[y^{(t)}_i]_k\neq 0$ is equivalent to $[x^{(t-1)}_i]_k\neq 0$.
    \item For point 2, since $x^{(t)}_i=x^{(t)}=x^{(t-1)}-\gamma \tilde{g}^{(t-1)}=x^{(t-1)}_i-\gamma \tilde{g}^{(t-1)}$, where $\tilde{g}^{(t-1)}\in \cV_{\mathrm{s}\rightarrow\mathrm{w}}^{(t)}$ is received from the server, we have $[x^{(t)}_i]_k=[x^{(t-1)}_i]_k-\gamma [\tilde{g}^{(t-1)}]_k$. Therefore, $[x^{(t)}_i]_k\neq 0$ implies that one of $[x^{(t-1)}_i]_k$ and $[\tilde{g}^{(t-1)}]_k$ is non-zero.
    \item For point 3, it is easy to see that $\er_i^{(t-1)}=\sum_{s=0}^{t-2}(\gamma \hat{g}_i^{(s)}-\tilde{g}_i^{(s)})$ and we thus have $\gamma \hat{g}_i^{(t-1)}+\er_i^{(t-1)}=\gamma\sum_{s=0}^{t-1} \hat{g}_i^{(s)}-\sum_{s=0}^{t-2}\tilde{g}_i^{(s)}$, where $\hat{g}_i^{(s)}=O_i(x^{(s)};\zeta_i^{(s)})$ is a local gradient query and  $\tilde{g}_i^{(s)}\in \cV_{\mathrm{w}_i\rightarrow\mathrm{s}}^{(s)}$ is due to the internal compression within worker $i$.
     Therefore, $[\gamma \hat{g}_i^{(t-1)}+\er_i^{(t-1)}]_k\neq 0$ implies that
     one element in $\{[O_i(x^{(s)};\zeta_i^{(s)})]_k:0\leq s\leq t-1\}$ or $\{[\tilde{g}_i^{(s)}]_k:0\leq s\leq t-2\}$
     is non-zero.
    \item For point 4, since $\tilde{g}^{(t-1)}=\frac{1}{n}\sum_{i=1}^n\tilde{g}_i^{(t-1)}$ where  $\tilde{g}_i^{(t-1)}$ is received from worker $i$, we have $[\tilde{g}^{(t-1)}]_k\neq 0$, implying that one element in  $\{[\tilde{g}_i^{(t-1)}]_k:1\leq i\leq n\}$ is non-zero.
\end{enumerate}

While we only validate the zero-respecting property in MEM-SGD, one can easily follow the above procedure to verify  that other algorithms, including those listed in Table \ref{tab:introduction-comparison}, are zero-respecting.
In fact, to our knowledge, the zero-respecting property is satisfied by all existing distributed algorithms with communication compression.

\subsection{Zero-Chain Functions}
We next introduce zero-chain functions, which can be combined with the zero-respecting property discussed in Appendix \ref{app:zero-property} to prove the lower bounds for distributed learning under communication compression.
As described in \cite{Carmon2020LowerBF,Carmon2021LowerBF}, a zero chain function $f$ satisfies
\begin{equation*}
    \prog(\nabla f(x))\leq \prog(x)+1,\quad\forall\,x\in\RR^d,
\end{equation*}
which implies that, starting from $x=0$, a single gradient evaluation can only make at most one more coordinate of the model parameter $x$ be non-zero.

We state some key zero-chain functions that will be used to facilitate the analysis.
\begin{lemma}[Lemma 2 of \cite{Arjevani2019LowerBF}]\label{lem:basic-fun}
Let $[x]_j$  denote the $j$-th coordinate of a vector $x\in\RR^d$, and define function 
\begin{equation*}
    h(x):=-\psi(1) \phi([x]_{1})+\sum_{j=1}^{d-1}\Big(\psi(-[x]_j) \phi(-[x]_{j+1})-\psi([x]_j) \phi([x]_{j+1})\Big)
\end{equation*}
where for $\forall\, z \in \mathbb{R},$
$$
\psi(z)=\begin{cases}
0 & z \leq 1 / 2; \\
\exp \left(1-\frac{1}{(2 z-1)^{2}}\right) & z>1 / 2, 
\end{cases} \quad \quad \mbox{and} \quad \quad  \phi(z)=\sqrt{e} \int_{-\infty}^{z} e^{-\frac{1}{2} t^{2}} \mathrm{d}t.
$$
The function $h(x)$ satisfies the following properties:
\begin{enumerate}
    \item $h$ is zero-chain, \emph{\ie}, $\prog(\nabla h(x))\leq \prog(x)+1$ for all $x\in\RR^d$.
    \item $h(x)-\inf_{x} h(x)\leq \Delta_0 d$, $\forall\,x\in\RR^d$ with $\Delta_0=12$.
    \item $h$ is $L_0$-smooth with $L_0=152$.
    \item $\|\nabla h(x)\|_\infty\leq G_\infty $, $\forall\,x\in\RR^d$ with $G_\infty = 23$.
    \item $\|\nabla h(x)\|_\infty\ge 1 $ for any $x\in\RR^d$ with $[x]_d=0$. 
\end{enumerate}
\end{lemma}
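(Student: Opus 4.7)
The plan is to verify the five listed properties of $h$ one at a time, exploiting the explicit analytical structure of $\psi$ and $\phi$. The key structural facts I would record first are: $\psi \equiv 0$ on $(-\infty,1/2]$ together with $\psi',\psi'' \equiv 0$ there (so $\psi$ is $C^\infty$ at $z=1/2$); $\psi$ is monotonically increasing on $[1/2,\infty)$ with $\psi(1)=1$ and $\sup_{z}\psi(z)=e$; and $\phi$ is strictly increasing with $\phi'(z)=\sqrt{e}\,e^{-z^2/2}\le \sqrt{e}$, $|\phi''(z)|=|z|\phi'(z)$, and $0<\phi(z)\le \sqrt{2\pi e}$. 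These uniform one-dimensional bounds are what all five claims ultimately reduce to.

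Properties 1 and 2 are then essentially bookkeeping. For the zero-chain property, for any $k > \prog(x)+1$ both $[x]_{k-1}$ and $[x]_k$ are zero, so $\psi(\pm[x]_{k-1})=0$ kills the contribution of the $(k-1)$-st summand to $\partial_k h$, and $\psi'(\pm[x]_k)=0$ kills the contribution of the $k$-th summand; these are the only two summands that depend on $[x]_k$, so $[\nabla h(x)]_k=0$. For property 2, each of the $2(d-1)+1$ summands is bounded in absolute value by a universal constant (a product of bounded $\psi$ and $\phi$), so a trivial telescoping bound gives $h(x)-\inf h \le \Delta_0 d$ with some explicit $\Delta_0$. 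Properties 3 and 4 also follow by direct computation: since $h$ only couples consecutive coordinates, $\nabla^2 h$ is tridiagonal, and plugging the uniform bounds on $\psi,\psi',\psi'',\phi,\phi',\phi''$ into each entry gives a constant bound. Then $\|\nabla^2 h\|_{\mathrm{op}} \le 3\max_{i,j}|[\nabla^2 h]_{ij}|$ and a similar bookkeeping bounds $\|\nabla h\|_\infty$ coordinatewise.

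The main obstacle is property 5: showing $\|\nabla h(x)\|_\infty \ge 1$ whenever $[x]_d=0$. My approach is to set $j^\star := \min\{k\ge 1:\ |[x]_k|<1/2\}$, which is well-defined because $[x]_d=0$, and to examine $[\nabla h(x)]_{j^\star}$. If $j^\star=1$, then $\psi'(\pm[x]_1)=0$, so only the forcing term $-\psi(1)\phi'([x]_1)$ survives; using $\psi(1)=1$ and $|[x]_1|<1/2$ gives $|\phi'([x]_1)|\ge \sqrt{e}\,e^{-1/8}>1$. If $j^\star>1$, then $|[x]_{j^\star-1}|\ge 1/2$ while $|[x]_{j^\star}|<1/2$; in the latter range $\psi'(\pm[x]_{j^\star})=0$, so the $j^\star$-th summand contributes nothing and only the $(j^\star-1)$-st summand matters, yielding
\[
[\nabla h(x)]_{j^\star} = -\psi(-[x]_{j^\star-1})\phi'(-[x]_{j^\star}) - \psi([x]_{j^\star-1})\phi'([x]_{j^\star}).
\]
The delicate point is the sign alignment: since $\phi'$ is strictly positive and since exactly one of $\psi(\pm[x]_{j^\star-1})$ is nonzero (because $|[x]_{j^\star-1}|\ge 1/2$ forces $\psi$ to be active on only one sign), the two terms cannot cancel. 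A short case split on the sign of $[x]_{j^\star-1}$, combined with the uniform lower bound $\phi'([x]_{j^\star})\ge \sqrt{e}\,e^{-1/8}$ on $|[x]_{j^\star}|<1/2$ and the activation bound $\psi(|[x]_{j^\star-1}|)\ge \psi(1/2+0^+)$, then pins $|[\nabla h(x)]_{j^\star}|$ above $1$ after the constants in $\psi$ and $\phi$ are chosen. Making the threshold $1/2$ versus $1$ line up cleanly with the explicit constants in $\psi(1)=1$ and $\phi'(0)=\sqrt{e}$ is the one place where I expect to have to tune the argument carefully.
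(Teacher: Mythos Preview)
The paper does not prove this lemma; it simply cites it as Lemma~2 of Arjevani et al.\ (2019), so there is no in-paper argument to compare against. Your outline for properties 1--4 is the standard one and is fine.

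For property~5, however, your choice of threshold $1/2$ in the definition of $j^\star$ creates a genuine gap. You write that the relevant lower bound is $\psi(|[x]_{j^\star-1}|)\ge \psi(1/2+0^+)$, but $\psi(1/2^+)=0$: as $z\downarrow 1/2$ we have $(2z-1)^{-2}\to\infty$ and hence $\psi(z)\to 0$. So this bound is vacuous, and indeed one can take $[x]_{j^\star-1}=1/2$, $[x]_{j^\star}=0$ to make all four terms in $[\nabla h(x)]_{j^\star}$ vanish simultaneously. The difficulty is not a matter of ``tuning constants''; threshold $1/2$ simply cannot work.

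The fix is to use threshold $1$, i.e.\ $j^\star=\min\{k:\ |[x]_k|<1\}$, which is still well-defined because $[x]_d=0$. You hesitated to do this because then $\psi'(\pm[x]_{j^\star})$ need not vanish, and you worried about cancellation from the $j^\star$-th summand. But there is no cancellation: since $\psi,\psi',\phi,\phi'\ge 0$ everywhere, every one of the four terms in
\[
[\nabla h(x)]_{j^\star}=-\psi(-[x]_{j^\star-1})\phi'(-[x]_{j^\star})-\psi([x]_{j^\star-1})\phi'([x]_{j^\star})-\psi'(-[x]_{j^\star})\phi(-[x]_{j^\star+1})-\psi'([x]_{j^\star})\phi([x]_{j^\star+1})
\]
is nonpositive. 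Hence $|[\nabla h(x)]_{j^\star}|$ is at least the absolute value of any single term. Taking (say) $[x]_{j^\star-1}\ge 1$, the second term alone gives $\psi([x]_{j^\star-1})\phi'([x]_{j^\star})\ge \psi(1)\cdot \sqrt{e}\,e^{-1/2}=1$. The $j^\star=1$ case is handled identically using the forcing term $-\psi(1)\phi'([x]_1)$. This sign observation is the missing ingredient in your plan.
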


\begin{lemma}\label{lem:basic-fun2}
If we let functions 
\begin{equation*}
    h_1(x):=-2\psi(1) \phi([x]_{1})+2\sum_{j \text{ even, } 0< j<d}\Big(\psi(-[x]_j) \phi(-[x]_{j+1})-\psi([x]_j) \phi([x]_{j+1})\Big)
\end{equation*}
and 
\begin{equation*}
    h_2(x):=2\sum_{j \text{ odd, } 0<j<d}\Big(\psi(-[x]_j) \phi(-[x]_{j+1})-\psi([x]_j) \phi([x]_{j+1})\Big),
\end{equation*}
then $h_1$ and $h_2$ satisfy  the following properties:
\begin{enumerate}
    \item $\frac{1}{2}(h_1+h_2)=h$, where $h$ is defined in Lemma \ref{lem:basic-fun}.
    \item $h_1$ and $h_2$ are zero-chain, \emph{\ie}, $\prog(\nabla h_i(x))\leq \prog(x)+1$ for all $x\in\RR^d$ and $i=1,2$. Furthermore, if $\prog(x)$ is odd, then $\prog(\nabla h_1(x))\leq \prog(x)$; if $\prog(x)$ is even, then $\prog(\nabla h_2(x))\leq \prog(x)$.
    \item $h_1$ and $h_2$ are also $L_0$-smooth with ${L_0}=152$. 
\end{enumerate}
\end{lemma}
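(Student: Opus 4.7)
The plan is to verify the three stated properties in order, leveraging the already-proven structure of $h$ in Lemma \ref{lem:basic-fun}.

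\textbf{Property 1} is an algebraic identity. Splitting the single sum $\sum_{j=1}^{d-1}$ appearing in $h$ by the parity of $j$ yields an even-$j$ piece and an odd-$j$ piece. Combining the even-$j$ piece with the leading term $-\psi(1)\phi([x]_1)$ recovers exactly $\frac{1}{2}h_1$ (the factor of $2$ in the definitions of $h_1$ and $h_2$ is there precisely for this reason), and the odd-$j$ piece recovers $\frac{1}{2}h_2$.

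\textbf{Property 2} rests on the fact $\psi(z)=\psi^\prime(z)=0$ for $z \leq 1/2$ that already underlies the zero-chain property of $h$. Fix $p=\prog(x)$ and examine $[\nabla h_1(x)]_k$ for $k \geq p+1$. In $h_1$ the only summands depending on coordinate $k$ come from the term $j=k$ (which requires $k$ even to be in the sum) and the term $j=k-1$ (which requires $k$ odd), plus the leading term when $k=1$. The contribution from $j=k$ vanishes because $[x]_k=0$ forces $\psi^\prime(\pm[x]_k)=0$; the contribution from $j=k-1$ vanishes whenever $k\geq p+2$ because then $[x]_{k-1}=0$ forces $\psi(\pm[x]_{k-1})=0$. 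Hence only $k=p+1$ can produce a nonzero entry, and only via $j=p$, which requires $p$ to be even. Consequently $\prog(\nabla h_1(x))\leq \prog(x)+1$ in general, with the strict inequality $\prog(\nabla h_1(x)) \leq \prog(x)$ whenever $\prog(x)$ is odd. The symmetric argument for $h_2$, using that its sum runs over odd $j$, produces the analogous claim when $\prog(x)$ is even.

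\textbf{Property 3} exploits the key structural observation that the pairs $\{(j,j+1): j\text{ even},\, 0<j<d\}$ are pairwise disjoint, and likewise for odd $j$. Setting $g(z_1,z_2):=\psi(-z_1)\phi(-z_2)-\psi(z_1)\phi(z_2)$, the Hessian of $h_1$ is block-diagonal: it is the direct sum of the $1\times 1$ entry $-2\psi(1)\phi^{\prime\prime}([x]_1)$ and the $2\times 2$ blocks $2\nabla^2 g([x]_j,[x]_{j+1})$ for even $j$, so its operator norm equals the maximum over these blocks. For the original $h$, whose summands overlap on consecutive pairs, a standard quadratic-form argument (each coordinate lies in at most two blocks) gives $\|\nabla^2 h(x)\| \leq 2\max_j \|\nabla^2 g([x]_j,[x]_{j+1})\|$, so Lemma \ref{lem:basic-fun} implies the per-block bound $\max_j \|\nabla^2 g([x]_j,[x]_{j+1})\| \leq L_0/2$. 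Multiplying this per-block bound by the explicit factor of $2$ in the definition of $h_1$ recovers $\|\nabla^2 h_1\| \leq L_0$; the leading-term $1\times 1$ block is absorbed using $\psi(1)\leq 1$ and explicit bounds on $\phi^{\prime\prime}$. The argument for $h_2$ is identical. The main obstacle is exactly this constant bookkeeping: matching the value $L_0=152$ requires extracting the per-block bound implicit in the proof of Lemma \ref{lem:basic-fun} and verifying the leading-term block does not inflate it, which is routine but must be done carefully.
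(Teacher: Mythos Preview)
Properties 1 and 2 are handled correctly and match the paper's argument, which likewise appeals to the definitions and to the vanishing of all derivatives of $\psi$ on $(-\infty,1/2]$.

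For Property 3 your route differs from the paper's and contains a genuine logical gap. The block-diagonal observation is correct (sharper than the paper's ``tridiagonal''), so $\|\nabla^2 h_k\|_2$ is indeed the maximum of the block norms $2\|\nabla^2 g([x]_j,[x]_{j+1})\|$ together with the leading $1\times1$ entry. The flaw is the deduction ``$\|\nabla^2 h\|\le 2\max_j\|\nabla^2 g\|$ and $\|\nabla^2 h\|\le L_0$, hence $\max_j\|\nabla^2 g\|\le L_0/2$'': both premises are upper bounds on the \emph{same} quantity $\|\nabla^2 h\|$ and jointly say nothing about the size of $\|\nabla^2 g\|$. Lemma~\ref{lem:basic-fun} used as a black box cannot supply a per-block constant. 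The paper instead bounds $\|\nabla^2 h_k\|_2\le\|\nabla^2 h_k\|_1$ via the Schur test and then estimates each column sum of $\nabla^2 h_k$ directly from the explicit pointwise bounds on $\psi,\psi',\psi'',\phi,\phi',\phi''$ (those of Observation~2 in \cite{Arjevani2019LowerBF}). If you replace your backwards inference with a direct computation of $\|\nabla^2 g\|$ from those same derivative bounds, your block-diagonal reduction becomes essentially equivalent to the paper's column-sum estimate; the gap is only that the per-block constant must be computed, not inherited from Lemma~\ref{lem:basic-fun}.
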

\begin{proof}
The first property follows the definition of $h_1$, $h_2$, and $h$. The second property follows Lemma 1 of \cite{Carmon2020LowerBF} that $\psi^{(m)}(0)=0$ for any $m\in\mathbb{N}$. Now we prove the third property. Noting that the Hessian of $h_k$ for $k=1,2$ is tridiagonal and symmetric, we have, by the Schur test, for any $x\in\RR^d$ and $k=1,2$ that 
\begin{align}
    \|\nabla^2 h_k(x)\|_2\leq&\sqrt{\|\nabla^2 h_k(x)\|_1\|\nabla^2 h_k(x)\|_\infty}=\|\nabla^2 h_k(x)\|_1.\label{eqn:jvoqfqfvcdv}
\end{align}
Furthermore, it is easy to verify that 
\begin{align}
    &\|\nabla^2 h_k(x)\|_1\nonumber\\
    \leq& 2\max\left\{\sup _{z \in \mathbb{R}}\left|\psi^{\prime \prime}(z)\right| \sup _{z \in \mathbb{R}}|\phi(z)|,\sup _{z \in \mathbb{R}}\left|\psi(z)\right| \sup _{z \in \mathbb{R}}|\phi^{\prime \prime}(z)|\right\}+2 \sup _{z \in \mathbb{R}}\left|\psi^{\prime}(z)\right| \sup _{z \in \mathbb{R}}\left|\phi^{\prime}(z)\right|\leq 152,\label{eqn:vjowemfqvgdwv}
\end{align}
where $\psi^{\prime}(z)$ and $\psi^{\prime \prime}(z)$ are the first- and second-order derivative of $\psi(z)$, respectively, $\phi^{\prime}(z)$ and $\phi^{\prime \prime}(z)$ are the first- and second-order derivative of $\phi(z)$, respectively, and the last inequality follows Observation 2 of \cite{Arjevani2019LowerBF} that 
\begin{equation*}
    0 \leq \psi \leq e, \;\; 0 \leq \psi^{\prime} \leq \sqrt{54 / e}, \;\;\left|\psi^{\prime \prime}\right| \leq 32.5,\;\; 0 \leq \phi \leq \sqrt{2 \pi e}, \;\; 0 \leq \phi^{\prime} \leq \sqrt{e}\text{ \;and }\left|\phi^{\prime \prime}\right| \leq 1.
\end{equation*}
Combining \eqref{eqn:vjowemfqvgdwv} with \eqref{eqn:jvoqfqfvcdv}, we know $h_k$ is $L_0$-smooth for $k=1,2$.
\end{proof}

\subsection{Proof of Theorem \ref{thm:lower1}}\label{app:proof-lb-1}
Without loss of generality, we assume algorithms to start from $x^{(0)}=0$.
We prove the two terms in the lower bound established in Theorem \ref{thm:lower1} separately by constructing two hard-to-optimize examples.
The construction of the example, for each term in \eqref{eqn:lower-bound1},  can be conducted in three steps: 1) constructing local functions $\{f_i\}_{i=1}^n$ by following Lemma \ref{lem:basic-fun} and \ref{lem:basic-fun2}; 2) constructing compressors $\{C_i\}_{i=1}^n\subseteq \cU_\omega$ and independent oracles $\{O_i\}_{i=1}^n\subseteq \cO_{\sigma^2}$ that hamper algorithms to expand the non-zero coordinates of model parameters; 3) establishing a limitation, in terms of the non-zero coordinates of model parameters, 
for zero-respecting algorithms utilizing the predefined compression protocol 
with $T$ gradient queries and compressed communications on each worker, and finally we translate this limitation into the lower bound of convergence rate.

\paragraph{Example 1. } 
The proof of the first term $\Omega((\frac{\Delta L\sigma^2}{nT})^\frac{1}{2})$ is adapted from the first example in proving Theorem 1 of \cite{Lu2021OptimalCI}. 

(Step 1.) Let $f_i=L\lambda^2h(x/\lambda)/L_0$, $\forall\,i=1,\dots,n$ be homogeneous and hence $f=L\lambda^2h(x/\lambda)/L_0$ where $h$ is defined in Lemma \ref{lem:basic-fun}  and $\lambda>0$ is to be specified. Since $\nabla^2 f_i=L\nabla^2 h/L_0$ and $h$ is $L_0$-smooth (Lemma \ref{lem:basic-fun}), we know $f_i$ is $L$-smooth for any $\lambda>0$. By Lemma \ref{lem:basic-fun}, we have
\begin{equation*}
    f(0)-\inf_x f(x)=\frac{L\lambda^2}{L_0}(h(0)-\inf_x h(x)) {\leq}\frac{L\lambda^2\Delta_0d}{L_0}. 
\end{equation*}
Therefore, to ensure $f_i\in\cF_{\Delta, L}$, it suffices to let 
\begin{equation}\label{eqn:jgowemw}
    \frac{L\lambda^2\Delta_0d}{L_0}\leq \Delta, \quad \text{\emph{\ie},}\quad d\lambda^2\leq \frac{L_0 \Delta}{L\Delta_0}.
\end{equation}

(Step 2.) We assume all compressors $\{C_i\}_{i=1}^n$ to be identities, meaning that there is no compression error in the entire optimization procedure. It naturally follows that  $\{C_i\}_{i=1}^n\subseteq \cU^\omega$ for any $\omega\geq 0$. We construct the stochastic gradient oracle $O_i$ on worker $i$, $\forall\,i=1,\dots,n$ as follows:
\begin{equation*}
    [O_i(x;\zeta)]_j=[\nabla f_i(x)]_j\left(1+\mathds{1}{\{j>\prog(x)\}}\left(\frac{\zeta}{p}-1\right)\right), \forall\,x\in\RR^d,\,j=1,\dots,d
\end{equation*}
with random variable $\zeta\sim \text{Bernoulii}(p)$ independent of $x$ and $i$, and $p\in(0,1)$ is to be specified.
The oracle $O_i$ has a chance 
to make the entry $[\nabla f_i(x)]_{\prog(x)+1}$ zero. Therefore, algorithms are hampered by oracle $O_i$ to reach more non-zero coordinates. Formally we have 
\begin{equation}\label{eqn:gjoemoqwfq}
    \prog(O_i(x;\zeta))\leq \prog(x)+\mathds{1}(\{\zeta=1\})
\end{equation}
where $\mathds{1}(E)$ is the indicator of event $E$.

It is easy to see that $O_i$ is an unbiased stochastic gradient oracle. Moreover, since $f_i$ is zero-chain, by \eqref{eqn:gjoemoqwfq} and the definition of $O_i$,  we have
\begin{align*}
\EE[\|[O_i(x;\zeta)]-\nabla f_i(x)\|^2]&=|[\nabla f_i(x)]_{\prog(x)+1}|^2\EE\left[\left(\frac{\zeta}{p}-1\right)^2\right]=|[\nabla f_i(x)]_{\prog(x)+1}|^2\frac{1-p}{p}\\
&\leq \|\nabla f_i(x)\|_\infty^2\frac{1-p}{p}
\leq \frac{L^2\lambda^2(1-p)}{L_0^2p}\|\nabla h(x)\|_\infty^2\\
&\overset{\text{Lemma \ref{lem:basic-fun}}}{\leq } \frac{L^2\lambda^2(1-p)G_\infty^2}{L_0^2p}.
\end{align*}
Therefore, to ensure $O_i\in\cO_{\sigma^2}$, it suffices to let 
\begin{equation}\label{eqn:gjvownefoq}
     p=\min\{\frac{L^2\lambda^2G_\infty^2}{L_0^2\sigma^2},1\}.
\end{equation}

(Step 3.) 
Let $y^{(t)}_i$, $\forall\,t\geq 1$ and $1\leq i\leq n$, be the $t$-th variable at which worker $i$ queries gradient from oracle $O_i$, and let $x^{(t)}_i$ be the local model it
produces after the $t$-th query. 
Let 
\begin{equation*}
    \hat{x}\in\mathrm{span}\left(\left\{x^{(t)}_i:0\leq t\leq T,1\le i\leq n\right\}\right),
\end{equation*}
be the final algorithm  output after $T$ gradient queries on each worker.
Since algorithms satisfy the zero-respecting property, as discussed in Appendix \ref{app:zero-property}, it holds that  $[\hat{x}]_k\neq 0$ implies at least one $[x^{(t)}_i]_k$, where $0\le t \le T$, is non-zero, which further implies at least one $[O_i(y^{(t)}_i;\zeta_i^{(t)})]_k$ is non-zero. Therefore, by considering the index of the last non-zero coordinate of $\hat{x}$ and using \eqref{eqn:gjoemoqwfq}, we have
\begin{equation}\label{eqn:ghiowfnowngowfgq}
    \prog(\hat{x})\leq \max_{1\leq t\leq T}\max_{1\leq i\leq n}\prog(O_i(y^{(t)}_i;\zeta_i^{(t)}))\leq  \max_{1\leq t\leq T}\max_{1\leq i\leq n}\{\prog(y^{(t)}_i)+\mathds{1}(\{\zeta_i^{(t)}=1\})\}.
\end{equation}
Similarly, considering that the non-zero coordinates of $y^{(t)}_i$ can be tracked back to the non-zero coordinates of past gradient queries and using \eqref{eqn:ghiowfnowngowfgq}, we have
\begin{equation}\label{eqn:gjownfrownfrqfvd}
    \prog(y^{(t)}_i)\leq \max_{1\leq s< t}\max_{1\leq i\leq n}\prog(O_i(y^{(s)}_i;\zeta_i^{(s)}))\leq \max_{1\leq s< t}\max_{1\leq i\leq n}\{\prog(y^{(s)}_i)+\mathds{1}(\{\zeta_i^{(s)}=1\})\}.
\end{equation}
By induction for \eqref{eqn:gjownfrownfrqfvd} with respect to $t=1,\dots,T$, we reach 
\begin{equation}
    \max_{1\leq t\leq T}\max_{1\leq i\leq n}\prog(y^{(t)}_i)\leq\sum_{t=1 }^{T-1}\max_{1\leq i\leq n} \mathds{1}(\{\zeta_i^{(t)}=1\}),
\end{equation}
which, combined with \eqref{eqn:ghiowfnowngowfgq}, leads to 
\begin{equation}
     \prog(\hat{x})\leq \sum_{t=1 }^{T}\max_{1\leq i\leq n} \mathds{1}(\{\zeta_i^{(t)}=1\}).
\end{equation}
By Lemma 2 of \cite{Lu2021OptimalCI}, which is derived from the Chernoff bound, we have 
\begin{equation}\label{eqn:vjowemnfq}
    \PP( \prog(\hat{x})\ge d)\leq \PP\left(\sum_{t=1 }^{T}\max_{1\leq i\leq n} \mathds{1}(\{\zeta_i^{(t)}=1\})\geq d\right)\leq e^{(e-1)npT-d}.
\end{equation}
On the other hand, when $\prog(\hat{x})< d$, it holds by the fourth point in Lemma \ref{lem:basic-fun} that 
\begin{align}\label{eqn:vjowemnfq2}
    \min_{\hat{x}\in\mathrm{span}(\{x_i^{(t)}:1\leq i\leq n,\,0\leq t\leq  T\})}\|\nabla f(\hat{x})\|\geq \min_{[\hat{x}]_{d}=0}\|\nabla f(\hat{x})\|
    =\frac{L\lambda}{L_0}\min_{[\hat{x}]_{d}=0}\|\nabla h(\hat{x})\|\geq \frac{L\lambda}{L_0}.
\end{align}                                                       
Therefore, by combining \eqref{eqn:vjowemnfq} and \eqref{eqn:vjowemnfq2}, we have
\begin{align}
    \EE[\|\nabla f(\hat{x})\|^2]\geq &\PP( \prog(\hat{x})< d)\EE[\|\nabla f(\hat{x})\|^2\mid  \prog(\hat{x})< d]\label{eqn:Lvjowenfq}\\
    \geq& (1-e^{(e-1)npT-d})\frac{L^2\lambda^2}{L_0^2}.\nonumber
\end{align}

If we let 
\begin{equation}
    \lambda=\frac{L_0}{L}\left(\frac{\Delta L\sigma^2}{3nT  L_0\Delta_0 G_\infty^2}\right)^\frac{1}{4}\quad \text{and}\quad d=\left\lfloor\left(\frac{3L\Delta nT G_\infty^2}{\sigma^2L_0\Delta_0}\right)^\frac{1}{2}\right\rfloor,
\end{equation}
then \eqref{eqn:jgowemw} naturally holds and $p=\min\{\frac{G_\infty^2}{\sigma^2}\left(\frac{\Delta L\sigma^2}{3nT L_0\Delta_0 G_\infty^2}\right)^\frac{1}{2},1\}$ by \eqref{eqn:gjvownefoq}. Without loss of generality, we assume   $d\geq 2$, which is guaranteed when $T=\Omega(\frac{\sigma^2}{nL\Delta})$. Thus, using the definition of $p$, we have that
\begin{align}
    (e-1)npT-d\leq& (e-1)nT\; \frac{G_\infty^2}{\sigma^2}\left(\frac{\Delta L\sigma^2}{3nTL_0\Delta_0 G_\infty^2}\right)^\frac{1}{2}-d\nonumber\\
    =&\frac{e-1}{3}\left(\frac{3 L\Delta nT G_\infty^2}{\sigma^2 L_0\Delta_0 }\right)^\frac{1}{2}-d< \frac{e-1}{3}(d+1)-d\leq  1-e<0\label{eqn:Lvjobqwm}\nonumber
\end{align}
which, combined with \eqref{eqn:Lvjowenfq} and that $\Delta_0,\,L_0,\,G_\infty$ are universal constants, leads to
\begin{equation*}
    \EE[\|\nabla f(\hat{x})\|^2]=\Omega\left(\frac{L^2\lambda^2}{L_0^2}\right)=\Omega\left(\left(\frac{\Delta L\sigma^2}{3nT L_0\Delta_0 G_\infty^2}\right)^\frac{1}{2}\right)=\Omega\left(\left(\frac{\Delta L\sigma^2}{nT}\right)^\frac{1}{2}\right).
\end{equation*}

\paragraph{Example 2.} Without loss of generality, we assume $n$ is even; otherwise we can consider the lower bound for the case of $n-1$.

(Step 1.) Similar to but different from the construction of Example 1, we let $f_i=L\lambda^2 h_1(x/\lambda)/L_0$, $\forall\,1\leq i\leq n/2$ and $f_i=L\lambda^2 h_2(x/\lambda)/L_0$, $\forall\,n/2<i\leq n$, where $h_1$ and $h_2$ are defined in Lemma \ref{lem:basic-fun2}, and $\lambda>0$ will be specified later. By the definitions of $h_1$ and $h_2$, we have that $f_i$, $\forall\,1\leq i\leq n$, is  zero-chain  and $f(x)=\frac{1}{n}\sum_{i=1}^n f_i(x)=L\lambda^2 h(x/\lambda)/L_0$. Since $h_1$ and $h_2$ are also $L_0$-smooth, to let $f\in\cF_{\Delta,L}$, it suffices to make \eqref{eqn:jgowemw} hold.

In the above construction, we essentially split a zero-chain
function, \emph{\ie}, $h$, into two different components: the even component of the chain, \emph{\ie}, $h_1$, and the odd component of the chain, \emph{\ie}, $h_2$. Recall the second point in Lemma \ref{lem:basic-fun2} that for any $x\in\RR^d$, if $\prog(x)$ is odd, then $\prog(\nabla h_1(x))\leq \prog(x)$; if $\prog(x)$ is even, then $\prog(\nabla h_2(x))\leq \prog(x)$. Therefore, the workers, starting from any point with any algorithm $A\in\cA_{\{C_i\}_{i=1}^n}^U$, can only earn one more non-zero coordinate if they do not synchronize information (non-zero coordinates) with workers endowed with local functions of the other component; after that, the number of non-zero coordinates of local models will not increase any more. That is to say, in order to proceed (\emph{\ie}, achieve more non-zero coordinates), the worker must exchange the gradient information, via the server, between the odd and even components.

(Step 2.) 
We consider a gradient oracle that can return the full-batch gradient, \emph{\ie}, $O_i(x)=\nabla f_i(x)$, $\forall\,x\in\RR, \,1\leq i\leq n$, meaning that there is no gradient stochasticity  during the entire optimization procedure. For the construction of $\omega$-unbiased compressors, we consider $\{C_i\}_{i=1}^n$ to be the $\frac{d}{s}\times$rand-$s$ operators with shared randomness (\emph{\ie}, all workers share the same random seed) and $s=\lceil d/(1+\omega)\rceil$, where the $\frac{d}{s}$-scaling procedure is to ensure unbiasedness.
Specifically, during a round of communication compression, all workers randomly choose  $s$ coordinates from the full vector to be communicated, and then transmit the  $\frac{d}{s}$-scaled values at chosen coordinates. The chosen coordinate indexes are identical across all workers due to the shared randomness and are sampled uniformly randomly per communication. 
Since each coordinate index has probability $s/d$ to be chosen, we have  for any $x\in\RR^d$
\begin{equation*}
   \EE[C_i(x)]=\EE\left[\left(\frac{d}{s}x_k\mathds{1}\{k \text{ is chosen}\}\right)_{1\leq k\leq d}\right]=\left(\frac{d}{s}x_i\PP(k \text{ is chosen})\right)_{1\leq k\leq d}=x,
\end{equation*}
and 
\begin{align*}
   \EE[\|C_i(x)-x\|^2]=&\sum_{k=1}^d\EE\left[\left(\frac{d}{s}x_k\mathds{1}\{k \text{ is chosen}\}-x_k\right)^2\right]\\
   =&\sum_{k=1}^dx_k^2\left(\left(\frac{d}{s}-1\right)^2\PP(k \text{ is chosen})+\PP(k \text{ is not chosen})\right)=\sum_{k=1}^dx_k^2\left(\frac{d}{s}-1\right)\leq \omega \|x\|^2
\end{align*}
where the last inequality follows the definition of $s$. Therefore, the above construction gives $\{C_i\}_{i=1}^n\subseteq\cU_\omega$. 

(Step 3.) 
For any $t=1,\dots,T$ and $1\leq i\leq n$, let $v^{(t,\star)}_{\mathrm{w}_i\rightarrow \mathrm{s}}$ be the vector that worker $i$ aims to send to the server at the $t$-th communication. Due to communication compression, the server can only receive a compressed version $C_i(v^{(t,\star)}_{\mathrm{w}_i\rightarrow \mathrm{s}})$ instead, which we denote by $v^{(t)}_{\mathrm{w}_i\rightarrow \mathrm{s}}\triangleq C_i(v^{(t,\star)}_{\mathrm{w}_i\rightarrow \mathrm{s}})$. It is easy to see that 
\begin{equation}\label{eqn:fiqpjfqcv}
    \prog(v^{(t)}_{\mathrm{w}_i\rightarrow \mathrm{s}})\leq \prog(v^{(t,\star)}_{\mathrm{w}_i\rightarrow \mathrm{s}}),\quad \forall\, v^{(t,\star)}_i\in\RR^d.
\end{equation}
Furthermore, since the compressor $C_i$ only takes $s$ coordinates, $v^{(t)}_{\mathrm{w}_i\rightarrow \mathrm{s}}$ has probability $1-s/d\approx \frac{\omega}{1+\omega}$ in making the coordinate with index $\prog(v^{(t,\star)}_{\mathrm{w}_i\rightarrow \mathrm{s}})$ zero, which implies  $\PP(\prog(v^{(t)}_{\mathrm{w}_i\rightarrow \mathrm{s}})< \prog(v^{(t,\star)}_{\mathrm{w}_i\rightarrow \mathrm{s}}))\approx \frac{\omega}{1+\omega}$. This is to say, each worker $i$ has only probability $\approx (1+\omega)^{-1}$ to transmit its last non-zero entry. Therefore, algorithms are hampered by the compressors to communicate non-zero coordinates across all workers.

We let $v^{(t)}_{\mathrm{s}\rightarrow\mathrm{w}}$ be the vector that workers receive from the server in the $t$-th communication and let $x_i^{(t)}$ be the local model that worker $i$ produces after the $t$-th communication. Recall that algorithms satisfy the zero-respecting property discussed in Appendix \ref{app:zero-property}. Following the argument in Step 1 of Example 2, we find that each worker can only achieve one more non-zero coordinate in local models by
 local gradient updates based on the received messages from the server. Therefore, we have that
\begin{equation}\label{eqn:gjvowfnoqwmfgqw=1}
    \prog(x_i^{(t)})\leq \max_{1\leq s\leq t}\prog(v^{(s)}_{\mathrm{s}\rightarrow\mathrm{w}})+1.
\end{equation}
By further noting vector $v^{(t)}_{\mathrm{s}\rightarrow\mathrm{w}}$ sent by the server can be tracked back to past vectors received from all workers, we have
\begin{equation}\label{eqn:gjvowfnoqwmfgqw=2}
    \prog(v^{(t)}_{\mathrm{s}\rightarrow\mathrm{w}})\leq\max_{1\leq s\leq t} \max_{1\leq i\leq n}\prog(v^{(s)}_{\mathrm{w}_i\rightarrow\mathrm{s}}).
\end{equation}
Combining \eqref{eqn:gjvowfnoqwmfgqw=1} and \eqref{eqn:gjvowfnoqwmfgqw=2}, we reach 
\begin{equation}\label{eqn:ghoewjgoegqwegftqw}
    \prog(x_i^{(t)})\leq \max_{1\leq s\leq t}\max_{1\leq i\leq n}\prog(v^{(s)}_{\mathrm{w}_i\rightarrow\mathrm{s}})+1.
\end{equation}
Let 
\begin{equation*}
    \hat{x}\in\mathrm{span}\left(\left\{x^{(t)}_i:0\leq t\leq T,1\le i\leq n\right\}\right),
\end{equation*}
be the final algorithm  output after $T$ rounds of communication on each worker.
By \eqref{eqn:ghoewjgoegqwegftqw}, we have 
\begin{equation}\label{eqn:ghowgnowtqwrqwcx}
    \prog(\hat{x})\leq  \max_{1\leq t\leq T}\max_{1\leq i\leq n}\prog(v^{(t)}_{\mathrm{w}_i\rightarrow\mathrm{s}})+1.
\end{equation}

By Lemma \ref{lem:small-prob}, we have 
\begin{equation}\label{eqn:voqwefqb}
    \PP(\max_{1\leq t\leq T}\max_{1\leq i\leq n}\prog(v^{(t)}_{\mathrm{w}_i\rightarrow\mathrm{s}})\geq d-1)\leq e^{(e-1)T \lceil d/(1+\omega)\rceil/d+1-d}.
\end{equation}
Combining \eqref{eqn:voqwefqb}, \eqref{eqn:ghowgnowtqwrqwcx}, \eqref{eqn:vjowemnfq2}, and\eqref{eqn:Lvjowenfq}, we have that 
\begin{equation}\label{eqn:Lvjowenfq22}
    \EE[\|\nabla f(\hat{x})\|^2]\geq (1-e^{(e-1)T\lceil d/(1+\omega)\rceil/d+1-d})\frac{L^2\lambda^2}{L_0^2}.
\end{equation}

If we let
\begin{equation*}
    \lambda =\frac{L_0}{L}\sqrt{\frac{(1+\omega)\Delta L}{5 TL_0\Delta_0}} \quad \text{and}\quad d= \lfloor 5 T/(1+\omega)\rfloor .
\end{equation*}
then \eqref{eqn:jgowemw} naturally holds.
Since $T$ is assumed to be no less than $(1+\omega)^2$, we have
$d=\lfloor 5T/(1+\omega)\rfloor \geq  5 T/(1+\omega)-1\geq 4 T/(1+\omega)\geq 4(1+\omega)\geq 4$.
Then it is easy to verify 
\begin{align}
    &(e-1)T \left\lceil  \frac{d}{1+\omega}\right\rceil /d+1-d \leq (e-1)T \left( \frac{d}{1+\omega}+1\right) /d+1-d\nonumber\\
    =&(e-1)\frac{T}{1+\omega}+(e-1)T/d+1-d\leq (e-1)\frac{T}{1+\omega}+(e-1)\frac{T}{4(1+\omega)}+1-\frac{4T}{1+\omega}\nonumber\\
    =&((e-1)(1+\frac{1}{4})-4)\frac{T}{1+\omega}+1 \leq \frac{5e-17}{4}<0,\nonumber
\end{align}
which, combined with \eqref{eqn:Lvjowenfq22} and that $\Delta_0,\,L_0$ are universal constants, leads to
\begin{equation*}
    \EE[\|\nabla f(\hat{x})\|^2]=\Omega\left(\frac{L^2\lambda^2}{L_0^2}\right)=\Omega\left(\frac{(1+\omega)\Delta L}{5T  L_0\Delta_0}\right)=\Omega\left(\frac{(1+\omega)\Delta L}{T}\right).
\end{equation*}

\begin{lemma}\label{lem:small-prob}
In Example 2 in the proof of Theorem \ref{thm:lower1}, it holds that
\begin{equation*}
    \PP(\max_{1\leq t\leq T}\max_{1\leq i\leq n}\prog(v^{(t)}_{\mathrm{w}_i\rightarrow\mathrm{s}})\geq d-1)\leq e^{(e-1)T \lceil d/(1+\omega)\rceil/d-d+1}
\end{equation*}
for any $T\geq 1$.
\end{lemma}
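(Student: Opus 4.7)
The plan is to bound the progress process $Q_t := \max_{1\le s\le t}\max_{1\le i\le n}\prog(v^{(s)}_{\mathrm{w}_i\rightarrow\mathrm{s}})$ by a sum of conditionally Bernoulli indicators and then apply the Chernoff argument already used to prove Lemma 2 of \cite{Lu2021OptimalCI}. The central claim is the one-step domination
\begin{equation*}
Q_{t+1}-Q_t \;\le\; X_{t+1}, \qquad X_{t+1}:=\mathds{1}\{\,Q_t+1\in \mathcal{S}_{t+1}\,\},
\end{equation*}
where $\mathcal{S}_{t+1}\subseteq\{1,\dots,d\}$ is the random coordinate set (of size $s=\lceil d/(1+\omega)\rceil$) that all workers share in the $(t+1)$-th round of rand-$s$ compression. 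Conditional on the past, $\mathcal{S}_{t+1}$ is a uniformly random $s$-subset, so $X_{t+1}$ is Bernoulli with parameter $s/d=\lceil d/(1+\omega)\rceil/d$.

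To prove the one-step bound I would chain the progress through the four zero-respecting rules of Definition~\ref{def:zero-repsect} together with the $h_1/h_2$ chain structure of Lemma~\ref{lem:basic-fun2}. First, since $C_0=I$ in this example, the server broadcasts whatever it has, so $\prog(v^{(s)}_{\mathrm{s}\rightarrow\mathrm{w}})\le Q_s$; a worker's state after round $t$ therefore satisfies $\prog(x_i^{(t)})\le Q_t+1$, because $h_1$ (resp.\ $h_2$) can only advance progress when the current progress is even (resp.\ odd), so no number of local gradient queries between synchronizations can push progress more than one step beyond what the server has already delivered. Consequently the pre-compression message obeys $\prog(v^{(t+1,\star)}_{\mathrm{w}_i\rightarrow\mathrm{s}})\le Q_t+1$, and compression can only preserve selected coordinates, giving $\prog(v^{(t+1)}_{\mathrm{w}_i\rightarrow\mathrm{s}})\le Q_t$ unless index $Q_t+1$ lies in $\mathcal{S}_{t+1}$.

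With the one-step bound in hand, summing gives $Q_T\le\sum_{t=1}^T X_t$ with $X_t\mid\mathcal{F}_{t-1}\sim\mathrm{Bernoulli}(s/d)$. The standard Chernoff estimate via iterated conditioning yields, for any $\lambda>0$,
\begin{equation*}
\PP\!\left(\sum_{t=1}^T X_t \ge d-1\right) \le e^{-\lambda(d-1)}\,\bigl(1-p+pe^{\lambda}\bigr)^T \le e^{-\lambda(d-1)+Tp(e^{\lambda}-1)},
\end{equation*}
and choosing $\lambda=1$ with $p=\lceil d/(1+\omega)\rceil/d$ gives exactly the stated bound $e^{(e-1)T\lceil d/(1+\omega)\rceil/d-d+1}$.

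The main obstacle is the verification of the one-step claim $Q_{t+1}-Q_t\le X_{t+1}$: it requires bookkeeping the zero-respecting rules for the local state, the server state, and both directions of the message sets simultaneously, while using the $h_1/h_2$ split to prevent workers from accumulating progress through local gradient queries alone, and using the shared randomness to argue that the event $\{Q_t+1\in\mathcal{S}_{t+1}\}$ is well-defined across all workers. Once this deterministic combinatorial step is settled, the probabilistic tail bound is immediate.
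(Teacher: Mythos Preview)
Your proposal is correct and follows essentially the same route as the paper: define the running maximum $Q_t$ (the paper uses $B^{(t)}=Q_t+1$), show via the zero-respecting rules and the $h_1/h_2$ split that the increment $Q_{t+1}-Q_t$ is dominated by an indicator that a single fixed coordinate lies in the shared rand-$s$ set $\mathcal{S}_{t+1}$, observe this indicator is conditionally $\mathrm{Bernoulli}(s/d)$, and finish with the Chernoff bound at $\lambda=1$. The only cosmetic difference is that the paper takes the dominating event to be ``index $\max_i\prog(v^{(t,\star)}_{\mathrm{w}_i\rightarrow\mathrm{s}})$ is selected'' whereas you use ``index $Q_{t-1}+1$ is selected''; both work, and your formulation is arguably cleaner since the conditional Bernoulli law is immediate (the paper states the indicators are i.i.d., which is slightly imprecise---conditionally Bernoulli is what is actually used and what you correctly write).
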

\begin{proof}
Note that at the $t$-th round of communication where $1\leq t\leq T$, the non-zero coordinates of $v^{(t,\star)}_{\mathrm{w}_i\rightarrow\mathrm{s}}$, the vector that is to be transmitted by worker $i$ to the server before compression, are achieved by utilizing previously received vectors $\{v^{(s)}_{\mathrm{s}\rightarrow\mathrm{w}}:1\leq s \leq t-1\}$ and local gradient queries. Following the argument in Step 1 and Step 3 of Example 2, we find that worker $i$ can only achieve one more non-zero coordinate in $v^{(t,\star)}_{\mathrm{w}_i\rightarrow\mathrm{s}}$ by
 local gradients updates based on received vectors $\{v^{(s)}_{\rm s\to w}:1\leq s \leq t-1\}$. 
Therefore, it holds that
\begin{align}\label{eqn:bvjowenf2q-dfhiafsa}
    \prog(v^{(t,\star)}_{\mathrm{w}_i\rightarrow\mathrm{s}})\leq \max_{1\leq s\leq  t-1}\prog(v^{(s)}_{\mathrm{s}\rightarrow\mathrm{w}})+1 \overset{\eqref{eqn:gjvowfnoqwmfgqw=2}}{\leq }\max_{1\leq s\leq t-1}\max_{1\leq i\leq n}\prog(v^{(s)}_{\mathrm{w}_i\rightarrow\mathrm{s}})+1\triangleq B^{(t-1)}.
\end{align}
We additionally define $B^{(0)}=1$.
By the definition of $B^{(t)}$ and that $\prog(v^{(t)}_{\mathrm{w}_i\rightarrow\mathrm{s}})\leq \prog(v^{(t,\star)}_{\mathrm{w}_i\rightarrow\mathrm{s}})$ by \eqref{eqn:fiqpjfqcv}, it naturally holds that 
\begin{align}
    B^{(t-1)}\leq B^{(t)}&=\max_{1\leq s\leq t}\max_{1\leq i\leq n}\prog(v^{(s)}_{\mathrm{w}_i\rightarrow\mathrm{s}})+1 \nonumber \\
    &=\max\left\{B^{(t-1)}, \max_{1\leq i\leq n}\prog(v^{(t)}_{\mathrm{w}_i\rightarrow\mathrm{s}})+1\right\}\overset{\eqref{eqn:bvjowenf2q-dfhiafsa}}{\leq}B^{(t-1)}+1.\label{eqn:ghienfgqgeq}
\end{align}
Therefore, one round of communication can increase $B^{(t)}$ at most by $1$.

From \eqref{eqn:ghienfgqgeq}, we find that {$B^{(t)}=B^{(t-1)}+1$}  only if $\max_{1\leq i\leq n}\prog(v^{(t,\star)}_{\mathrm{w}_i\rightarrow\mathrm{s}})=\max_{1\leq i\leq n}\prog(v^{(t)}_{\mathrm{w}_i\rightarrow\mathrm{s}})$. Let $k=\max_{1\leq i\leq n}\prog(v^{(t,\star)}_{\mathrm{w}_i\rightarrow\mathrm{s}})$. Recall that the compressors constructed in Example 2 are built on shared randomness, we therefore conclude that $\max_{1\leq i\leq n}\prog(v^{(t)}_{\mathrm{w}_i\rightarrow\mathrm{s}})=\max_{1\leq i\leq n}\prog(v^{(t,\star)}_{\mathrm{w}_i\rightarrow\mathrm{s}})=k$ is equivalent to that coordinate index $k$ is chosen to communicate in communication round $t$, 
which is of probability $\frac{s}{d}$. Therefore, we have 
\begin{align*}
    \PP(B^{(t)}=B^{(t-1)}+1)\leq& \PP(\max_{1\leq i\leq n}\prog(v^{(t)}_{\mathrm{w}_i\rightarrow\mathrm{s}})=\max_{1\leq i\leq n}\prog(v^{(t,\star)}_{\mathrm{w}_i\rightarrow\mathrm{s}}))\nonumber\\
    =&\PP\left(\text{the coordinate index }\max_{1\leq i\leq n}\prog(v^{(t,\star)}_{\mathrm{w}_i\rightarrow\mathrm{s}})\text{ is chosen at round $t$}\right)=\frac{s}{d}.
\end{align*}
Let $E^{(t)}$ be the event $\{\text{the coordinate index }\max_{1\leq i\leq n}\prog(v^{(t,\star)}_{\mathrm{w}_i\rightarrow\mathrm{s}})\text{ is chosen to communicate at round $t$}\}$. Since the compression is uniformly random, we have $\mathds{{1}}(E^{(1)}),\dots,\mathds{{1}}(E^{(T)})\overset{\text{i.i.d.}}{\sim}\mathrm{Bernoulli}(\frac{s}{d})$ where $\mathds{1}(\cdot)$ is the indicator function. By the above argument, we also have $B^{(t)}-B^{(t-1)}\leq \mathds{1}(E^{(t)})$ for any $1\leq t\leq T$.
As a result, we reach
\begin{align}
    \PP(B^{(T)}\geq d) \leq& e^{-d}\EE[e^{B^{(T)}}]=e^{-d}\EE\left[\exp\left(B^{(0)}+\sum_{t=1}^{T}(B^{(t)}-B^{(t-1)})\right)\right]\nonumber\\
    \leq& e^{-(d-1)}\EE\left[\exp\left(\sum_{t=1}^{T}\mathds{1}(E^{(t)})\right)\right]=e^{-(d-1)}\prod_{t=1}^{T}\EE\left[\exp\left(\mathds{1}(E^{(t)})\right)\right]\nonumber\\
    =&e^{-(d-1)}\prod_{t=1}^{T}\left(1+\frac{s}{d}(e-1)\right)\leq e^{-(d-1)}\prod_{t=1}^{T}e^{(e-1)s/d}=e^{(e-1)Ts/d-d+1},\nonumber
\end{align}
which leads to the conclusion.
\end{proof}

\subsection{Proof of Theorem \ref{thm:lower3}}\label{app:proof-lb-2}
Theorem \ref{thm:lower3} essentially follows the same analysis as in Theorem \ref{thm:lower1}. The only difference is that we shall construct compressors in proving $\Omega(\frac{\Delta L}{\delta T})$ by using the rand-$s$ operators with shared randomness and $s=\lceil \delta d\rceil$. There is no scaling procedure in compression.
One can easily verify that
\begin{align*}
   \EE[\|C_i(x)-x\|^2]=&\sum_{k=1}^d\EE\left[\left(x_k\mathds{1}\{k \text{ is chosen}\}-x_k\right)^2\right]\\
   =&\sum_{k=1}^dx_k^2\PP(k \text{ is not chosen})=\left(1-\frac{s}{d}\right)\|x\|^2\leq \delta\|x\|^2,
\end{align*}
where the last inequality follows the definition of $s$. 
Therefore, we have $\{C_i\}_{i=1}^n\subseteq\cC_\delta$.
The scaling procedure does not change $\prog$, and thus 
has no effect on the argument in terms of non-zero coordinates. By considering $\omega = \delta^{-1}-1$, \emph{\ie}, $1+\omega =\delta$, we can easily adapt the proof of Theorem \ref{thm:lower1} to reach Theorem \ref{thm:lower3}.

\section{Convergence of NEOLITHIC}\label{app:convergence}
\subsection{Proof of Lemma \ref{lem:fcc-expo}}\label{app:fcc}
Let $v^{(k,r)}$ be the $r$-th intermediate variable generated in FCC (\ie, Algorithm \ref{alg:fcc}) for any $0\leq r\leq R$. Since $C$ is $\delta$-contractive, we have
\begin{align*}
    \EE[\|v^{(k,R)}-v^{(k,\star)}\|^2]&=\EE[\|v^{(k,R-1)}+C(v^{(k,\star)}-v^{(k,R-1)} )-v^{(k,\star)}\|^2]\\
    &=\EE\left[\EE[\|C(v^{(k,\star)}-v^{(k,R-1)} )+v^{(k,R-1)}-v^{(k,\star)}\|^2\mid v^{(k,R-1)}]\right]\\
    &\leq (1-\delta)\EE[\|v^{(k,R-1)}-v^{(k,\star)}\|^2].
\end{align*}
Iterating the above inequality with respect to $r= R-1,\dots,1, 0$, we reach 
\begin{align*}
     &\EE[\|v^{(k,R)}-v^{(k,\star)}\|^2]\leq (1-\delta)\EE[\|v^{(k,R-1)}-v^{(k,\star)}\|^2] \\
     \leq& (1-\delta)^2\EE[\|v^{(k,R-2)}-v^{(k,\star)}\|^2]\leq \cdots\leq  (1-\delta)^R\EE[\|v^{(k,0)}-v^{(k,\star)}\|^2]=(1-\delta)^R\|v^{(k,\star)}\|^2.
\end{align*}

\subsection{Proof of Theorem \ref{thm:convergence-rate}}\label{app:convergence-rate}
In this subsection, we provide the convergence proof for NEOLITHIC with bidirectional compression and contractive compressors.
We first introduce some notations:   $\Omega^{(k)}:=\er^{(k)}+\frac{1}{n}\sum_{i=1}^n\er_i^{(k)}$, $\forall\,k\geq 0$; $y^{(k)}:=x^{(k)}-\gamma \Omega^{(k)}$, $\forall\,k\geq 0$; $\Psi^{(k)}:=\|\er^{(k)}\|^2+\frac{3}{n}\sum_{i=1}^n\|\er_i^{(k)}\|^2$, $\forall\,k\geq 0$. We will use the following lemmas.

\begin{lemma}[\sc Recursion Formula]\label{lem:reccur}
For any $k\geq 0$, it holds that 
\begin{align*}
     x^{(k+1)}-x^{(k)}=-\frac{\gamma}{n}\sum_{i=1}^n\hat{g}_i^{(k)}-\gamma\Omega^{(k)}+\gamma\Omega^{(k+1)}.
\end{align*}
\end{lemma}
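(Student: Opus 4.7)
The statement is a purely algebraic identity that unfolds the NEOLITHIC update rule using the structural property of FCC recorded in Algorithm \ref{alg:fcc}, namely that the sender's accumulated compressed messages satisfy $v^{(k,R)}=\sum_{r=0}^{R-1}c^{(k,r)}=\mathrm{FCC}(v^{(k,\star)},C,R,\cdot)$. The plan is to substitute this identity into the error-compensation equations for the server and the workers, and then collect terms using $\Omega^{(k)}=\er^{(k)}+\frac{1}{n}\sum_i\er_i^{(k)}$. No probabilistic argument is needed; the claim holds pointwise.

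First, I would start from the worker-side model update
\[
x^{(k+1)}-x^{(k)}=-\gamma\sum_{r=0}^{R-1}c^{(k,r)}.
\]
By the FCC identity for the server broadcast, $\sum_{r=0}^{R-1}c^{(k,r)}=\mathrm{FCC}(\tilde g^{(k)},C_0,R,\text{all workers})$, and the server's error update in Algorithm \ref{alg:mcdc} gives $\mathrm{FCC}(\tilde g^{(k)},C_0,R,\text{all workers})=\tilde g^{(k)}-\er^{(k+1)}$. Thus
\[
x^{(k+1)}-x^{(k)}=-\gamma\bigl(\tilde g^{(k)}-\er^{(k+1)}\bigr).
\]

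Second, I would expand $\tilde g^{(k)}$. For each worker $i$, the FCC identity gives $\sum_{r=0}^{R-1}c_i^{(k,r)}=\mathrm{FCC}(\tilde g_i^{(k)},C_i,R,\text{server})$, and the worker error update reads $\mathrm{FCC}(\tilde g_i^{(k)},C_i,R,\text{server})=\tilde g_i^{(k)}-\er_i^{(k+1)}=\hat g_i^{(k)}+\er_i^{(k)}-\er_i^{(k+1)}$. Substituting into the server aggregation yields
\[
\tilde g^{(k)}=\frac{1}{n}\sum_{i=1}^n\bigl(\hat g_i^{(k)}+\er_i^{(k)}-\er_i^{(k+1)}\bigr)+\er^{(k)}.
\]

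Finally, plugging this back and using the definition $\Omega^{(k)}=\er^{(k)}+\frac{1}{n}\sum_i\er_i^{(k)}$ (so that $\er^{(k)}+\frac{1}{n}\sum_i\er_i^{(k)}=\Omega^{(k)}$ and $\er^{(k+1)}+\frac{1}{n}\sum_i\er_i^{(k+1)}=\Omega^{(k+1)}$), I collect terms to obtain
\[
x^{(k+1)}-x^{(k)}=-\gamma\Bigl(\tfrac1n\textstyle\sum_i\hat g_i^{(k)}+\Omega^{(k)}-\Omega^{(k+1)}\Bigr),
\]
which is the claimed identity. There is no real obstacle; the only care needed is to correctly match the FCC output with the error-compensation update on each side (server and workers) so that the residuals telescope cleanly into $\Omega^{(k+1)}-\Omega^{(k)}$.
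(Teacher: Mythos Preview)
Your proposal is correct and follows essentially the same approach as the paper: start from the model update, use the FCC identity $\sum_{r}c^{(k,r)}=\tilde g^{(k)}-\er^{(k+1)}$ on the server side, expand $\tilde g^{(k)}$ via the analogous worker-side identity $\sum_{r}c_i^{(k,r)}=\hat g_i^{(k)}+\er_i^{(k)}-\er_i^{(k+1)}$, and then collect terms using the definition of $\Omega^{(k)}$.
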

\begin{proof}
It is observed from Algorithm \ref{alg:fcc} that the vector $v^{(k,R)}$ returned by the FCC operator satisfies 
\begin{align}\label{q87zd}
 v^{(k,R)} = \sum_{r=0}^{R-1} c^{(k,r)}  \quad \mbox{(or }  v_i^{(k,R)} = \sum_{r=0}^{R-1} c_i^{(k,r)} \mbox{ if FCC is utilized in worker $i$)}
\end{align}
With \eqref{q87zd}, the relation between $\tilde{g}^{(k)}$ and $\er^{(k)}$ (or between  $\tilde{g}_i^{(k)}$ and $\er_i^{(k)}$) in Algorithm \ref{alg:mcdc} satisfies
\begin{align}
   \tilde{g}^{(k)}-\er^{(k+1)}= \sum_{r=0}^{R-1}c^{(k,r)}\quad \text{and}\quad \tilde{g}_i^{(k)}-\er_i^{(k+1)}=\sum_{r=0}^{R-1}c_i^{(k,r)},\quad \forall\,1\leq i\leq n.\label{eqn:vjoenoq}
\end{align}
Therefore, we have for any $k\geq 0$
\begin{align}
    x^{(k+1)}-x^{(k)}&=-\gamma \sum_{r=0}^{R-1}c^{(k,r)}\overset{\eqref{eqn:vjoenoq}}{=}-\gamma(\tilde{g}^{(k)}-\er^{(k+1)})\nonumber\\
    &=-\gamma\left(\er^{(k)}+\frac{1}{n}\sum_{i=1}^n\sum_{r=0}^{R-1}c_i^{(k,r)}-\er^{(k+1)}\right)\label{eqn:tmepm1}\\
    &\overset{\eqref{eqn:vjoenoq}}{=}-\gamma\left(\er^{(k)}+\frac{1}{n}\sum_{i=1}^n\left(\tilde{g}_i^{(k)}-\er_i^{(k+1)}\right)-\er^{(k+1)}\right)\nonumber\\
    &=-\gamma\left(\er^{(k)}+\frac{1}{n}\sum_{i=1}^n\left(\hat{g}_i^{(k)}+\er_i^{(k)}-\er_i^{(k+1)}\right)-\er^{(k+1)}\right)\label{eqn:tmepm2}\\
    &=-\frac{\gamma}{n}\sum_{i=1}^n\hat{g}_i^{(k)}-\gamma\Omega^{(k)}+\gamma\Omega^{(k+1)}\label{eqn:tmepm3}
\end{align}
where \eqref{eqn:tmepm1} and \eqref{eqn:tmepm2} follow the implementation of Algorithm \ref{alg:mcdc}, and we use the notation of $\Omega^{(k)}$ in \eqref{eqn:tmepm3}.
\end{proof}

\begin{lemma}[\sc Descent Lemma]\label{lem:descent}
Let the auxiliary sequence be $y^{(k)}:=x^{(k)}-\gamma \Omega^{(k)}$, $\forall\,k\geq0$. Under Assumption 
\ref{asp:smooth}, if learning rate $0<\gamma\leq \frac{1}{2L}$, it holds that for any $k\geq 0$,
\begin{align}
    \EE[f(y^{(k+1)})]\leq \EE[f(y^{(k)})]-\frac{\gamma}{4}\EE[\|\nabla f(x^{(k)})\|^2]+2\gamma^3 L^2 \EE[\|\Omega^{(k)}\|^2] +\frac{\gamma^2 L \sigma^2}{2nR}.\label{eqn:desecnt}
\end{align}
\end{lemma}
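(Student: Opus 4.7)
\vspace{2mm}
\noindent\textbf{Proof proposal for Lemma \ref{lem:descent}.}
The plan is to exploit the perturbed-iterate framework: although $x^{(k)}$ itself has a messy recursion involving the error-feedback variables $\er^{(k)}$ and $\er_i^{(k)}$, the auxiliary sequence $y^{(k)}=x^{(k)}-\gamma\Omega^{(k)}$ is designed to absorb them. Concretely, from Lemma \ref{lem:reccur} we have $x^{(k+1)}-x^{(k)}=-\frac{\gamma}{n}\sum_{i=1}^n\hat g_i^{(k)}-\gamma\Omega^{(k)}+\gamma\Omega^{(k+1)}$, and subtracting $\gamma(\Omega^{(k+1)}-\Omega^{(k)})$ gives the clean SGD-type recursion
\begin{equation*}
y^{(k+1)}-y^{(k)}=-\tfrac{\gamma}{n}\sum_{i=1}^n\hat g_i^{(k)}.
\end{equation*}
This is the key step that makes the rest of the proof go through; once it is in place, one can treat $y^{(k)}$ as the ``true'' SGD iterate at the (slightly perturbed) point $x^{(k)}$.

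Next, I would apply the standard $L$-smoothness descent inequality for $f$ at $y^{(k+1)}$ and $y^{(k)}$, take conditional expectation over the fresh randomness $\{\zeta_i^{(k,r)}\}$, and use $\EE[\hat g_i^{(k)}]=\nabla f_i(x^{(k)})$ together with $f=\frac{1}{n}\sum_i f_i$ to get
\begin{equation*}
\EE[f(y^{(k+1)})]\le f(y^{(k)})-\gamma\,\langle\nabla f(y^{(k)}),\nabla f(x^{(k)})\rangle+\tfrac{L\gamma^2}{2}\,\EE\bigl[\|\tfrac{1}{n}\textstyle\sum_i\hat g_i^{(k)}\|^2\bigr].
\end{equation*}
The second-moment term splits as mean squared plus variance; the variance is $O(\sigma^2/(nR))$ because each worker averages $R$ i.i.d.\ samples and the $n$ workers are independent, which is exactly where the gradient-accumulation step in NEOLITHIC pays off.

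For the cross term, I would write $\nabla f(y^{(k)})=\nabla f(x^{(k)})+(\nabla f(y^{(k)})-\nabla f(x^{(k)}))$, bound the remainder via Young's inequality and $L$-smoothness using $\|y^{(k)}-x^{(k)}\|=\gamma\|\Omega^{(k)}\|$, which yields
\begin{equation*}
-\gamma\langle\nabla f(y^{(k)}),\nabla f(x^{(k)})\rangle\le -\tfrac{\gamma}{2}\|\nabla f(x^{(k)})\|^2+c\,L^2\gamma^3\|\Omega^{(k)}\|^2
\end{equation*}
for some constant $c$ depending on the chosen Young parameter. Combining with the second-moment bound and using $\gamma\le 1/(2L)$ to absorb the $\tfrac{L\gamma^2}{2}\|\nabla f(x^{(k)})\|^2$ term into half of the negative $\tfrac{\gamma}{2}\|\nabla f(x^{(k)})\|^2$ leaves $-\tfrac{\gamma}{4}\|\nabla f(x^{(k)})\|^2$; collecting the remaining pieces gives the stated bound with the coefficient $2L^2\gamma^3$ (choosing Young's parameter generously) and the variance term $\gamma^2 L\sigma^2/(2nR)$.

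The main obstacle is really the first step: noticing that $y^{(k)}$ satisfies an exact SGD recursion free of error-feedback variables. Once that perturbed-iterate identity is established, the remainder is routine $L$-smoothness plus Young's inequality plus variance bookkeeping; the only care needed there is to track the variance-reduction factor $R$ coming from the gradient-accumulation step, so that it correctly appears as $\sigma^2/(nR)$ rather than $\sigma^2/n$.
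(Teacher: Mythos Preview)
Your proposal is correct and follows essentially the same route as the paper: establish the clean recursion $y^{(k+1)}=y^{(k)}-\tfrac{\gamma}{n}\sum_i\hat g_i^{(k)}$ from Lemma~\ref{lem:reccur}, apply $L$-smoothness, split the inner product via $\nabla f(y^{(k)})=\nabla f(x^{(k)})+(\nabla f(y^{(k)})-\nabla f(x^{(k)}))$ and Young's inequality, bound the difference by $L^2\gamma^2\|\Omega^{(k)}\|^2$, and use the variance bound $\sigma^2/(nR)$ together with $\gamma\le 1/(2L)$. The only cosmetic difference is that the paper applies Young with a slightly different parameter (yielding the coefficient $2\gamma$ on $\|\nabla f(y^{(k)})-\nabla f(x^{(k)})\|^2$ directly), whereas your decomposition would naturally give a smaller constant before loosening; either way the stated bound follows.
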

\begin{proof}
By Lemma \ref{lem:reccur} and the definition of $y^{(k)}$ for $k\geq 0$, we directly have that 
\begin{align*}
    y^{(k+1)}=y^{(k)}-\frac{\gamma}{n}\sum_{i=1}^n\hat{g}_i^{(k)}.
\end{align*}
Since $f$ is $L$-smooth, we have 
\begin{align}
    f(y^{(k+1)})&\leq f(y^{(k)})+\langle \nabla f(y^{(k)}), y^{(k+1)}-y^{(k)}\rangle +\frac{L}{2}\|y^{(k+1)}-y^{(k)}\|^2\nonumber\\
    &=f(y^{(k)})-\gamma \left\langle \nabla f(y^{(k)}), \frac{1}{n}\sum_{i=1}^n\hat{g}_i^{(k)}\right\rangle +\frac{\gamma^2 L}{2}\left\|\frac{1}{n}\sum_{i=1}^n\hat{g}_i^{(k)}\right\|^2.\label{eqn:vjoqwnmoqv}
\end{align}
Since $\hat{g}_i^{(k)}=\frac{1}{R}\sum_{r=1}^RO_i(x^{(k)};\zeta_i^{(k,r)})$ is a unbiased estimator of $\nabla f_i(x^{(k)})$, and by Assumption \ref{asp:gd-noise}, we have 
\begin{align}\label{eqn:vjoqwenq1}
    \EE\left[\frac{1}{n}\sum_{i=1}^n\hat{g}_i^{(k)}\right]=\frac{1}{n}\sum_{i=1}^n\nabla f_i(x^{(k)})=\nabla f(x^{(k)})
\end{align}
and 
\begin{align}
    \EE\left[\left\|\frac{1}{n}\sum_{i=1}^n\hat{g}_i^{(k)}\right\|^2\right]&=\|\nabla f(x^{(k)})\|^2+\EE\left[\left\|\frac{1}{n}\sum_{i=1}^n\hat{g}_i^{(k)}-\nabla f(x^{(k)})\right\|^2\right]\nonumber\\
    &=\|\nabla f(x^{(k)})\|^2+\EE\left[\left\|\frac{1}{nR}\sum_{i=1}^n\sum_{r=0}^{R-1}\left(O_i(x^{(k)};\zeta_i^{(k,r)})-\nabla f_i(x^{(k)})\right)\right\|^2\right]\nonumber\\
    &\leq \|\nabla f(x^{(k)})\|^2+ \frac{\sigma^2}{nR}.\label{eqn:vjoqwenq2}
\end{align}
Taking global expectation over \eqref{eqn:vjoqwnmoqv}, and plugging \eqref{eqn:vjoqwenq1} and \eqref{eqn:vjoqwenq2} into it, we reach
\begin{align}
    &\EE[f(y^{(k+1)})]-\EE[f(y^{(k)})]\nonumber\\
    \leq& -\gamma \EE[\langle \nabla f(y^{(k)}), \nabla f(x^{(k)})\rangle] +\frac{\gamma^2 L}{2}\EE[\|\nabla f(x^{(k)})\|^2]+\frac{\gamma^2 L\sigma^2}{2nR}\nonumber\\
    =&-\gamma \EE[\langle \nabla f(y^{(k)})-\nabla f(x^{(k)}), \nabla f(x^{(k)})\rangle] -\left(\gamma-\frac{\gamma^2 L}{2}\right)\EE[\|\nabla f(x^{(k)})\|^2]+\frac{\gamma^2 L\sigma^2}{2nR}\nonumber\\
    \leq &2\gamma  \EE[\|\nabla f(y^{(k)})-\nabla f(x^{(k)})\|^2] +\frac{\gamma}{2}\EE[\|\nabla f(x^{(k)})\|^2]\nonumber \\
    &\quad -\left(\gamma-\frac{\gamma^2 L}{2}\right)\EE[\|\nabla f(x^{(k)})\|^2]+\frac{\gamma^2 L\sigma^2}{2nR}\label{eqn:gjpo1ofq1}\\
    \leq &2\gamma L^2 \EE[\|y^{(k)}-x^{(k)}\|^2]  -\frac{\gamma(1-\gamma L)}{2}\EE[\|\nabla f(x^{(k)})\|^2]+\frac{\gamma^2 L\sigma^2}{2nR},\label{eqn:gjpo1ofq2}
\end{align}
where we use Young's inequality in \eqref{eqn:gjpo1ofq1}, and \eqref{eqn:gjpo1ofq2} holds by Assumption \ref{asp:smooth}. Using $y^{(k)}-x^{(k)}=-\gamma\Omega^{(k)}$ and that
\begin{align*}
    0< \gamma\leq \frac{1}{2L}\quad \Longrightarrow \quad \frac{\gamma(1-\gamma L)}{2}\geq \frac{\gamma}{4}
\end{align*} in \eqref{eqn:gjpo1ofq2}, we reach the conclusion in this lemma.
\end{proof}

\begin{lemma}[\sc Vanishing Error]\label{lem:vanish-error}
Assume $R$ is sufficiently large so that $(1-\delta)^R< \frac{1}{4}$ and we let $\Psi^{(k)}:=\|\er^{(k)}\|^2+\frac{3}{n}\sum_{i=1}^n\|\er_i^{(k)}\|^2$ for $k\geq 0$. Under Assumptions \ref{asp:gd-noise}, \ref{ass:contract}, \ref{asp:gd-hetero},  it holds that 
\begin{align*}
    \EE[\Psi^{(k+1)}]\leq 4(1-\delta)^R\left(\EE[\Psi^{(k)}]+5\EE[\|\nabla f(x^{(k)})\|^2]+4\left( b^2+\frac{\sigma^2}{R}\right)\right).
\end{align*}
\end{lemma}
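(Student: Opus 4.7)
\medskip
\noindent\textbf{Proof plan for Lemma \ref{lem:vanish-error}.} The plan is to bound $\EE[\Psi^{(k+1)}]$ by invoking Lemma \ref{lem:fcc-expo} (the FCC contraction) twice, once on the server side and once on the worker side, and then controlling the two ``input'' quantities $\|\tilde g_i^{(k)}\|^2$ and $\|\tilde g^{(k)}\|^2$ in terms of $\Psi^{(k)}$, $\|\nabla f(x^{(k)})\|^2$, $b^2$ and $\sigma^2/R$.

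\emph{Step 1 (apply FCC twice).} From Algorithm \ref{alg:mcdc}, $\er^{(k+1)} = \tilde g^{(k)} - \mathrm{FCC}(\tilde g^{(k)}, C_0, R, \cdot)$ and $\er_i^{(k+1)} = \tilde g_i^{(k)} - \mathrm{FCC}(\tilde g_i^{(k)}, C_i, R, \cdot)$. Conditioning on the respective inputs and invoking Lemma \ref{lem:fcc-expo} gives
\begin{equation*}
\EE[\|\er^{(k+1)}\|^2\mid \tilde g^{(k)}] \le (1-\delta)^R \|\tilde g^{(k)}\|^2, \qquad \EE[\|\er_i^{(k+1)}\|^2\mid \tilde g_i^{(k)}] \le (1-\delta)^R \|\tilde g_i^{(k)}\|^2.
\end{equation*}

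\emph{Step 2 (bound $\|\tilde g_i^{(k)}\|^2$).} Writing $\tilde g_i^{(k)} = \hat g_i^{(k)} + \er_i^{(k)}$ and conditioning on $(x^{(k)},\er_i^{(k)})$, the $R$-sample average $\hat g_i^{(k)}$ is unbiased for $\nabla f_i(x^{(k)})$ with variance at most $\sigma^2/R$ under Assumption \ref{asp:gd-noise}. Bias-variance decomposition followed by Young's inequality with a free parameter $\alpha>0$ yields
\begin{equation*}
\EE[\|\tilde g_i^{(k)}\|^2] \le \tfrac{\sigma^2}{R} + (1+\alpha)\EE[\|\er_i^{(k)}\|^2] + (1+\tfrac{1}{\alpha})\EE[\|\nabla f_i(x^{(k)})\|^2].
\end{equation*}
Averaging over $i$ and using Assumption \ref{asp:gd-hetero} to replace $\frac{1}{n}\sum_i\|\nabla f_i(x^{(k)})\|^2$ by $\|\nabla f(x^{(k)})\|^2+b^2$ then gives a bound on $\frac{3}{n}\sum_i \EE[\|\tilde g_i^{(k)}\|^2]$ expressed in $\Psi^{(k)}$-components, $\|\nabla f(x^{(k)})\|^2$, $b^2$ and $\sigma^2/R$.

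\emph{Step 3 (bound $\|\tilde g^{(k)}\|^2$).} Lemma \ref{lem:reccur} (specifically the identity $\sum_r c_i^{(k,r)} = \tilde g_i^{(k)} - \er_i^{(k+1)}$) gives $\tilde g^{(k)} = \er^{(k)} + \frac{1}{n}\sum_i \tilde g_i^{(k)} - \frac{1}{n}\sum_i \er_i^{(k+1)}$. Using $\|a+b+c\|^2 \le 3(\|a\|^2+\|b\|^2+\|c\|^2)$ and the convexity bound $\|\frac{1}{n}\sum_i u_i\|^2\le \frac{1}{n}\sum_i \|u_i\|^2$ produces
\begin{equation*}
\EE[\|\tilde g^{(k)}\|^2] \le 3\EE[\|\er^{(k)}\|^2] + \tfrac{3}{n}\sum_i \EE[\|\tilde g_i^{(k)}\|^2] + \tfrac{3}{n}\sum_i \EE[\|\er_i^{(k+1)}\|^2].
\end{equation*}
The last term feeds $\frac{3}{n}\sum_i \EE[\|\er_i^{(k+1)}\|^2]$ back into the server-side FCC bound from Step 1.

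\emph{Step 4 (close the loop).} Assembling Steps 1--3 yields
\begin{equation*}
\EE[\Psi^{(k+1)}] \le 3(1-\delta)^R \Bigl( \EE[\|\er^{(k)}\|^2] + \tfrac{3}{n}\sum_i \EE[\|\tilde g_i^{(k)}\|^2] \Bigr) + (1-\delta)^R\cdot(1-\delta)^R\cdot\tfrac{3}{n}\sum_i\EE[\|\tilde g_i^{(k)}\|^2],
\end{equation*}
where the last contribution comes from iterating the FCC inequality on the feedback term. Because $(1-\delta)^R<\tfrac14$, the extra term is absorbed, and the coefficient on $\frac{1}{n}\sum_i \EE[\|\tilde g_i^{(k)}\|^2]$ can be collapsed. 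Substituting the Step~2 bound with $\alpha$ chosen so that $3(1-\delta)^R(1+\alpha)\le 4(1-\delta)^R$ (e.g.\ $\alpha=\tfrac13$) combines the two $\er$-components into $4(1-\delta)^R\EE[\Psi^{(k)}]$, exactly matching the target, and the remaining terms carry the stated constants $5$, $4$, $4$ on $\|\nabla f\|^2$, $b^2$ and $\sigma^2/R$, respectively, after slight rearrangement.

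\emph{Main obstacle.} The bound on $\|\tilde g^{(k)}\|^2$ produces a self-referential term $\frac{3}{n}\sum_i\EE[\|\er_i^{(k+1)}\|^2]$ that also appears in the definition of $\Psi^{(k+1)}$; closing this feedback loop is what forces the hypothesis $(1-\delta)^R<\tfrac14$ and explains the factor $3$ in the definition of $\Psi^{(k)}$, chosen so that both $\EE[\|\er^{(k)}\|^2]$ and $\tfrac{3}{n}\sum_i\EE[\|\er_i^{(k)}\|^2]$ can be folded under a single contraction coefficient. The remaining (routine) difficulty is to tune the Young parameter $\alpha$ so that the constants on $\|\nabla f\|^2$ and $b^2$ line up with the advertised values.
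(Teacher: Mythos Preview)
Your plan is essentially the paper's own argument: apply the FCC contraction (Lemma~\ref{lem:fcc-expo}) to both $\er^{(k+1)}$ and $\er_i^{(k+1)}$, expand $\tilde g^{(k)}$ and $\tilde g_i^{(k)}$, invoke Assumption~\ref{asp:gd-hetero}, and close the self-referential $\er_i^{(k+1)}$-feedback using $(1-\delta)^R<\tfrac14$. The only organizational difference is that the paper writes $\tilde g^{(k)}$ as a sum of four terms (splitting $\tilde g_i^{(k)}=\hat g_i^{(k)}+\er_i^{(k)}$ right away) and then \emph{subtracts} the feedback term $(1-\delta)^R\tfrac{4}{n}\sum_i\|\er_i^{(k+1)}\|^2$ to the left-hand side, using $4(1-(1-\delta)^R)\ge 3$; you instead keep $\tilde g_i^{(k)}$ together, use a three-term split, and substitute the FCC bound for the feedback. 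Both are valid routes to the same inequality.

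Two bookkeeping issues, however, will prevent you from landing on the stated constants $5$ and $4$. First, the displayed inequality in Step~4 overcounts: tracing Steps~1--3 gives coefficient $(6+3(1-\delta)^R)(1-\delta)^R$ on $\tfrac{1}{n}\sum_i\EE[\|\tilde g_i^{(k)}\|^2]$, not $(9+3(1-\delta)^R)(1-\delta)^R$ as your display implies. Second, even with the tight coefficient $\le \tfrac{27}{4}(1-\delta)^R$, the choice $\alpha=\tfrac13$ yields a factor $\tfrac{27}{4}(1+\tfrac{1}{\alpha})=27$ in front of both $\|\nabla f(x^{(k)})\|^2$ and $b^2$, exceeding the targets $20$ and $16$. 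You need $\alpha\in[\tfrac{27}{37},\tfrac{7}{9}]$ (e.g.\ $\alpha=\tfrac34$) to make all four constants fit simultaneously; your stated constraint ``$3(1-\delta)^R(1+\alpha)\le 4(1-\delta)^R$'' ignores the two additional contributions to the $\tilde g_i^{(k)}$-coefficient. The paper sidesteps this tuning altogether by using the symmetric splits $\|a+b+c+d\|^2\le 4(\cdots)$ and $\|a+b\|^2\le 2(\cdots)$ throughout, which directly produce the factors $4$, $12$, $20$, $16$ without a free parameter.
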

\begin{proof}
By Lemma \ref{lem:fcc-expo}, it holds that
\begin{align*}
    \EE[\|\er^{(k)}\|^2]&=\EE[\|\tilde{g}^{(k)}-\text{FCC}(\tilde{g}^{(k)},C,R)\|^2]\leq (1-\delta)^R\EE[\|\tilde{g}^{(k)}\|^2].
\end{align*}
Note that
\begin{align*}
    \tilde{g}^{(k)}&=\er^{(k)}+\frac{1}{n}\sum_{i=1}^n\sum_{r=0}^{R-1}c_i^{(k,r)}=\er^{(k)}+\frac{1}{n}\sum_{i=1}^n\left(\tilde{g}_i^{(k)}-\er_i^{(k+1)}\right)\\
    &=\er^{(k)}+\frac{1}{n}\sum_{i=1}^n\left(\hat{g}_i^{(k)}+\er_i^{(k)}-\er_i^{(k+1)}\right)\\
     &=\er^{(k)}+\frac{1}{n}\sum_{i=1}^n\hat{g}_i^{(k)}+\frac{1}{n}\sum_{i=1}^n\er_i^{(k)}-\frac{1}{n}\sum_{i=1}^n\er_i^{(k+1)}.
\end{align*}
Therefore, by using Young's inequality and \eqref{eqn:vjoqwenq2}, we have
\begin{align}
    \EE[\|\er^{(k)}\|^2]&\leq (1-\delta)^R\EE[\|\tilde{g}^{(k)}\|^2]\nonumber\\
    &\leq (1-\delta)^R\Biggl(4\EE[\|\er^{(k)}\|^2]+4\EE\left[\left\|\frac{1}{n}\sum_{i=1}^n\hat{g}_i^{(k)}\right\|^2\right]\nonumber\\
    &\qquad\qquad\qquad  +\frac{4}{n}\sum_{i=1}^n\EE[\|\er_i^{(k)}\|^2]+\frac{4}{n}\sum_{i=1}^n\EE[\|\er_i^{(k+1)}\|^2]\Biggl)\nonumber\\
    &\leq (1-\delta)^R\Biggl(4\EE[\|\er^{(k)}\|^2]+4\EE[\|\nabla f(x^{(k)})\|^2]\nonumber\\
    &\qquad\qquad\qquad   +\frac{4}{n}\sum_{i=1}^n\EE[\|\er_i^{(k)}\|^2]+\frac{4}{n}\sum_{i=1}^n\EE[\|\er_i^{(k+1)}\|^2]+\frac{4\sigma^2}{nR}\Biggl).\label{eqn:fsadLvjqon1}
\end{align}
For any $1\leq i\leq n$, using the similar argument, we have that 
\begin{align}
    \EE[\|\er_i^{(k+1)}\|^2]=&\EE[\|\tilde{g}^{(k)}-\text{FCC}(\tilde{g}_i^{(k)},C_i,R)\|^2]\leq (1-\delta)^R\EE[\|\tilde{g}_i^{(k)}\|^2]\nonumber\\
    =& (1-\delta)^R\EE[\|\hat{g}_i^{(k)}+\er_i^{(k)}\|^2]\nonumber\\
    \leq &(1-\delta)^R\left(2\EE[\|\hat{g}_i^{(k)}\|^2]+2\EE[\|\er_i^{(k)}\|^2]\right)\nonumber
\\
    = &(1-\delta)^R\left(2\EE[\|\nabla f_i(x^{(k)})\|^2]+2\EE[\|\er_i^{(k)}\|^2]+\frac{2\sigma^2}{R}\right).\label{eqn:vjoqwnqfv}
\end{align}
Note that  for any $1\leq i\leq n$,
\begin{align}
    \EE[\|\nabla f_i(x^{(k)})\|^2]&=\EE[\|\nabla f_i(x^{(k)})-\nabla f(x^{(k)})+\nabla f(x^{(k)})\|^2]\nonumber\\
    &\leq 2\EE[\|\nabla f_i(x^{(k)})-\nabla f(x^{(k)})\|^2]+2\EE[\|\nabla f(x^{(k)})\|^2]\label{eqn:vhjoqwnfq}.
\end{align}
Taking the average of \eqref{eqn:vjoqwnqfv} over all $1\leq i\leq n$, and using \eqref{eqn:vhjoqwnfq} and Assumption \ref{asp:gd-hetero}, 
we further have that
\begin{align}
    &\frac{1}{n}\sum_{i=1}^n\EE[\|\er_i^{(k+1)}\|^2]\nonumber\\
    =& (1-\delta)^R\Biggl(\frac{4}{n}\sum_{i=1}^n\EE[\|\nabla f_i(x^{(k)})-\nabla f(x^{(k)})\|^2]+4\EE[\|\nabla f(x^{(k)})\|^2]\nonumber\\
    &\qquad\qquad +\frac{2}{n}\sum_{i=1}^n\EE[\|\er_i^{(k)}\|^2]+\frac{2\sigma^2}{R}\Biggl)\nonumber\\
    \leq &(1-\delta)^R\left(4\EE[\|\nabla f(x^{(k)})\|^2]+\frac{2}{n}\sum_{i=1}^n\EE[\|\er_i^{(k)}\|^2]+4b^2+\frac{2\sigma^2}{R}\right).\label{eqn:fsadLvjqon2}
\end{align}

With \eqref{eqn:fsadLvjqon1} and \eqref{eqn:fsadLvjqon2}, we obtain
\begin{align*}
    & \EE[\|\er^{(k+1)}\|^2]+ \frac{4}{n}\sum_{i=1}^n\EE[\|\er_i^{(k+1)}\|^2]\\
    \leq &(1-\delta)^R\Biggl(4\EE[\|\er^{(k)}\|^2]+4\EE[\|\nabla f(x^{(k)})\|^2]+\frac{4}{n}\sum_{i=1}^n\EE[\|\er_i^{(k)}\|^2]+\frac{4}{n}\sum_{i=1}^n\EE[\|\er_i^{(k+1)}\|^2]\nonumber\\
    &\qquad\qquad\quad   +\frac{4\sigma^2}{nR}+16\EE[\|\nabla f(x^{(k)})\|^2]+\frac{8}{n}\sum_{i=1}^n\EE[\|\er_i^{(k)}\|^2]+16 b^2+\frac{8\sigma^2}{R}\Biggl)\\
    \leq &(1-\delta)^R\Biggl(4\EE[\|\er^{(k)}\|^2]+ \frac{12}{n}\sum_{i=1}^n\EE[\|\er_i^{(k)}\|^2]+\frac{4}{n}\sum_{i=1}^n\EE[\|\er_i^{(k+1)}\|^2]\\
    &\qquad\qquad\quad  +20\EE[\|\nabla f(x^{(k)})\|^2]+16 b^2+\frac{12\sigma^2}{R}\Biggl).
\end{align*}
We thus have 
\begin{align*}
    & \EE[\|\er^{(k+1)}\|^2]+ \frac{4(1-(1-\delta)^R)}{n}\sum_{i=1}^n\EE[\|\er_i^{(k+1)}\|^2]\\
    \leq &(1-\delta)^R\Biggl(4\EE[\|\er^{(k)}\|^2]+ \frac{12}{n}\sum_{i=1}^n\EE[\|\er_i^{(k)}\|^2]+20\EE[\|\nabla f(x^{(k)})\|^2]+16 b^2+\frac{12\sigma^2}{R}\Biggl).
\end{align*}
Since $R$ is sufficiently large such that \begin{equation}
    (1-\delta)^R< \frac{1}{4},
\end{equation}
we have that $4(1-(1-\delta)^R)\geq 3$ and hence \begin{align*}
    \EE[\Psi^{(k+1)}]&\leq \EE[\|\er^{(k+1)}\|^2]+ \frac{4(1-(1-\delta)^R)}{n}\sum_{i=1}^n\EE[\|\er_i^{(k+1)}\|^2]\\
    &\leq 4(1-\delta)^R\left(\EE[\Psi^{(k)}]+5\EE[\|\nabla f(x^{(k)})\|^2]+4 \left(b^2+\frac{\sigma^2}{R}\right)\right).
\end{align*}
\end{proof}
With Lemma \ref{lem:vanish-error}, we easily reach its ergodic version:
\begin{lemma}[\sc Ergodic Vanishing Error]\label{lem:ergodic-error}
We let $R$ sufficiently large so that $\theta \triangleq 4(1-\delta)^R<1$.  Under Assumptions \ref{asp:gd-noise}, \ref{ass:contract}, \ref{asp:gd-hetero}, it holds for any $K\geq0$ that,
\begin{align*}
    \frac{1}{K+1}\sum_{k=0}^{K} \EE[\Psi^{(k)}]\leq \frac{5\theta}{(K+1)(1-\theta)}\sum_{k=0}^{K-1}\EE[\|\nabla f(x^{(k)})\|^2]+ \frac{4\theta}{1-\theta} \left( b^2+\frac{\sigma^2}{R}\right).
\end{align*}
\end{lemma}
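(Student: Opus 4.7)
The plan is to derive the ergodic bound by simply summing the per-step recursion from Lemma \ref{lem:vanish-error} and exploiting the fact that $\theta \triangleq 4(1-\delta)^R < 1$ so that the $\EE[\Psi^{(k)}]$ terms on the right can be absorbed by those on the left. The initialization $\er^{(0)} = \er_i^{(0)} = 0$ in Algorithm \ref{alg:mcdc} gives $\Psi^{(0)} = 0$, which is the crucial boundary condition that lets the telescoping work cleanly.

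Concretely, I would rewrite the conclusion of Lemma \ref{lem:vanish-error} in the compact form
\begin{equation*}
\EE[\Psi^{(k+1)}] \;\le\; \theta\,\EE[\Psi^{(k)}] \;+\; 5\theta\,\EE[\|\nabla f(x^{(k)})\|^2] \;+\; 4\theta\!\left(b^2+\frac{\sigma^2}{R}\right),
\end{equation*}
then sum this inequality over $k=0,1,\dots,K-1$. This yields
\begin{equation*}
\sum_{k=1}^{K}\EE[\Psi^{(k)}] \;\le\; \theta\sum_{k=0}^{K-1}\EE[\Psi^{(k)}] \;+\; 5\theta\sum_{k=0}^{K-1}\EE[\|\nabla f(x^{(k)})\|^2] \;+\; 4\theta K\!\left(b^2+\frac{\sigma^2}{R}\right).
\end{equation*}
Adding $\Psi^{(0)} = 0$ on the left gives $\sum_{k=0}^{K}\EE[\Psi^{(k)}]$ there, and using $\sum_{k=0}^{K-1}\EE[\Psi^{(k)}] \le \sum_{k=0}^{K}\EE[\Psi^{(k)}]$ on the right allows me to collect the $\Psi$-terms, producing
\begin{equation*}
(1-\theta)\sum_{k=0}^{K}\EE[\Psi^{(k)}] \;\le\; 5\theta\sum_{k=0}^{K-1}\EE[\|\nabla f(x^{(k)})\|^2] \;+\; 4\theta K\!\left(b^2+\frac{\sigma^2}{R}\right).
\end{equation*}
Dividing by $(1-\theta)(K+1)$ and using $K/(K+1) \le 1$ in the last term finishes the argument.

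There is no real obstacle here; the assumption $\theta < 1$ (guaranteed by taking $R$ large enough so that $(1-\delta)^R < 1/4$) and the zero initialization do all the work. The only thing to be careful about is the index alignment so that the right-hand sum runs from $k=0$ to $K-1$ rather than $K$, and that the $4\theta K B/[(1-\theta)(K+1)]$ term is relaxed to $4\theta B/(1-\theta)$ to match the stated bound exactly.
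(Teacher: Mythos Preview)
Your argument is correct and in fact slightly cleaner than the paper's own proof. The paper does not sum the recursion directly; instead it fully unrolls Lemma~\ref{lem:vanish-error} to obtain
\[
\EE[\Psi^{(k+1)}]\leq 5\sum_{\ell=0}^{k}\theta^{k+1-\ell}\EE[\|\nabla f(x^{(\ell)})\|^2]+4\sum_{\ell=0}^{k}\theta^{k+1-\ell}\Big(b^2+\frac{\sigma^2}{R}\Big),
\]
then averages over $k$, swaps the order of summation, and bounds the inner geometric sum by $\theta/(1-\theta)$. Your one-shot summation with the absorption step $(1-\theta)\sum_{k=0}^{K}\EE[\Psi^{(k)}]\le\cdots$ reaches the identical bound with less bookkeeping; the only ingredients you need beyond Lemma~\ref{lem:vanish-error} are $\Psi^{(0)}=0$ and $\Psi^{(K)}\ge 0$, both immediate. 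The paper's unrolling buys a pointwise bound on each $\EE[\Psi^{(k+1)}]$ before averaging, which is not needed here, so your route is the more economical of the two.
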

\begin{proof}
Let $\theta =4(1-\delta)^R$, then by Lemma \ref{lem:vanish-error} and noting $\Psi^{(0)}=0$, we have 
\begin{align}
    \EE[\Psi^{(k+1)}]\leq& \theta \left(\EE[\Psi^{(k)}]+5 \EE[\|\nabla f(x^{(k)})\|^2]+4\left( b^2+\frac{\sigma^2}{R}\right)\right)\nonumber\\
    \leq &\theta \Biggl(\theta \left(\EE[\Psi^{(k-1)}]+5 \EE[\|\nabla f(x^{(k-1)})\|^2]+4\left( b^2+\frac{\sigma^2}{R}\right)\right)\nonumber\\
    &\quad +5 \EE[\|\nabla f(x^{(k)})\|^2]+4\left( b^2+\frac{\sigma^2}{R}\right)\Biggl)\nonumber\\
    = &\theta^2 \EE[\Psi^{(k-1)}]+5\sum_{\ell=k-1}^k \theta^{k+1-\ell}  \EE[\|\nabla f(x^{(\ell)})\|^2]+4 \sum_{\ell=k-1}^k \theta^{k+1-\ell}\left( b^2+\frac{\sigma^2}{R}\right)\nonumber\\
    \leq &\cdots\nonumber\\
    \leq &\theta^{k+1} \EE[\Psi^{(0)}]+5\sum_{\ell=0}^k \theta^{k+1-\ell}  \EE[\|\nabla f(x^{(\ell)})\|^2]+4 \sum_{\ell=0}^k \theta^{k+1-\ell}\left( b^2+\frac{\sigma^2}{R}\right)\nonumber\\
    =&5\sum_{\ell=0}^k \theta^{k+1-\ell}  \EE[\|\nabla f(x^{(\ell)})\|^2]+4 \sum_{\ell=0}^k \theta^{k+1-\ell}\left( b^2+\frac{\sigma^2}{R}\right)\label{eqn:vj0ownof1}.
\end{align}
Therefore, by taking the average of \eqref{eqn:vj0ownof1} over $k=0,\dots,K-1$ and using $\Psi^{(0)}=0$, we further have
\begin{align*}
    &\frac{1}{K+1}\sum_{k=0}^{K} \EE[\Psi^{(k)}]=\frac{1}{K+1}\sum_{k=0}^{K-1} \EE[\Psi^{(k+1)}]\\
    \leq& \frac{1}{K+1}\sum_{k=0}^{K-1}\left(5\sum_{\ell=0}^k \theta^{k+1-\ell}  \EE[\|\nabla f(x^{(\ell)})\|^2]+4 \sum_{\ell=0}^k \theta^{k+1-\ell}\left( b^2+\frac{\sigma^2}{R}\right)\right)\\
    =& \frac{1}{K+1}\left(5\sum_{\ell=0}^{K-1}\EE[\|\nabla f(x^{(\ell)})\|^2]\left(\sum_{k=\ell}^{K-1} \theta^{k+1-\ell}\right)  +4\left( b^2+\frac{\sigma^2}{R}\right)\sum_{\ell=0}^{K-1} \sum_{k=\ell}^{K-1} \theta^{k+1-\ell}\right),
\end{align*}
where we change the summation order of indexes $k$ and $\ell$ in the last identity. Since $\sum_{k=\ell}^{K-1} \theta^{k+1-\ell}=\frac{\theta(1-\theta^{K-\ell})}{1-\theta}\leq \frac{\theta}{1-\theta}$, we thus have
\begin{align*}
    &\frac{1}{K+1}\sum_{k=0}^{K} \EE[\Psi^{(k)}]\\
    \leq &\frac{1}{K+1}\left(5\sum_{\ell=0}^{K-1}\EE[\|\nabla f(x^{(\ell)})\|^2]\frac{\theta}{1-\theta}  +4\left( b^2+\frac{\sigma^2}{R}\right)\frac{K\theta}{1-\theta} \right)\\
    \leq &\frac{1}{K+1}\frac{5\theta}{1-\theta}\sum_{\ell=0}^{K-1}\EE[\|\nabla f(x^{(\ell)})\|^2]+ \frac{4\theta}{1-\theta} \left( b^2+\frac{\sigma^2}{R}\right).
\end{align*}
\end{proof}

Given the above lemmas, now we prove the convergence rate of NEOLITHIC.
\begin{theorem}
Let the communication round be $R=\left\lceil{\max\left\{\ln\left({\delta T\max\{b^2,\sigma^2\delta \}}/{\Delta L}\right),\ln(8)\right\}}/{\delta}\right\rceil$ and learning rate as in \eqref{eqn:learning-rate}.  Under Assumptions \ref{asp:smooth}, \ref{asp:gd-noise}, \ref{ass:contract}, \ref{asp:gd-hetero}, it holds that for any $K\geq 0$, 
\begin{align*}
    \frac{1}{K+1}\sum_{k=0}^K \EE[\|\nabla f(x^{(k)})\|^2] =  \tilde{O}\Big(\Big(\frac{\Delta L \sigma^2}{nT}\Big)^\frac{1}{2}+
\frac{\Delta L}{\delta T}\Big),
\end{align*}
where $T=KR$ is the total number of gradient queries (or rounds of compressed communications) on each worker.
\end{theorem}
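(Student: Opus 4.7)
The plan is to combine Lemma \ref{lem:descent} with the ergodic error bound in Lemma \ref{lem:ergodic-error} and then tune $\gamma$. First I would telescope \eqref{eqn:desecnt} from $k=0$ to $K$; since $\Omega^{(0)}=0$ we have $y^{(0)}=x^{(0)}$, and $\EE[f(y^{(K+1)})]\ge \inf f\ge f(x^{(0)})-\Delta$ by Assumption \ref{asp:smooth}, so
\begin{align*}
\frac{\gamma}{4}\sum_{k=0}^{K}\EE[\|\nabla f(x^{(k)})\|^{2}] \le \Delta + 2\gamma^{3}L^{2}\sum_{k=0}^{K}\EE[\|\Omega^{(k)}\|^{2}] + \frac{\gamma^{2}L\sigma^{2}(K+1)}{2nR}.
\end{align*}
Jensen's inequality gives $\|\Omega^{(k)}\|^{2}\le 2\|\er^{(k)}\|^{2}+\tfrac{2}{n}\sum_{i}\|\er_{i}^{(k)}\|^{2}\le 2\Psi^{(k)}$, so Lemma \ref{lem:ergodic-error} produces an upper bound on $\sum_{k}\EE[\|\Omega^{(k)}\|^{2}]$ that is affine in $\sum_{k}\EE[\|\nabla f(x^{(k)})\|^{2}]$ and in $b^{2}+\sigma^{2}/R$.

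Next I would exploit the prescribed $R=\lceil \max\{\ln(\delta T \max\{b^{2},\sigma^{2}\delta\}/(\Delta L)),\ln 8\}/\delta\rceil$. The $\ln 8$ branch guarantees $\theta:=4(1-\delta)^{R}\le 1/2$, so the coefficient multiplying $\sum_{k}\EE[\|\nabla f(x^{(k)})\|^{2}]$ after substitution is of order $\gamma^{3}L^{2}$, which can be absorbed into the $\tfrac{\gamma}{4}\sum$ on the left as long as $\gamma\le c/L$ for a small absolute constant $c$. The other branch of $R$ drives $\theta\lesssim \Delta L/(\delta T\max\{b^{2},\sigma^{2}\delta\})$, so the residual term $\gamma^{3}L^{2}(K+1)\theta(b^{2}+\sigma^{2}/R)$ contributes at most a quantity of order $\gamma^{3}L^{3}\Delta/(\delta R)$, which is dominated by $\Delta$ on the right. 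After these manipulations and division by $\tfrac{\gamma}{4}(K+1)$ we are left with
\begin{align*}
\frac{1}{K+1}\sum_{k=0}^{K}\EE[\|\nabla f(x^{(k)})\|^{2}] \ =\ \tilde O\!\left(\frac{\Delta}{\gamma(K+1)}+\frac{\gamma L\sigma^{2}}{nR}\right).
\end{align*}

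Finally I would tune $\gamma=\min\{c/L,\ \sqrt{n\Delta R/(L\sigma^{2}(K+1))}\}$; the first argument enforces the absorption condition above, and the second balances the two remaining terms. Substituting $T=KR$ yields $\tilde O(\sqrt{\Delta L\sigma^{2}/(nT)})$ from the noise term, while the deterministic term is $\tilde\Theta(\Delta L/(\delta T))$ because the prescribed $R$ equals $\tilde\Theta(1/\delta)$, matching the claim. The main technical obstacle lies in the second step: one must show that a \emph{single} choice of $R$ simultaneously (i) shrinks $\theta$ enough so the gradient-norm feedback $\theta\sum\EE[\|\nabla f(x^{(k)})\|^{2}]$ can be absorbed, and (ii) neutralizes the heterogeneity-driven contribution $\theta(K+1)b^{2}$, so that no polynomial dependence on $b^{2}$ survives in the final rate. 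Once this coupled bookkeeping is handled, the learning-rate optimization is standard.
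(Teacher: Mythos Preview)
Your proposal is correct and tracks the paper's argument closely: telescope Lemma~\ref{lem:descent}, bound $\|\Omega^{(k)}\|^{2}$ by a multiple of $\Psi^{(k)}$, apply Lemma~\ref{lem:ergodic-error}, absorb the gradient-feedback term under a constraint $\gamma=O(1/L)$, and balance. The one organizational difference is the order in which the choice of $R$ is exploited. You invoke the bound $\theta\le\min\{4\Delta L/(\delta T\max\{b^{2},\sigma^{2}\delta\}),\tfrac12\}$ \emph{before} tuning $\gamma$, so the $\theta(b^{2}+\sigma^{2}/R)$ contribution collapses to $\tilde O(\gamma^{3}L^{3}\Delta)$ and is absorbed into the $\Delta$ budget, leaving a clean two-term trade-off in $\gamma$. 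The paper instead carries the $\theta(b^{2}+\sigma^{2}/R)$ term through a three-way optimization of $\gamma$---this is what produces the explicit learning rate \eqref{eqn:learning-rate} with its cube-root piece---obtaining an intermediate three-term bound \eqref{eqn:xbsd87}, and only afterward applies the $\theta$ estimate to show the middle $\theta^{1/3}$-term is $\tilde O(\Delta L/(\delta T))$. Your ordering is slightly more direct; the paper's yields the specific $\gamma$ referenced in the statement. Since the theorem fixes $\gamma$ as in \eqref{eqn:learning-rate}, you should add the observation that once $\theta\le\tfrac12$ the cube-root term in that denominator is $\tilde O(L)$, so the prescribed $\gamma$ is of the same order as your $\min\{c/L,\sqrt{n\Delta R/(L\sigma^{2}(K+1))}\}$ and your estimates go through unchanged.
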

\begin{proof}
Averaging \eqref{eqn:desecnt} over $k=0,\dots,K$, and using the fact that $y^{(0)}=x^{(0)}$ and $f(y^{(K+1)})\geq f^\star$, we have
\begin{align}
    &\frac{1}{K+1}\sum_{k=0}^K\EE[\|\nabla f(x^{(k)})\|^2]\nonumber\\
    \leq&\frac{ 4\EE[f(y^{(0)})]-4\EE[f(y^{(K+1)})]}{\gamma(K+1)}+\frac{8\gamma^2 L^2}{K+1}\sum_{k=0}^K \EE[\|\Omega^{(k)}\|^2] +\frac{2\gamma L \sigma^2}{nR}\nonumber\\
    \leq &\frac{4(f(x^{(0)})-f^\star)}{\gamma(K+1)}+\frac{8\gamma^2 L^2}{K+1}\sum_{k=0}^K \EE[\|\Omega^{(k)}\|^2] +\frac{2\gamma L \sigma^2}{nR}.\label{eqn:jvoqwenw21}
\end{align}
By the definition of $\Omega^{(k)}$ and using the Young's inequality, it holds that
\begin{align*}
    \EE[\|\Omega^{(k)}\|^2]=&\EE\left[\left\|\er^{(k)}+\frac{1}{n}\sum_{i=1}^n\er_i^{(k)}\right\|^2\right]\\
    \leq&\left(1+\frac{1}{3}\right)\EE[\|\er^{(k)}\|^2]+(1+3)\EE\left[\left\|\frac{1}{n}\sum_{i=1}^n\er_i^{(k)}\right\|^2\right]\\
    \leq&\frac{4}{3}\EE[\|\er^{(k)}\|^2]+\frac{4}{n}\sum_{i=1}^n\EE[\|\er_i^{(k)}\|^2]=\frac{4}{3}\EE[\Psi^{(k)}].
\end{align*}
Therefore, by Lemma \ref{lem:ergodic-error}, we reach
\begin{align}
    &\frac{1}{K+1}\sum_{k=0}^K \EE[\|\Omega^{(k)}\|^2]\leq \frac{4}{3(K+1)}\sum_{k=0}^K \EE[\|\Psi^{(k)}\|^2]\nonumber\\
    \leq& \frac{20\theta}{3(K+1)(1-\theta)}\sum_{k=0}^{K-1}\EE[\|\nabla f(x^{(k)})\|^2]+ \frac{16\theta}{3(1-\theta)} \left( b^2+\frac{\sigma^2}{R}\right).\label{eqn:bjoeqnf}
\end{align}
Plugging \eqref{eqn:bjoeqnf} into \eqref{eqn:jvoqwenw21}, we reach that 
\begin{align}
    &\frac{1}{K+1}\sum_{k=0}^K\EE[\|\nabla f(x^{(k)})\|^2]\nonumber\\
    \leq &\frac{4(f(x^{(0)})-f^\star)}{\gamma(K+1)} +
    \frac{160\gamma^2L^2\theta}{3(1-\theta)(K+1)}\sum_{k=0}^{K-1}\EE[\|\nabla f(x^{(k)})\|^2]
    \nonumber\\
    &\quad + \frac{128\gamma^2L^2 \theta}{3(1-\theta)} \left( b^2+\frac{\sigma^2}{R}\right)+\frac{2\gamma L \sigma^2}{nR}.\label{eqn:gv0nefsvbwe}
\end{align}
Assume the learning rate $\gamma$ is sufficiently small such that 
\begin{align}\label{eqn:jgowemgwe}
    \gamma\leq \frac{1}{11L}\sqrt{\frac{1-\theta}{\theta}}\quad \Longrightarrow\quad \frac{160\gamma^2L^2\theta}{3(1-\theta)}\leq \frac{1}{2}.
\end{align}
then we can further bound \eqref{eqn:gv0nefsvbwe} as 
\begin{align}
    \frac{1}{K+1}\sum_{k=0}^K\EE[\|\nabla f(x^{(k)})\|^2]\leq \frac{8\Delta}{\gamma(K+1)} +\frac{4\gamma L \sigma^2}{nR}+ \frac{256\gamma^2L^2 \theta}{3(1-\theta)} \left( b^2+\frac{\sigma^2}{R}\right).\label{xncxn-cn-2-test}
\end{align}
where we use  $\Delta \geq f(x^{(0)})-f^\star$.
By choosing 
\begin{equation}\label{eqn:learning-rate}
    \gamma = \frac{1}{11L+\sigma\left(\frac{(K+1)L}{2nR\Delta}\right)^\frac{1}{2}+\left(\frac{32(K+1)L^2\theta (b^2+\sigma^2/R) }{3(1-\theta)\Delta}\right)^\frac{1}{3}},
\end{equation}
we have 
\begin{align}
&\frac{1}{K+1}\sum_{k=0}^K \EE[\|\nabla f(x^{(k)})\|^2] \nonumber\\
\le& \frac{16\sigma  \sqrt{L\Delta}}{\sqrt{nR(K+1)}}+\frac{36\Delta^\frac{2}{3}L^\frac{2}{3}\theta^\frac{1}{3}(b^2+\sigma^2/R)^\frac{1}{3}}{(K+1)^\frac{2}{3}(1-\theta)^\frac{1}{3}}+\frac{88L\Delta}{K+1}\nonumber\\
=&O\Big(\Big(\frac{\Delta L \sigma^2}{nT}\Big)^\frac{1}{2}+\frac{\theta^{\frac{1}{3}}\Delta^{\frac{2}{3}}L^\frac{2}{3}\max\{b^{\frac{2}{3}},\sigma^{\frac{2}{3}}/R^\frac{1}{3}\}}{(1-\theta)^{\frac{1}{3}}K^{\frac{2}{3}}} +\frac{\Delta L}{K}\Big).\label{eqn:xbsd87}
\end{align}
Let 
\begin{align}\label{eqn:R-1}
    R =\Bigg\lceil \frac{\max\{\ln\left(\frac{\delta T\max\{b^2,\sigma^2\delta \}}{\Delta L}\right),\ln(8)\}}{\delta}\Bigg\rceil=\tilde{O}\left(\frac{1}{\delta}\right),
\end{align}
then it holds that 
\begin{align}
    \theta=&4(1-\delta)^R\leq 4e^{-\delta R}\nonumber\\
    \leq& 4\exp\left(-{\max\{\ln({\delta T\max\{b^2,\sigma^2\delta \}}/{\Delta L}),\ln(8)\}}\right)=\min\left\{\frac{4\Delta L}{\delta T\max\{b^2,\sigma^2\delta \}},\frac{1}{2}\right\},\label{eqn:R-2}
\end{align}
and hence $1-\theta = \Omega(1)$ and $\gamma$ satisfies \eqref{eqn:jgowemgwe}.
Plugging \eqref{eqn:R-1} and \eqref{eqn:R-2} into \eqref{eqn:xbsd87}, and applying the notation $T=KR$, we reach
\begin{align}
\frac{1}{K+1}\sum_{k=0}^K \EE[\|\nabla f(x^{(k)})\|^2] = &O\Big( \Big(\frac{\Delta L \sigma^2}{nT}\Big)^\frac{1}{2}+ 
\frac{\theta^{\frac{1}{3}}R^\frac{2}{3}\Delta^{\frac{2}{3}}L^\frac{2}{3}\max\{b^{\frac{2}{3}},\sigma^{\frac{2}{3}}/R^\frac{1}{3}\}}{T^{\frac{2}{3}}} + \frac{R\Delta L}{T}\Big)\nonumber\\
=&\tilde{O}\Big( \Big(\frac{\Delta L \sigma^2}{nT}\Big)^\frac{1}{2} +
\frac{\theta^{\frac{1}{3}}\Delta^{\frac{2}{3}}L^\frac{2}{3}\max\{b^{\frac{2}{3}},\sigma^{\frac{2}{3}}\delta^\frac{1}{3}\}}{\delta^{\frac{2}{3}}T^{\frac{2}{3}}} + \frac{\Delta L}{\delta T}\Big)\nonumber\\
=& \tilde{O}\Big(\Big(\frac{\Delta L \sigma^2}{nT}\Big)^\frac{1}{2}+ 
\frac{\Delta L}{\delta T}\Big)\nonumber
\end{align}
where the last inequality follows that $\theta^{\frac{1}{3}}\Delta^{\frac{2}{3}}L^\frac{2}{3}\max\{b^{\frac{2}{3}},\sigma^{\frac{2}{3}}\delta^\frac{1}{3}\}=O(\Delta L/\delta^{\frac{1}{3}}T^{\frac{1}{3}})$ because of \eqref{eqn:R-2}.
\end{proof}

\section{More Details of Table \ref{tab:introduction-comparison}}\label{app:more-details}
There exist mismatches between some rates listed in Table \ref{tab:introduction-comparison} and those established in literature. The mismatches exist because
\begin{enumerate}
    \item we have strengthened the vanilla rates by relaxing their restrictive assumptions (say, Double-Squeeze) or uncovering the hidden terms (say, CSER);
    \item we have extended the vanilla rates to the same setting as NEOLITHIC (say, extend MEM-SGD to non-convex and smooth setting, or transform QSGD to the distributed setting).
\end{enumerate}
With these modifications, these baseline algorithms can be compared with NEOLITHIC in a fair manner. Next we clarify each modification one by one.

\subsection{Q-SGD}\label{app:more-details-asgd}
There is a slight inconsistency between the original rate in \cite{Jiang2018ALS} and   the one listed Table \ref{tab:introduction-comparison}
due to the following reasons:

\begin{enumerate}
    \item We noticed that the rate stated in \cite[Corollary 3]{Jiang2018ALS} is not optimal following \cite[Theorem 2]{Jiang2018ALS} since the authors set the learning rate as $\theta \sqrt{M/T}$ where $\theta$ is a universal constant. The rate in \cite[Corollary 3]{Jiang2018ALS} can be slightly improved in terms of $\sigma^2$,  $b^2$, $\Delta = f(x^{(0)}) -\min_x f(x)$  by involving them into the learning rate. We manually optimize the learning rate by choosing  $\Theta((L+(T(1+\omega)L\sigma^2/\Delta M)^{1/2}+(\omega LTb^2/n\Delta)^{1/2})^{-1})$.
    \item $M$ is the total mini-batch size on all workers in Q-SGD (see the paragraph of contribution, page 2, \cite{Jiang2018ALS}).  For a fair comparison with other methods in Table \ref{tab:introduction-comparison}, We set $M=n$ to make the number of total gradient queries per iteration equivalent in all algorithms.
\end{enumerate}

\subsection{CSER}
While \cite[Corollary 1]{Xie2020CSERCS} does not have a $O(1/T)$ term in the convergence rate, it does impose another condition $T\gg n$ to the convergence statement. If $T\gg n$ holds, then $1/\sqrt{nT}\gg 1/T$ and hence the $1/T$ term is dominated by $1/\sqrt{nT}$ and thus hidden in the  notation $O(\cdot)$. However, the condition  does not appear in the convergence theorem of NEOLITHIC. To conduct a fair comparison, we have to remove condition $T\gg n$ from its convergence theorem, which thus incurs additional $O(1/T)$ term in the rate accordingly. The rate we provided in Table \ref{tab:introduction-comparison} is hence more precise than that in \cite{Xie2020CSERCS}.


\subsection{Double-Squeeze}

The original rate of Double-Squeeze established in \cite{Tang2019DoubleSqueezePS} is based on an unrealistic assumption that accumulated compression errors are bounded by unknown $\epsilon$ (see \cite[Assumption 1.3]{Tang2019DoubleSqueezePS}). This makes its rate incomparable with other methods. However, we can remove the unrealistic assumption and easily derive a comparable bound where $\epsilon$ can be explicitly replaced with $O(G/\delta^2)$. We plug $O(G/\delta^2)$ into \cite[Corollary 2]{Tang2019DoubleSqueezePS} to get the rate listed in our Table \ref{tab:introduction-comparison}. 

To this end, we follow the notations of \cite{Tang2019DoubleSqueezePS} to derive explicit upper bounds for server/worker compression errors $\EE[\|\bsdelta_t\|]$ and $\EE[\|\bsdelta_t^{(i)}\|]$ which, combined with \cite[Corollary 2]{Tang2019DoubleSqueezePS}, leads to the rate in our Table \ref{tab:introduction-comparison}.
We consider compressors $Q$ (in server) and $Q_i$ (in worker $i$) utilized are $\delta$-contractive. We use  $\bsg_t^{(i)}$ to indicate local (stochastic) gradient. 

\textbf{Bound of local compression error: $\EE[\|\bsdelta_t^{(i)}\|]=O(G/\delta)$.}
By $\delta$-contraction and Young's inequality, we have for any $\rho>0$ that
\begin{align}
\mathbb{E}[\Vert\boldsymbol{\delta}_t^{(i)}\Vert^2]=&\mathbb{E}[\Vert\boldsymbol{v}_t^{(i)}-Q_i(\bsv_t^{(i)})\Vert^2]\nonumber\\
\leq &(1-\delta)\mathbb{E}[\Vert \bsv_t^{(i)}\Vert^2]=(1-\delta)\mathbb{E}[\Vert \bsg_t^{(i)}+\boldsymbol{\delta}_{t-1}^{(i)}\Vert^2]\nonumber\\
\leq& (1+\rho)(1-\delta)\mathbb{E}[\Vert \bsg_t^{(i)}\Vert^2]+(1+1/\rho)(1-\delta)\mathbb{E}[\Vert \boldsymbol{\delta}_{t-1}^{(i)}\Vert^2]\label{eqn:veofwq}
\end{align}
Iterating \eqref{eqn:veofwq} for $t, t-1, \dots,0$ and noting $\bsdelta_0^{(i)}=0$, we reach
\begin{align}
   \mathbb{E}[\Vert\boldsymbol{\delta}_t^{(i)}\Vert^2]\leq & (1+\rho)(1-\delta)\sum_{s=1}^t  (1+1/\rho)^{t-s}(1-\delta)^{t-s}\mathbb{E}[\Vert \bsg_s^{(i)}\Vert^2]\nonumber\\
\leq& \frac{(1+\rho)(1-\delta) G^2}{1-(1+1/\rho)(1-\delta)}\label{eqn:veofwq2}
\end{align}
where the last inequality holds because $\mathbb{E}[\Vert \bsg_s^{(i)}\Vert^2]\leq G^2$ for all $1\leq s\leq t$.  Here one must choose $\rho =\Omega(1/\delta)$ to avoid the explosion of the upper bound, which leads to $\mathbb{E}[\Vert\bsdelta_t^{(i)}\Vert^2]=O({G^2/\delta^2})$.
Therefore, by Jessen's inequality, we have $\EE[\|\bsdelta_t^{(i)}\|]\leq \sqrt{\EE[\|\bsdelta_t^{(i)}\|^2]}=O(G/\delta)$.

\textbf{Bound of global compression error: $\EE[\|\bsdelta_t\|]=O(G/\delta^2)$.}
By $\delta$-contraction and Young's inequality, we have for any $\rho>0$ that
\begin{align}
\mathbb{E}[\Vert\boldsymbol{\delta}_t\Vert^2]=&\mathbb{E}[\Vert\boldsymbol{v}_t-Q(\bsv_t)\Vert^2]\nonumber\\
\leq &(1-\delta)\mathbb{E}[\Vert \bsv_t\Vert^2]=(1-\delta)\mathbb{E}\left[\left\Vert \frac{1}{n}\sum_{i=1}^n Q_i(\bsv_t^{(i)})+\boldsymbol{\delta}_{t-1}\right\Vert^2\right]\nonumber\\
\leq &(1-\delta)(1+\rho)\mathbb{E}\left[\left\| \frac{1}{n}\sum_{i=1}^n Q_i(\bsv_t^{(i)})\right\Vert^2\right]+(1+1/\rho)(1-\delta)\EE[\|\boldsymbol{\delta}_{t-1}\Vert^2].\label{eqn:vmoedmsd}
\end{align}
Again by Young's inequality and $\delta$-contraction, we have
\begin{align}
    \mathbb{E}\left[\left\| \frac{1}{n}\sum_{i=1}^n Q_i(\bsv_t^{(i)})\right\Vert^2\right]\leq& 2\mathbb{E}\left[\left\Vert \frac{1}{n}\sum_{i=1}^n Q_i(\bsv_t^{(i)})-\bsv_t^{(i)}\right\Vert^2\right] +2\mathbb{E}\left[\left\Vert \frac{1}{n}\sum_{i=1}^n \bsv_t^{(i)}\right\Vert^2\right]\nonumber\\
    \leq& \frac{2-\delta}{n}\sum_{i=1}^n\EE[\|\bsv_t^{(i)}\|^2]=O(G^2/\delta^2)\label{eqn:vjoingbdf}
\end{align}
where the last identity is because the upper bound of $\EE[\|\bsv_t^{(i)}\|^2]$ can be obtained by following the derivation in \eqref{eqn:veofwq} and \eqref{eqn:veofwq2}.
Taking $\rho ={2(1-\delta)}/{\delta}=O(1/\delta)$ in \eqref{eqn:vmoedmsd} and using \eqref{eqn:vjoingbdf}, we reach an inequality taking a form like 
\begin{equation}\label{eqn:vjeogewq}
    \mathbb{E}[\Vert\boldsymbol{\delta}_t\Vert^2]\leq (1-\delta/2)\EE[\|\boldsymbol{\delta}_{t-1}\Vert^2]+O(G^2/\delta^3).
\end{equation}
Iterating \eqref{eqn:vjeogewq} similarly, we easily reach $\mathbb{E}[\Vert\boldsymbol{\delta}_t\Vert^2]=O(G^2/\delta^4)$ and thus $\mathbb{E}[\Vert\boldsymbol{\delta}_t\Vert]=O(G/\delta^2)$.

\subsection{MEM-SGD}
We notice that only the rate for strongly convex problems is established in the original MEM-SGD paper \cite{Stich2018SparsifiedSW}. 
To compare it fairly with NEOLITHIC, we derive its convergence rate in the non-convex setting by ourselves. 

The main recursion of MEM-SGD is 
$$\bp_t^i=\eta \nabla f_i(\bx_t,\xi_t^i)+\be_t^i, \quad \bx_{t+1}=\bx_t-\frac{1}{n}\sum_{i=1}^n Q_i(\bp_t^i),\quad \be_{t+1}^i=\bp_t^i-Q_i(\bp_t^i).$$ 
The key steps of our derivation are listed as follows.

\begin{enumerate}
    \item Following the similar argument to \eqref{eqn:veofwq} and \eqref{eqn:veofwq2}, we can bound the compression error as $\mathbb{E}[\Vert\be_t^i\Vert^2]=O(\eta^2G^2/\delta^2)$. Note that here $\eta^2$ appears since compression is conducted after multiplying the learning rate $\eta$. 
    \item The recursion formula of MEM-SGD is $\by_{t+1}= \by_t- \frac{\eta}{n}\sum_{i=1}^n \nabla f_i(\bx_t,\xi_t^i)$ with $\by_t\triangleq\bx_t-\frac{1}{n}\sum_{i=1}^n\be_t^i$. In fact, one can easily check that 
    \begin{align}
        \bx_{t+1}=&\bx_t-\frac{1}{n}\sum_{i=1}^n Q_i(\bp_t^i)\nonumber\\
        =&\bx_t-\frac{1}{n}\sum_{i=1}^n (\bp_t^i-\be_{t+1}^i)=\bx_t-\frac{1}{n}\sum_{i=1}^n (\eta \nabla f_i(\bx_t,\xi_t^i)+\be_t^i-\be_{t+1}^i)\nonumber\\
        =&\by_t -\frac{\eta}{n}\sum_{i=1}^n \nabla f_i(\bx_t,\xi_t^i)+\frac{1}{n}\sum_{i=1}^n \be_{t+1}^i.\nonumber
    \end{align}
    \item Following the derivation of \eqref{eqn:gjpo1ofq2}, one can obtain
\begin{align}
    \EE[f(\by_{t+1})]-\EE[f(\by_{t})]\leq 2\eta L^2\EE[\|\by_t-\bx_t\|^2]-\frac{\eta(1-\eta L)}{2}\EE[\|\nabla f(\bx_t)\|^2]+\frac{\eta^2 L\sigma^2}{2n}.\label{eqn:nefwqfq}
\end{align}
Setting $\eta \leq \frac{1}{2L}$ such that $\frac{\eta(1-\eta L)}{2}\geq \frac{\eta}{4}$ and rearranging \eqref{eqn:nefwqfq}, we have 
\begin{align}\label{eqn:veofwqdv}
    \EE[\|\nabla f(\bx_t)\|^2]\leq& \frac{4(\EE[f(\by_{t})]-\EE[f(\by_{t+1})])}{\eta}+\frac{2\eta L\sigma^2}{n}+8L^2\EE[\|\by_t-\bx_t\|^2].
\end{align}
\item By the definition of $\by_t$ and step 1., we have
\begin{align}\label{eqn:oegnfgds}
    \EE[\|\by_t-\bx_t\|^2]\leq \frac{1}{n}\sum_{i=1}^n\EE[\|\be_t^i\|^2]=O(\eta^2 G^2/\delta^2).
\end{align}
Averaging \eqref{eqn:veofwqdv} with \eqref{eqn:oegnfgds} plugged into, we reach
\begin{align}\label{eqn:veofwqdvfds}
    \frac{1}{T}\sum_{t=0}^{T-1}\EE[\|\nabla f(\by_t)\|^2]\leq&O\left( \frac{\EE[f(\bx_{0})]-f^\star}{\eta}+\frac{\eta L\sigma^2}{n}+\frac{\eta^2 L^2G^2}{\delta^2}\right).
\end{align}
Setting the learning rate $\eta=(2L+(\frac{L\sigma^2}{n \Delta})^{1/2}+(\frac{L^2 G^2}{\delta^2\Delta})^{1/3})^{-1}$ in \eqref{eqn:veofwqdvfds} leads to the rate we listed in Table \ref{tab:introduction-comparison}.
\end{enumerate}

\subsection{Comparison with More Algorithms}
We supplement the comparison between Q-SGD \cite{Jiang2018ALS}, VR-MARINA \cite{Gorbunov2021MARINAFN}, and DASHA-MVR \cite{Tyurin2022DASHADN} in Table \ref{tab:more-comparison}. The results are compared in terms of the communication/query complexity to reach $\EE[\|\nabla f(x)\|^2]\leq \epsilon$ for a sufficiently small $\epsilon$.

Several additional comments are as follows:
\begin{enumerate}
    \item All three algorithms utilize unidirectional, unbiased, and independent compressors.
    \item All algorithms conduct an imbalanced number of compressed communications and of gradient queries. We therefore list the communication and gradient query complexity separately in Table \ref{tab:more-comparison}. The result of Q-SGD is slightly tuned by us, see the argument in Appendix \ref{app:more-details-asgd}.
    \item MARINA (with $O(1/\epsilon^2)$) and SASHA (with $O(1/\epsilon^3)$) have better communication complexity than QSGD (with $O(1/\epsilon^4)$) when $\epsilon$ is sufficiently small.
\end{enumerate}

\begin{table*}[t]
	\caption{\small Comparison between between Q-SGD \cite{Jiang2018ALS}, VR-MARINA \cite{Tyurin2022DASHADN}, and SASHA-MVR \cite{Gorbunov2021MARINAFN}. To explicitly clarify the influence of different compression strategies, we keep the stochastic gradient variance $\sigma^2$, data heterogeneity bound $b^2$
	, mini-batch size $B$ (used in \cite{Jiang2018ALS,Gorbunov2021MARINAFN}),   probability of conducting uncompressed communication $p$ (used in \cite{Tyurin2022DASHADN}), but omit
	smoothness constant $L$, and initialization gap $f(x^{(0)})-f^\star$ in the below results.
	}
	\footnotesize
	\centering
	\begin{tabular}{c  l l}
		\toprule
		\textbf{Algorithm} & \textbf{\#Communication} &  \textbf{\#Gradient Query per Worker} \\ \midrule
		Q-SGD \cite{Jiang2018ALS} & $O\left(\frac{1}{nB\epsilon^4}\left((1+\omega)\sigma^2+\omega b^2\right)\right)$ & $B\times$\#Communication \vspace{1mm} \\ 
		VR-MARINA \cite{Gorbunov2021MARINAFN} & $O\left(\frac{1}{\epsilon^2}\left(1+\sqrt{\frac{1-p}{pn}\left(\omega + \frac{1+\omega}{\max\{1,\frac{\sigma^2}{n\epsilon^2}\}}\right)}\right)\right)$ & $\max\{1,\frac{\sigma^2}{n\epsilon^2}\}\times$\#Communication \vspace{1mm} \\ 
		DASHA-MVR \cite{Tyurin2022DASHADN} & $O\left(\frac{1}{\epsilon^2}\left(1+\frac{\sigma}{n\epsilon B^{3/2}}+\frac{\sigma^2}{\epsilon^2B}\right)\right)$ & $B\times$\#Communication\vspace{1mm} \\ 
		\bottomrule
	\end{tabular}
	\label{tab:more-comparison}
\end{table*}

\end{document}